\def\eqref#1{equation~\ref{#1}}
\def\1{\bm{1}}
\DeclareMathAlphabet{\mathsfit}{\encodingdefault}{\sfdefault}{m}{sl}
\SetMathAlphabet{\mathsfit}{bold}{\encodingdefault}{\sfdefault}{bx}{n}
\newcommand{\E}{\mathbb{E}}
\newcommand{\Var}{\mathrm{Var}}
\newcommand{\N}{\mathcal{N}}
\newcommand{\M}{\mathcal{M}}
\def \bz {\boldsymbol{z}}
\def \bZ {\boldsymbol{Z}}
\def \bA {\boldsymbol{A}}
\let\Algorithm\algorithm
\renewcommand\algorithm[1][]{\Algorithm[#1]\setstretch{1.0}}
\renewcommand{\epsilon}{\varepsilon}
\newcommand{\bX}{\mathbf{X}} %
\DeclareMathAlphabet\mathbfcal{OMS}{cmsy}{b}{n} %
\newcommand{\indep}{\bot\!\!\!\!\bot}
\theoremstyle{plain}
\newtheorem{lemma}{Lemma}
\newtheorem{corollary}{Corollary}
\newtheorem{proposition}{Proposition}
\newtheorem*{remark}{Remark}
\newtheorem{thm}{Theorem}
\newtheorem{rem}{Remark}
\newcommand\notsotiny{\@setfontsize\notsotiny\@vipt\@viipt}
\begin{document}

%

%

\twocolumn[


\aistatstitle{A/B testing under Interference with Partial Network Information}

\aistatsauthor{ Shiv Shankar \And Ritwik Sinha \And Yash Chandak \And Saayan Mitra \And Madalina Fiterau }

\aistatsaddress{ UMass \And  Adobe Research \And Stanford University \And Adobe \And UMass} ]
\begin{abstract}
 A/B tests are often required to be conducted on subjects that might have social connections.  
 For e.g., experiments on social media, or medical and social interventions to control the spread of an epidemic.  
    In such settings, the SUTVA assumption for randomized-controlled trials is violated due to network interference, or spill-over effects, as treatments to group A can potentially also affect the control group B.
    When the underlying social network is known exactly, prior works have demonstrated how to conduct A/B tests adequately to estimate the global average treatment effect (GATE).
    However, in practice, it is often impossible to obtain knowledge about the exact underlying network.
    In this paper, we present UNITE: a novel estimator that relax this assumption and can identify GATE while only relying on knowledge of the \textit{superset} of neighbors for any subject in the graph.
    Through theoretical analysis and extensive experiments, we show that the proposed approach performs better in comparison to standard estimators.
\end{abstract}
\section{INTRODUCTION}

A/B tests, a form of randomized control trials, are the gold-standard in evaluating the impact of interventions, whether it be a new policy \citep{papadogeorgou2020causal}  or a medical treatment \citep{antman1992comparison}, or experimentation in the digital world \citep{siroker2015b}. A/B tests allow estimation of the treatment effect by ensuring that treatment (group A) and control (group B) assignments are made independently of other variables, including potentially unknown ones. The outcomes from the two groups are compared to determine the effect of treatment on a desired metric, such as health, clicks, or revenue. 

However, in many scenarios, several assumptions required for the A/B testing protocol are violated. Particularly, in many large-scale experiments, there might exist an underlying social network through which the exposures to the treatment group might also affect the subjects in the control group. Consider the example of social network choosing a new algorithm to deploy for user content recommendation. Evaluating the effectiveness of a single algorithm can be challenging as any content (e.g., music, news, travel suggestions) recommended to a user in the treatment group might get shared with, and thus affect, another user in the control group if the two users have a social connection \citep{brennan2022cluster,PougetAbadieSaveskiSaintJacquesDuanXuGhoshAiroldi17},
\citep{wong2020computational,kusner2016private}. If the social network wants to evaluate the effectiveness of shifting to the newer algorithm, it wants to evaluate the GATE effect (where all units are provided the new treatment). However, if one ignores the effect of the these interactions, we might underestimate the effect of a new algorithm, as users getting recommendations from the older algorithm are still getting exposed to the newer algorithm indirectly. Other situations where this problem can manifest itself are:

\textbf{(A) Privacy protection: } To protect user's privacy, digital services may not be able to use cookies or other trackers for user identification  \citep{shankar22cookie}. This can be problematic as users often access a digital service through multiple devices.  For instance, if a device is assigned to be in the treatment group and another device in the control group, then the outcomes on those devices may not be independent if those devices are used by a common user.

\textbf{(B) Epidemic control: } Herd immunity is a phenomenon, whereby the virtue of a sufficiently large vaccinated/treated group the entire community gains immunity against a disease. Thus, the outcomes of the treated might be similar to the outcomes of the control, thereby leading to bias in the treatment effect estimate, if the community effect is not adequately accounted for \citep{randolph2020herd,fine1993herd}.


This phenomenon of treatment to one unit affecting outcomes for other units is known as interference \citep{hudgens_halloran08,lesage2009introduction}. While interference-aware methods for treatment effect estimation exist, they often require knowledge of the interference structure \citep{ogburn2017causal,leung2020treatment}. 
However, in practice, it can be challenging or impossible to obtain the exact network structure for post-experimentation analysis using these interference-aware methods as well. Thus the scenarios mentioned above introduce a new challenge, necessitating the estimation of treatment effects from an \textit{uncertain} or a \textit{partial view} of interference structure.

\paragraph{Contributions.}

We focus on the challenge of treatment effect estimation in the presence of interference. We consider the setting where there is an interaction graph among subjects, and interference emanates from `neighboring' subjects. Compared to prior work, our focus is on scenarios where the exact  neighborhood information is unavailable. 

Our key contribution is to develop a method that identifies the Global Average Treatment Effect (GATE) using access to only the knowledge of a \textit{superset of these interfering units}, which is relatively easier to obtain in practice (e.g., using geographic locality). Furthermore our estimator's variance matches the mini-max optimal lower bound on MSE \citep{UganderKarrerBackstromKleinberg13,YuAiroldiBorgsChayes22} suggesting the theoretical efficiency of our estimate. Our paper not only establishes the theoretical validity of our estimator but also substantiates its practical efficacy through extensive experiments on both simulated and real-world datasets.


\section{RELATED WORK}
%
%
Network interference is a well studied topic in causal inference literature, with a variety of methods proposed for the problem. These often use varied set of assumptions about the exposure function \citep{AronowSamii17, auerbach2021local, li2021causal, viviano2020experimental} and interference neighborhoods \citep{bargagli2020heterogeneous, pmlr-v115-bhattacharya20a, SussmanAiroldi17, UganderKarrerBackstromKleinberg13}. 
Similar to these proposals, our approach also relies on imposing some structure to the form of network interference. A common assumption is that the network effects is linear with respect to a known functional of the neighbour treatments\citep{BasseAiroldi15,cai2015social,chin2019regression,GuiXuBhasinHan15,parker2016optimal,ToulisKao13}. 

Another common assumption is an exposure represented as the (weighted) proportion of either the neighboring units that have received treatment \citep{EcklesKarrerUgander17,ToulisKao13}, or the count of neighboring units that have undergone treatment \citep{UganderKarrerBackstromKleinberg13}. 
Frequently, a normalized exposure assumption is also applied, ensuring that for all units the net exposure  is in $[-1,1]$, where the extremes of $-1/1$ represent the unit being in control/ treatment group respectively \citep{leung2016treatment}. Such settings within the context of bipartite interference have been explored by researchers such as \citet{PougetAbadieSaveskiSaintJacquesDuanXuGhoshAiroldi17}, \citet{pouget2019variance}, and \citet{brennan2022cluster}. 
Furthermore, when the exposure model is known, various cluster randomized designs have been proposed for variance reduction in the estimate \citep{EcklesKarrerUgander17, GuiXuBhasinHan15,pouget2019variance}.
The key limitation of these methods is that one needs both the exact network structure and exposure model to compute the correct neighborhood statistics. 



Recently, some methods have been proposed based on multiple measurements which can address interference without knowing the network \citep{shankar22cookie,YuCortezEichhorn22,YuAiroldiBorgsChayes22}. These methods use a multiple-trial approach for estimating treatment effect. Assuming stationarity i.e. that the outcomes do not vary between the trials, multiple trials simplify GTE estimation by providing access to both the factual and counterfactual outcome. However, under non-stationarity (e.g. seasonal effects) their estimator is not valid. Moreover, such a model is unrealistic for our motivating use case of continuous optimization. Finally, in the more general setting, conducting multiple trials itself is fundamentally impossible \citep{shankar23diet}. As such, we want to develop a method which can work with observational data from a single RCT.

Other works  on treatment effect estimation under uncertain or mis-measured data include those focused on sensitivity analysis \citep{liu2013introduction,richardson2014nonparametric,veitch2020sense,dorie2016flexible} or on trying to identify confounders~\citep{ranganath2018multiple,d2019multi,wang2019blessings,miao2020identifying}. \citet{yadlowsky2018bounds} propose a loss-minimization approach that quantifies bounds on the conditional average treatment effect (CATE). They focus on analysis of errors in outcomes, confounder models, etc. but not on error in the interference graph.

\begin{figure*}
    \centering
    \includegraphics[width=0.9\textwidth]{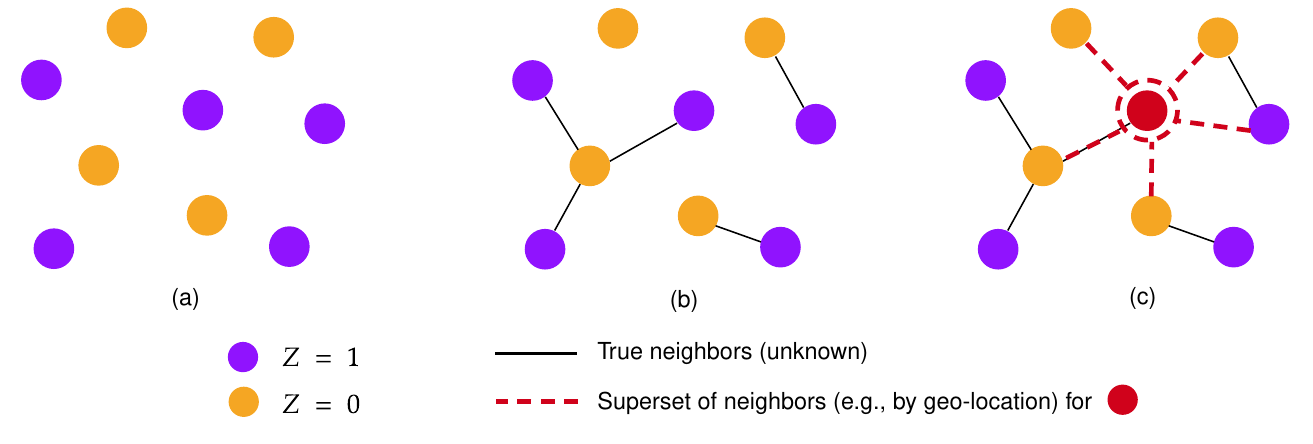}
    \caption{$Z=1$ denotes to units in the treatment group and $Z=0$ denotes units in the control group. \textbf{(a)} Standard A/B testing where there is no interaction between the treatment and the control units.\textbf{(b)} Network interference due to (unknown) interaction between the units. \textbf{(c)} We do not assume access to the exact underlying interaction graph. Instead, we consider a practically feasible assumption where only a \textit{superset} of neighbors for any node is available.  
\label{fig:1}} 
\end{figure*}

\section{UNITE}
In this section, we first formalize the treatment effect estimation problem with uncertainty about the interference graph. We then  present the assumptions that underlie our method. Finally, we describe our estimation method called \textit{UNITE} (\underline{U}ncertain \underline{N}etwork \underline{I}nterference Aware \underline{T}reatment-effect \underline{E}stimator), and present theoretical results about the method.

\subsection{Notation and Assumptions}
Let there be $n$ units in the population, $\bZ$ be the treatment assignment vector of the entire population, $\mathcal{Z}$ denote the set of possible treatment assignments, e.g., for binary treatments $\mathcal{Z} = \{0,1\}^n$. 
We use the Neyman potential outcome framework \citep{Neyman1923,rubin1974estimating}, and let $Y_i(\bz)$ denote the potential outcome of the $i$-th for $\bz \in \mathcal{Z}$.

Observations are made only at the unit level and are denoted as $Y_i$ for unit $i$. 
Note that multiple units might have a common factor influencing them. For example, if the units are users, their choice can be influenced not only by their experience on a website, but also by the experience of their social circle. Similarly, for a user using multiple devices, their behaviour at one device can be influenced by the treatment at the other device. As such the potential outcome at unit $i$ need not depend only on its own treatment assignment but also on other treatments allocated to its neihgbours.
This is a violation of the SUTVA assumption \citep{cox1958planning,hudgens_halloran08}; and is commonly called interference or spillover.

The dependence between unit level outcomes, can be encoded with its adjacency matrix $\bA \in \mathbb{R}^{n\times n}$, with $A_{ij} = 1$ only if treatment at unit $j$ can influence outcomes at unit $i$ ( e.g. if units $j$ is a friend of unit $i$ or if $i$ and $j$ are used by the same user).
Let $\mathcal{N}_i=\{j: A_{ij}=1\}$ be the set of \textit{neighbors} of device $i$, where by convention we include the self node i.e. $A_{ii} = 1$. We assume that the outcomes depend on the total treatments received by a unit through the graph. This implies that the interference at a device is limited to its immediate neighbours in the graph (i.e. SUTVA holds at the network level).
\begin{remark}
 While we consider only immediate neighbours in our descriptions, this is purely for descriptive convenience. One could simply extend our estimator to include $k$-hop neighbours instead of immediate neighbours.
 \end{remark}
\begin{align*}
\label{eq:neighbor_interference}
\text{If} \; \; z_i=z'_i \; \;  \text{and} \; \; z_j = z'_j \; \; \forall j \in \mathcal{N}_i, \; \; \text{then} \; \;
         Y_i(\bz) = Y_i(\bz').
 \end{align*}

The desired causal effect (GATE) is the mean difference between the outcomes when $\bz=\vec{1}\, (\text{i.e.,} \,  \forall i,\, z_i=1)$ and when $\bz=\vec{0} \, (\text{i.e.,} \, \forall i,\,z_i=0)$. Under the aforementioned notations, GATE is given by: 
\begin{equation}    \label{eq:definition_causal_effect}
    \tau(\vec{1}, \vec{0}) = \frac{1}{n}\sum_{i=1}^{n}Y_i(\vec{1}) - \frac{1}{n}\sum_{i=1}^{n}Y_i(\vec{0}) 
\end{equation}
To estimate $\tau(\vec{1}, \vec{0})$, we consider the following assumptions, that can broadly categorized under the following three categories.

\textbf{(I) Partial Graph: } If the true graph is known, prior works have provided an estimate of $\tau(\vec{1}, \vec{0})$ ~\citep{hudgens_halloran08,halloran2016dependent}.
However, access to $\bA$ is not possible in many settings.
Therefore, we will assume that the \textbf{exact} graph $\bA$ is \textbf{not} known. Instead, we assume access to a model $\mathcal{M}$ which provides constraints on $\bA$. Specifically, we assume $\mathcal{M}$ can be queried for a node $i$ to obtain a \textit{superset} $\mathcal{M}_i \supseteq \mathcal N_i$ of the true neighborhood.

In our use case of experimenting with social graphs, a superset of neighbors can be obtained in various ways: if the user has given cookie permissions,  or from some existing user model for identity linking, or even from something as basic as geo-location and ip addresses. This provides a significant practical advantage over the prior methods that necessitate knowledge of the \textit{exact} neighborhood.


\textbf{(II) Linear Additive Structure: }Randomized experiments with interference (even with neighbourhood interference) can be difficult to analyze since the number of potential outcome functions
for unit $i$ may be {$2^{|\mathcal{N}_i|}$}. This \textit{exponential} growth in number of neighbours $|\mathcal{N}_i|$ is problematic in comparison to the SUTVA case where there are only two outcome functions, irrespective of $|\mathcal{N}_i|$.
Therefore, the literature on network interference often restricts the set of potential outcome functions. \citep{EcklesKarrerUgander17,ToulisKao13,SussmanAiroldi17}. We follow a similar approach and let the outcome model be

\vspace*{-0.5cm}
\begin{align*}
\label{eq:linear_outcome}
         Y_i(\bz) = c_{i} + \sum_{j \in \mathcal{N}_i} c_{i,j}\mathbb{I}[{z_j=1}] + \epsilon \tag{\textbf{A1}}
 \end{align*}
where $\epsilon$ is an independent zero mean noise.
Under this assumption, GATE is given by $   \tau(\vec{1}, \vec{0}) = \frac{1}{n}\sum_{ i}\sum_{j \in  \mathcal{N}_i} c_{ij}$. We consider this linear model  to explain the core idea of our estimator. In Section \ref{sec:higher} we consider the generalized setting where $Y_i$ can be a non-linear function.




\textbf{(III) }We also make the following assumptions that are standard for the treatment effect and causal interference \citep{pearl2009causality,hudgens_halloran08,spirtes2010introduction}: 
\begin{align}
&\text{Network Ignorability:}\;\;  Y(\bz) \indep \bZ  &\: \forall \bz \tag{\textbf{A2}} \\
&\text{Positivity:} \;\; P(\bz | \bX) > 0 &\: \forall \bz \tag{\textbf{A3}} \\
&\text{Consistency:} \;\; Y_i = Y_i(\bz) &\text{ if } \bZ = \bz  \tag{\textbf{A4}}
\end{align}
Since we are in a A/B testing scenario, there do not exist any confounders. Moreover positivity is easily ensured by choosing a good randomization scheme. Hence the assumptions \textbf{A2-4} are naturally satisfied. 

\subsection{Estimation}
 \label{subsec:m2}

One of the most popular estimators for causal inference is the Horvitz-Thompson (HT) estimator \citep{horvitz1952generalization}. 
If the graph is known and when all treatment decisions are independent Bernoulli variables with probability $p$, an estimate $\tau_{\text{HT}}$ of GATE using HT estimator can be obtained as the following:
\begin{align}
 \tau_{\text{HT}} &= \frac{1}{n} \sum_i Y_i \left(  \prod_{j \in \mathcal{N}_i} \frac{z_j}{p} - \prod_{j \in \mathcal{N}_i} \frac{(1-z_j)}{(1-p)} \right).
 \label{eq:tau_ht}
\end{align}
However, using $ \tau_{\text{HT}}$  is practically infeasible because of the \textit{exponentially} high variance.
To observe the cause of high variance, consider that the graph $\bA$ is a k-regular graph and  $p=0.5$.  In such a case, the probability that any given node $i$ is exposed to only treatment or control group nodes in the treatment or control group is $1/2^k$. This falls off rapidly with $k$, requiring large graphs for even modest $k$. For example, with $k=20$, we need a graph of the order of a million nodes for the HT estimate to even have a meaningful value.
This problem of the HT-estimate is further compounded when the graph $\bA$ is not exactly known.

This raises a natural question: \textit{can the structure of the problem be exploited to reduce variance?} 

In the following, we show how the structure in \ref{eq:linear_outcome} can be leveraged to simultaneously address both the challenges: a) high variance of  $\tau_{\text{HT}}$ and b) incomplete knowledge of the graph $\bA$. We will first present a basic estimator, which we would then improve via incorporating ideas from self-normalized estimators \citep{thomas2017importance} and doubly-robust estimation \citep{chernozhukov2018double}.

\subsection{Variance Reduced Estimation}

We first have a look at how assumption \textbf{A1} affects $\tau_{\text{HT}}$. Substituting  \ref{eq:linear_outcome} in  \ref{eq:tau_ht}, $\tau_{\text{HT}}$ can be expressed as
\begin{small}
\begin{align*}
\frac{1}{n} \sum_i \left [ c_{i} + \sum_{j \in \mathcal{N}_i} c_{i,j}\mathbb{I}[{z_j=1}] \right] \left(  \prod_{k \in \mathcal{N}_i} \frac{z_k}{p} - \prod_{k \in \mathcal{N}_i} \frac{(1-z_k)}{(1-p)} \right).
\end{align*}
\end{small}
Now observe that as \textit{allocation} at each unit is independent, for any functions $g$ and $h$: $\E[h(z_i) g(z_j)] = \E[h(z_i)]\E[g(z_j)]$. Furthermore, as $\E[z_k/p] = \E[(1-z_k)/(1-p)] = 1$, we can ignore all the ratio terms for $k \neq j$ (see Appendix \ref{apx:HTconnection} for a complete derivation).

Therefore, $\tau_{\text{HT}}$  can be simplified as
{\small
\begin{align*}
\E[\tau_{\text{HT}}] = \frac{1}{n} \sum_i \E \left[ [ c_{i} + \sum_{j \in \mathcal{N}_i} c_{i,j}\mathbb{I}[{z_j=1}] ] \left(  \frac{z_j}{p} - \frac{(1-z_j)}{(1-p)} \right) \right],
\end{align*}
}%
which is a linear combination of in the terms $z_j/p$ and $(1-z)/(1-p)$. However, while this avoids the root cause of high variance (the product of the ratios), this expression cannot be computed from only the graph and observed outcomes $Y_i$. 

To resolve this problem, we will rewrite the earlier expression in terms of $Y_i$.  Observe that since  $z_j \indep z_i \; \forall i\neq j$, we can add terms of the form $z_i \left(\frac{z_j}{p} - \frac{1-z_j}{1-p}\right) \text{ with  } i \neq j$ without changing the expected value. Adding in such terms to include every node in $\M_i$, we get {\notsotiny
\begin{align*}
\E[\tau_{\text{HT}}] = \frac{1}{n} \sum_i \E \left[ \left( c_{i} + \sum_{j \in \mathcal{N}_i} c_{i,j}\mathbb{I}[{z_j=1}] \right) \left( \sum_{ k \in \M_i} \frac{z_k}{p} - \frac{(1-z_k)}{(1-p)} \right) \right]  
\end{align*}
}%

which motivates the following estimator:
\begin{align}
\label{eq:te_full}
\hat{\tau}_{\text{Lin}} = \frac{1}{n} \sum_i Y_i  \sum_{j \in \mathcal{M}_i} \left(  \frac{z_j}{p} - \frac{(1-z_j)}{(1-p)}  \right).
\end{align}

\begin{thm}

Under \textbf{A1-4}, and assuming $\mathcal{M}_i \supseteq \mathcal{N}_i$, $\hat{\tau}_{\text{Lin}}$ is an unbiased and consistent estimate of $\tau(\vec{1},\vec{0})$, i.e.,
$    \E [ \hat{\tau}_{\text{Lin}} ] = \tau(\vec{1},\vec{0}), 
$ and $\hat{\tau}_{\text{Lin}} \overset{a.s.}{\longrightarrow}\tau(\vec{1},\vec{0})$.
\thlabel{thm:lin}
\end{thm}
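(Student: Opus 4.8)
\emph{Unbiasedness.} The plan is to substitute the linear outcome model \ref{eq:linear_outcome} together with consistency \textbf{A4} into the definition \ref{eq:te_full} and take expectations over the randomization and the noise. Writing $w_k := z_k/p - (1-z_k)/(1-p)$, one has $\E[\hat{\tau}_{\text{Lin}}] = \frac1n\sum_i \E\big[(c_i + \sum_{j\in\mathcal{N}_i} c_{i,j}z_j + \epsilon)\sum_{k\in\mathcal{M}_i} w_k\big]$. The noise term vanishes since $\epsilon$ is independent of the allocation and mean zero. For the $c_i$ term, $\E[w_k] = p/p - (1-p)/(1-p) = 0$, so it contributes nothing. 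For the cross terms, because the allocations $z_j$ are mutually independent, whenever $k\neq j$ we get $\E[z_j w_k] = \E[z_j]\E[w_k] = 0$; and because $\mathcal{N}_i\subseteq\mathcal{M}_i$, for every $j\in\mathcal{N}_i$ the index $k=j$ does occur in the inner sum, contributing $\E[z_j w_j] = \E[z_j^2]/p - \E[z_j(1-z_j)]/(1-p) = 1$ using $z_j\in\{0,1\}$, so $z_j^2=z_j$ and $z_j(1-z_j)=0$. Hence $\E[\hat{\tau}_{\text{Lin}}] = \frac1n\sum_i\sum_{j\in\mathcal{N}_i} c_{i,j}$, which is exactly $\tau(\vec{1},\vec{0})$ by the GATE expression stated after \textbf{A1}. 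The key point is that this argument only uses $\mathcal{M}_i\supseteq\mathcal{N}_i$ and never the exact graph.

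\emph{Consistency.} Write $\hat{\tau}_{\text{Lin}} - \tau(\vec{1},\vec{0}) = \frac1n\sum_{i=1}^n(X_i - \E X_i)$ with $X_i := Y_i\sum_{k\in\mathcal{M}_i} w_k$. The summands are not independent, but $X_i$ and $X_{i'}$ are independent whenever $\mathcal{M}_i\cap\mathcal{M}_{i'}=\emptyset$, so the dependency graph among the $X_i$ has degree controlled by the neighborhood sizes. I would then (i) bound $\Var(\hat{\tau}_{\text{Lin}}) = \frac1{n^2}\sum_{i,i'}\Cov(X_i,X_{i'})$: only $O(n)$ of the $n^2$ covariances are nonzero (assuming uniformly bounded $|\mathcal{M}_i|$), each bounded under boundedness of $c_i, c_{i,j}$ and $p\in(0,1)$ fixed, giving $\Var = O(1/n)\to 0$ and hence convergence in probability; and (ii) sharpen this to a fourth-central-moment bound $\E[(\sum_i (X_i-\E X_i))^4] = O(n^2)$ — the nonzero terms are the $4$-tuples $(i,j,k,l)$ whose indices form a connected dependency cluster or split into two correlated pairs, of which there are $O(n^2)$ — so $\E[(\hat{\tau}_{\text{Lin}}-\tau)^4] = O(1/n^2)$ is summable in $n$, and Markov's inequality plus Borel--Cantelli yield $\hat{\tau}_{\text{Lin}}\overset{a.s.}{\longrightarrow}\tau(\vec{1},\vec{0})$. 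Equivalently, since $\hat{\tau}_{\text{Lin}}$ is a bounded-difference function of the independent Bernoulli allocations $z_1,\dots,z_n$, McDiarmid's inequality gives exponential concentration around its mean and the same almost-sure conclusion.

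\emph{Main obstacle.} Unbiasedness is a short computation; the real content is consistency. The sticking point is that the estimator is a sum of strongly correlated summands, so one must (a) fix the asymptotic regime — a growing sequence of populations with uniformly bounded (or suitably slowly growing) neighborhood supersets $|\mathcal{M}_i|$ and uniformly bounded coefficients and outcomes (or noise with enough moments) — and (b) verify that the dependency graph induced by the $\mathcal{M}_i$ has the degree control needed for the variance and fourth-moment estimates. Making these regularity conditions explicit, and checking that replacing $\mathcal{N}_i$ by the superset $\mathcal{M}_i$ does not inflate the variance beyond $O(1/n)$, is where the care is required.
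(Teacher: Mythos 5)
Your proof is correct, and the unbiasedness half is essentially the paper's own argument: the paper proves it by first establishing a general identity $\E\big[\prod_{k\in S'} z_k \sum_{S\subseteq\M_i,|S|\le\beta}(\Psi(S)-\Psi'(S))\big]=1$ for $S'\subseteq\N_i\subseteq\M_i$ (Lemma~\ref{lem:help_beta2}) and then specializing $\hat{\tau}^\beta$ to $\beta=1$; at $\beta=1$ that identity is exactly your computation $\E[w_k]=0$, $\E[z_jw_j]=1$, $\E[z_jw_k]=0$ for $j\neq k$, with $\M_i\supseteq\N_i$ guaranteeing the surviving diagonal term. The appendix section on the HT connection carries out the same term-by-term cancellation you describe.

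Where you genuinely diverge is consistency. The paper only proves $\Var[\hat{\tau}^\beta]\le \frac{1}{n}Y^2_{\max}(p(1-p))^{-\beta}d_\M^{1+\beta}d_\N^{1+\beta}$ and then asserts almost-sure convergence directly from the vanishing variance; as stated, that argument yields only $L^2$ (hence in-probability) convergence, not the a.s.\ convergence claimed in the theorem. Your fourth-central-moment bound with Borel--Cantelli, or alternatively the McDiarmid bounded-differences route (valid once the noise is bounded or handled separately, since $\epsilon$ is only assumed to have finite variance), supplies precisely the missing step, under the same regularity conditions the paper implicitly uses (uniformly bounded $|\M_i|$, coefficients, and outcomes). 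You are also right to flag that the statement concerns a triangular array of growing populations, so the asymptotic regime must be fixed for ``a.s.'' to be meaningful; the paper does not make this explicit either. In short, your proposal matches the paper on unbiasedness and is strictly more careful than the paper on the almost-sure part of the claim.
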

%


\begin{rem}
Observe that both \ref{eq:tau_ht} and \ref{eq:te_full} are linear combinations of outcomes $Y_i$ but the weights are different. While ${\tau}_{\text{HT}}$ uses weights that involve \textit{products} over the \emph{true} neighbors  $\N_i$ of a node, $\hat{\tau}_{\text{Lin}}$ use weights that are \emph{summed} over a \emph{superset} of neighbors $\M_i$. This allows a drastic reduction in variance.
\end{rem}
\begin{rem}
$\hat{\tau}_{\text{Lin}}$ is unbiased under the structural assumption \textbf{A1}, but ${\tau}_{\text{HT}}$ is unbiased irrespective. 
Intuitively, variance is traded-off with the identifiability of $\tau$ using the structural form of interference. 
\end{rem}




\paragraph{Self-Normalization}
While $\hat \tau_{\text{Lin}}$ can drastically reduce variance compared to $\tau_{\text{HT}}$, it can still be subject to high variance when $p$ is close to $0$ in the inverse-propensity $(z_i/p)$ terms. This fact is well known in the importance sampling (IS) literature where the inverse propensity is also known as `importance ratios'. 
A common alternative there is to use the self-normalized (or weighted) IS that can further reduce the variance at the cost of bias \citep{tukey1956conditional,rubin1987calculation}. 
%

Note that $\hat \tau_{\text{Lin}}$ bears resemblance to the general IS estimators, where in \ref{eq:te_full} the numerator in the ratios is the probability of treatments with the desired distribution (i.e., assign all units 0 or 1), and the denominator is the probability of the treatment under the sampling distribution. Using this insight, we can leverage self-normalization to further reduce the variance of $\hat \tau_{\text{Lin}}$.

Let $\rho_i = z_i/p$ and $\rho'_i = \frac{1-z_i}{1-p}$, 
then $\hat{\tau}_{\text{Lin}}$ can be rewritten with $\rho$, and $\rho'$ as:
\begin{align*} \hat{\tau}_{\text{Lin}} 
&= \frac{1}{n} \sum_i Y_i \sum_{j \in \mathcal{M}_i} \left(  \rho_j - \rho'_j  \right).
\end{align*}
Similar to self-normalized estimators, we can replace the importance weights $\rho$ by their self-normalized values $\bar{\rho}_i$ and $\bar{\rho}'_i$, where
\vspace*{-0.3cm}
\begin{align}
\label{eq:self_normal_rho}
\bar{\rho}_i &= \dfrac{ \rho_i}{\frac{1}{n}\sum\limits_j \frac{1}{|M_j|}\sum\limits_{k\in M_j} \rho_k}, \!\!\!&
\bar{\rho}'_i = \dfrac{ \rho'_i}{\frac{1}{n}\sum\limits_j \frac{1}{|M_j|}\sum\limits_{k\in M_j} \rho'_k}.
\end{align}
%
%
The self-normalized estimator in our case is given by:
$$
\hat{\tau}^{1}_{WIS} =    
    \frac{1}{n}\sum_i Y_i
    \left( \sum_{k\in \mathcal{M}_i} \bar{\rho}_k - \bar{\rho}'_k\right).
$$
\begin{thm}
    Under \textbf{A1-4} and assuming $\mathcal{M}_i \supseteq \mathcal{N}_i$, 
    $\hat{\tau}^{1}_{WIS}$ is a biased but consistent estimate of $\tau(\vec{1},\vec{0})$, i.e.,
    $
     \E [ \hat{\tau}_{WIS}^1 ] \neq \tau(\vec{1},\vec{0})$, and $
    \hat{\tau}^{1}_{WIS} \overset{a.s.}{\longrightarrow}\tau(\vec{1},\vec{0}).$
\thlabel{thm:wis}
\end{thm}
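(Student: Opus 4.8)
The plan is to write $\hat{\tau}^{1}_{WIS}$ as a difference of two ratio (self-normalized) statistics, show that each normalizer converges almost surely to $1$, and then read off consistency from Theorem~\ref{thm:lin}; the bias will come for free because a normalizer is correlated with its numerator, so by Jensen's inequality the mean of the ratio is not the ratio of the means. Introduce $\rho_j=z_j/p$, $\rho'_j=(1-z_j)/(1-p)$ and set $A_n=\frac1n\sum_i Y_i\sum_{j\in\mathcal M_i}\rho_j$, $B_n=\frac1n\sum_i Y_i\sum_{j\in\mathcal M_i}\rho'_j$, with normalizers $D_n=\frac1n\sum_j\frac1{|\mathcal M_j|}\sum_{k\in\mathcal M_j}\rho_k$ and $D'_n=\frac1n\sum_j\frac1{|\mathcal M_j|}\sum_{k\in\mathcal M_j}\rho'_k$. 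Then $\hat{\tau}_{\text{Lin}}=A_n-B_n$, and, unwinding the definitions of $\bar\rho,\bar\rho'$ in \eqref{eq:self_normal_rho}, $\hat{\tau}^{1}_{WIS}=A_n/D_n-B_n/D'_n$. Since $\E[\rho_k]=\E[\rho'_k]=1$ (the fact already used to analyze $\hat\tau_{\text{Lin}}$), we get $\E[D_n]=\E[D'_n]=1$ immediately.

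For consistency, write $D_n=\sum_k w_k\rho_k$ where $w_k=\frac1n\sum_{j:\,k\in\mathcal M_j}|\mathcal M_j|^{-1}\ge0$ satisfies $\sum_k w_k=1$ and, under bounded (or slowly growing) degrees, $\max_k w_k=O(1/n)$. The $\rho_k$ are independent and lie in $[0,1/p]$, so Hoeffding's inequality for weighted sums gives $\P(|D_n-1|>\epsilon)\le 2\exp(-2p^2\epsilon^2/\sum_k w_k^2)\le 2\exp(-c\epsilon^2 n)$ using $\sum_k w_k^2\le\max_k w_k$; this is summable in $n$, so Borel--Cantelli yields $D_n\overset{a.s.}{\longrightarrow}1$, and similarly $D'_n\overset{a.s.}{\longrightarrow}1$. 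Next decompose $\hat{\tau}^{1}_{WIS}=\hat{\tau}_{\text{Lin}}+A_n(1/D_n-1)-B_n(1/D'_n-1)$. Under A1 with bounded coefficients and degrees, $A_n$ and $B_n$ are almost surely bounded (bounded means plus the usual law of large numbers for sums with a bounded dependency graph, exactly as in the proof of Theorem~\ref{thm:lin}), so the two correction terms vanish almost surely; combined with $\hat{\tau}_{\text{Lin}}\overset{a.s.}{\longrightarrow}\tau(\vec{1},\vec{0})$ from Theorem~\ref{thm:lin}, this gives $\hat{\tau}^{1}_{WIS}\overset{a.s.}{\longrightarrow}\tau(\vec{1},\vec{0})$.

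For the bias, note that $A_n$ and $D_n$ are built from the same ratios $\{\rho_k\}$ and are generically correlated, and likewise $B_n,D'_n$. A second-order delta-method expansion around $(\E[A_n],1)$ gives $\E[A_n/D_n]=\E[A_n]-\Cov(A_n,D_n)+\E[A_n]\Var(D_n)+o(1/n)$, with the analogous expression for $\E[B_n/D'_n]$; subtracting, the leading correction $-\Cov(A_n,D_n)+\Cov(B_n,D'_n)+(\E[A_n]-\E[B_n])\Var(D_n)+\cdots$ does not vanish for a generic outcome model, so $\E[\hat{\tau}^{1}_{WIS}]\ne\tau(\vec{1},\vec{0})$. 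To make this fully rigorous I would exhibit one explicit small instance --- e.g.\ a two- or three-node graph with a fixed choice of the $c_{ij}$ and $p=1/2$ --- and compute $\E[\hat{\tau}^{1}_{WIS}]$ exactly as a finite sum over the $2^n$ treatment assignments, verifying the strict inequality; this simultaneously certifies that the estimator is not unbiased as a procedure.

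The main obstacle is the almost-sure (rather than in-probability) part: it forces one to (i) turn the variance bound for $D_n$ into a genuine concentration inequality and invoke the graph-regularity assumption so that $\max_k w_k\to0$, and (ii) control $A_n,B_n$ uniformly along the sequence, which again needs bounded coefficients/noise and a bounded dependency graph. The bias half is conceptually routine but, as noted, is cleanest to nail down via the explicit example rather than by bounding the delta-method remainder; everything else is bookkeeping with the independence of the $z_j$ and the normalization identity $\E[\rho_k]=\E[\rho'_k]=1$ already used for Theorem~\ref{thm:lin}.
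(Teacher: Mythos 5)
Your proposal is correct and follows essentially the same route as the paper: show the self-normalizing denominators $D_n, D'_n$ converge almost surely to $1$ (the paper invokes Kolmogorov's SLLN plus the continuous mapping theorem where you use weighted Hoeffding and Borel--Cantelli), then transfer consistency from $\hat{\tau}_{\text{Lin}}$, with bias arising because the normalizer is correlated with the numerator. If anything, your version is the more careful one --- the explicit ratio decomposition $A_n/D_n - B_n/D'_n$ and the delta-method/explicit-example plan for the bias are tighter than the paper's one-line remark that $\bar{\rho} \neq \rho$ implies the expectation differs.
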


\paragraph{Doubly Robust Estimate: } Through self-normalization, $\hat{\tau}^{1}_{WIS}$ reduces the variance at the cost of bias. To avoid incurring this bias, but still reduce the variance of $\hat \tau_{\text{Lin}}$, we now propose a doubly robust estimator that leverages partial estimates of $\tau(\vec{1},\vec{0})$. 


There are a plethora of methods that make stronger assumptions about the interference or graph structure (e.g., knowledge of the exact graph) \citep{brennan2022cluster,EcklesKarrerUgander17,leung2020treatment}. These methods will typically provide biased but lower variance estimates  when their assumptions do not hold.
However, we can use our method to create a `robustified' version of these existing estimators, which not only (a) side-steps the need to validate the assumptions such models make, but can also (b) potentially reduce the variance of our estimator.

Leveraging ideas from control-variate \citep{thomas2016data} and doubly robust learning \citep{chernozhukov2018double} literature,
%
given a (partial/incorrect) estimate of the potential outcome functions $f_0(X_i) \approx Y_i(\vec{0})$ and $f_1(X_i) \approx Y_i(\vec{1})$, we propose the following estimator:
\begin{align*}
\hat{\tau}_{DR} &= \frac{1}{n}\sum_i \Big[ f_1(X_i) - f_0(X_i) + \\
&\quad(Y_i - f_1(X_i))\sum_{\M_i}\rho_j - 
 (Y_i - f_0(X_i))\sum_{\M_i} \rho_j' \Big].
\end{align*}
Estimator $\hat{\tau}_{DR}$ is beneficial as it may reduce variance using (partial) models $f_0$ and $f_1$, and remain unbiased irrespective of how inaccurate $f_0$ and $f_1$ are.
\begin{thm}
    Under \textbf{A1-4} and assuming $\mathcal{M}_i \supseteq \mathcal{N}_i$,  $\hat{\tau}_{DR}$ is an unbiased and consistent estimate of $\tau(\vec{1},\vec{0})$,
    $
    \E [ \hat{\tau}_{DR} ] = \tau(\vec{1},\vec{0}),$ and $ \hat{\tau}_{DR} \overset{a.s.}{\longrightarrow}\tau(\vec{1},\vec{0}).$
\thlabel{thm:dr}
\end{thm}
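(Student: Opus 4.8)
The plan is to treat $f_0,f_1$ (and the features $X_i$) as exogenous — either fixed functions chosen before the experiment, or fit on a held-out split so that they are independent of the treatment vector $\bZ$ — and to condition on them throughout. Then the only randomness is the Bernoulli($p$) assignment $\bZ$, which is independent across units by design (A2--A3), together with the independent mean-zero noise $\epsilon$ in A1. Substituting the linear form A1 for each $Y_i$, the estimator $\hat{\tau}_{DR}$ becomes an explicit polynomial in the independent indicators $\{z_j\}$, and I would compute $\E[\hat{\tau}_{DR}]$ term by term.

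For unbiasedness, I would split $\hat{\tau}_{DR}$ into the imputation part $\frac1n\sum_i (f_1(X_i)-f_0(X_i))$ and the two correction sums $\frac1n\sum_i (Y_i-f_1(X_i))\sum_{j\in\M_i}\rho_j$ and $-\frac1n\sum_i (Y_i-f_0(X_i))\sum_{j\in\M_i}\rho'_j$. The computation rests on three elementary facts about the weights: $\E[\rho_j]=\E[\rho'_j]=1$; for $j\neq k$, $z_j\indep z_k$, so expectations factor; and $\E[\I[z_j=1]\rho_j]=1$ while $\E[\I[z_j=1]\rho'_j]=0$. Using $\E[\rho_j]=\E[\rho'_j]=1$, the parts of the correction sums carrying the constant factors $f_1(X_i)$ and $f_0(X_i)$ collapse to quantities that offset the imputation part — the mechanism that makes the estimator robust to arbitrarily inaccurate $f_0,f_1$. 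Using independence together with $\M_i\supseteq\N_i$, the remaining $Y_i$-weighted part of the two correction sums contributes exactly $\frac1n\sum_i\sum_{j\in\N_i}c_{ij}$: each neighbour $j\in\N_i$ recovers its coefficient $c_{ij}$ through the $j=k$ terms, while the spurious indices $k\in\M_i\setminus\N_i$ drop out because their treatments are independent of $Y_i$ and their weights average to one — mechanically the same cancellation that proves $\E[\hat{\tau}_{\text{Lin}}]=\tau(\vec{1},\vec{0})$ in \thref{thm:lin}. Collecting everything gives $\E[\hat{\tau}_{DR}]=\frac1n\sum_i\sum_{j\in\N_i}c_{ij}=\tau(\vec{1},\vec{0})$.

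For consistency, since $\hat{\tau}_{DR}$ is unbiased it suffices to show $\Var(\hat{\tau}_{DR})\to 0$ and invoke a law of large numbers. Under the standing regularity assumptions (bounded neighbourhood sizes $|\M_i|$, bounded coefficients $c_i,c_{ij}$, and bounded $f_0,f_1$) each summand has bounded variance, and the summands indexed by $i$ and $i'$ are independent whenever $\M_i\cap\M_{i'}=\emptyset$; hence the dependency graph of the summands has bounded degree and $\Var(\hat{\tau}_{DR})=O(1/n)$, so $\hat{\tau}_{DR}\convP\tau(\vec{1},\vec{0})$ by Chebyshev. Upgrading to almost-sure convergence follows the route used for \thref{thm:lin} and \thref{thm:wis}: a fourth-moment bound of order $O(1/n^2)$ (again by locality of the dependence) combined with Borel--Cantelli, or directly invoking a strong law for arrays with a bounded dependency graph.

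The main obstacle is the bookkeeping in the second step: one must track every $f_0$- and $f_1$-dependent monomial produced by expanding the two correction sums and verify that each either cancels against the imputation term or integrates to zero — the crux of the double-robustness claim, which hinges on $\E[\rho_j]=\E[\rho'_j]=1$ and the mutual independence of the $z_j$. A secondary subtlety, inherited from \thref{thm:lin}, is that the inner sums range over the superset $\M_i\supseteq\N_i$ rather than $\N_i$, so one must confirm the extra indices contribute nothing to the bias; with A1 and the independence structure in hand this is routine, and the consistency step is likewise mechanical given the boundedness and bounded-degree assumptions.
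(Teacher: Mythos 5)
Your overall architecture --- reduce the $Y_i$-weighted part to the argument for $\hat{\tau}_{\text{Lin}}$, handle the $f_0,f_1$ terms by a cancellation against the imputation term, then get consistency from an $O(1/n)$ variance bound over a bounded-degree dependency graph --- is the same as the paper's. The gap is at the crux step you flag yourself: the asserted cancellation between the $f$-carrying parts of the correction sums and the imputation term does not follow from the identity you invoke. Because the correction sums range over all of $\M_i$, you get $\E\bigl[f_1(X_i)\sum_{j\in\M_i}\rho_j\bigr]=f_1(X_i)\,|\M_i|$ and likewise for $f_0$, so the $f$-dependent contribution to the expectation is $-(f_1(X_i)-f_0(X_i))\,|\M_i|$, which the single imputation term $+(f_1(X_i)-f_0(X_i))$ offsets only when $|\M_i|=1$. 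If you actually carry out the bookkeeping you describe, you will find a residual of $\frac{1}{n}\sum_i (f_1(X_i)-f_0(X_i))(1-|\M_i|)$, i.e.\ your expansion shows the estimator you wrote down is biased, not unbiased.

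The paper's proof avoids this by working with the residualized outcome $\tilde Y_i = Y_i - z_i f_1(X_i) - (1-z_i) f_0(X_i)$ multiplied by the full signed weight $\sum_{k\in\M_i}(\rho_k-\rho'_k)$, which is the $\beta$-estimator of Equation \ref{eq:beta_dr_def} specialized to $\beta=1$ (not literally the display defining $\hat\tau_{DR}$ in the main text). There the relevant identities are $\E\bigl[z_i\sum_{k\in\M_i}(\rho_k-\rho'_k)\bigr]=1$ and $\E\bigl[(1-z_i)\sum_{k\in\M_i}(\rho_k-\rho'_k)\bigr]=-1$: only the $k=i$ term survives since $\E[\rho_k-\rho'_k]=0$ for $k\neq i$, and these are exactly what make the $\pm(f_1-f_0)$ contributions cancel the imputation part for arbitrary $f_0,f_1$. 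So the double-robustness hinges not on $\E[\rho_j]=\E[\rho'_j]=1$ but on the model predictions entering with the treatment-dependent weights $z_i$ and $1-z_i$ against the \emph{signed} propensity sum. Your treatment of the $Y_i$-weighted part (reduction to \thref{thm:lin} via $\M_i\supseteq\N_i$) and your consistency argument (Chebyshev plus a strong-law upgrade using locality of the dependence, with $f_{0/1}$ bounded so the variance analysis of the base estimator carries over) are fine and match the paper's route; only the unbiasedness step needs the corrected weighting.
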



\subsection{Non-linear Structural Model}
\label{sec:higher}


\paragraph{Motif Model}
In the earlier section, we considered a linear structural model from \ref{eq:linear_outcome}. However, in general, there may also exist interaction from subgraphs in the neighborhood of $i$.
Graphs for such interaction structure are also known as ``motif" models \citep{aittokallio2006graph,holland1974statistical}, as ``network motifs" are a common way to characterize all patterns of smaller network features among a set of nodes (see  Figure \ref{fig:net_motif} for examples). While this may seem restrictive, our assumption on the functional form of potential outcome is less stringent than that commonly used in literature \citep{sussman2017elements,brennan2022cluster}. Furthermore if the neighborhoods are small, the motif model can represent all possible functions \citep{yuan2021causal}, and hence can also capture model heterogeneity in interference.

\begin{figure}
    \centering
    \includegraphics[width=0.5\textwidth, trim={1cm 1.4cm 0.5cm 1.2cm},clip]{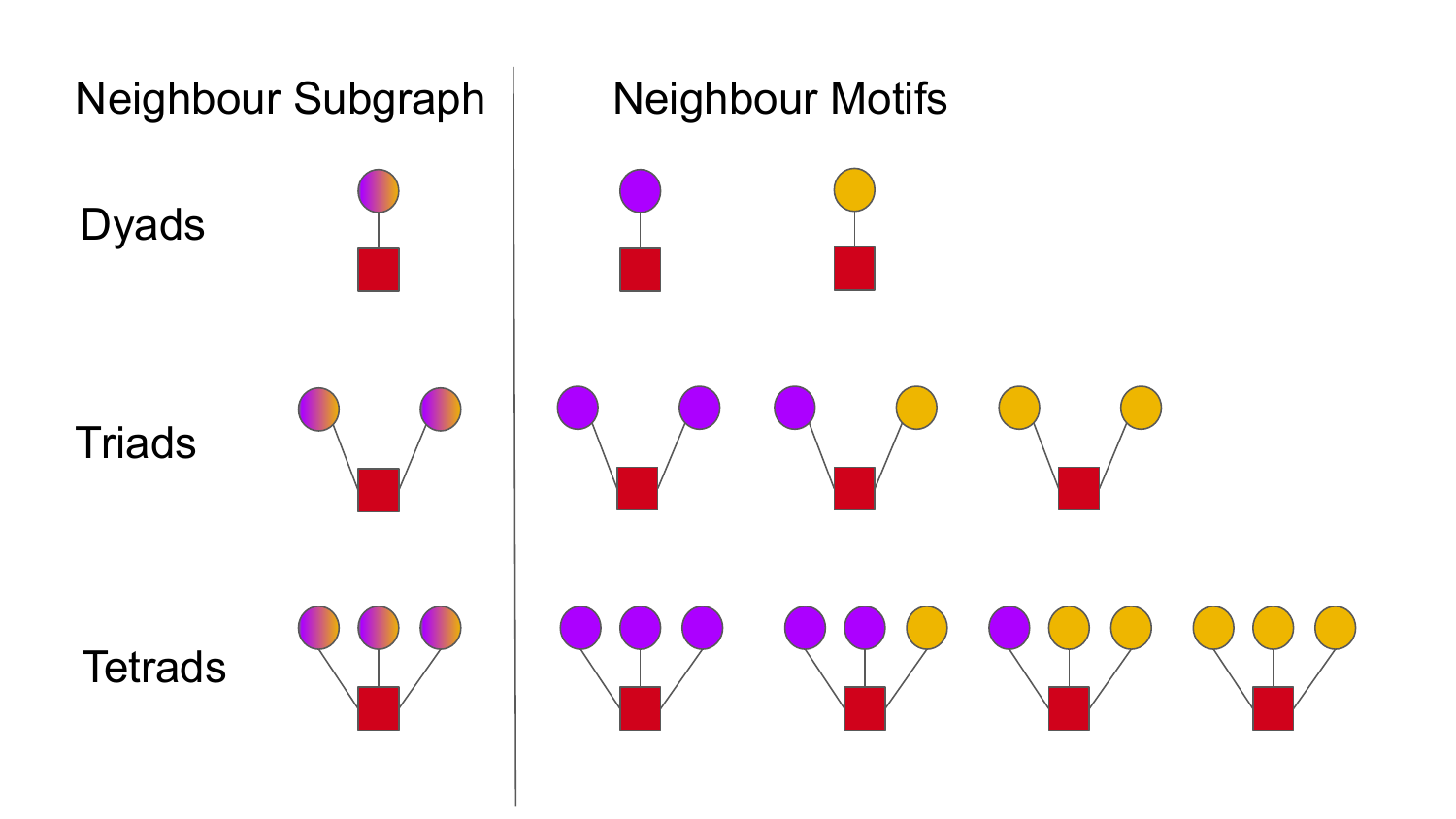}
    \caption{Examples of network motifs. Red square represent a node under consideration, and circles are its neighbours.  the left column lists possible subgraphs of the neighbourhood of a node. The right hand side depicts different motifs corresponding to these subgraphs. The yellow node represents $Z=0$ while purple represents $Z=1$. A motif is activated when its nodes are assigned the corresponding treatment assignment}
    \label{fig:net_motif}
\end{figure}

To model this setting, we consider outcomes to be a linear combination of \textit{influences} from small-sized ``network motifs", 
%
\begin{equation}
\label{eq:low_order_outcome}
         Y_i(\bz) =  c_{i}  + \sum_{ S \in \mathcal{SN}^\beta_i(\bA)} c_{i,S} \prod_{j \in S} \mathbb{I}[{z_j=1}] + \epsilon \tag{\textbf{A5}},
 \end{equation}
where $\mathcal{SN}_i(\bA)$ is a set of motifs of size up to $\beta$ for node $i$. It can equivalently be considered as a collection of subsets of $\N_i$ such that no set in $\mathcal{SN}_i(\bA)$ has size $> \beta$.


Note that, if we consider only dyads, i.e. we set $\beta=1$ and consider all such dyadic elements in $\N_i$ then $\textbf{A5}$ is equivalent to $\textbf{A1}$. Considering $\beta>1$ allows non-linear interaction between the nodes.




\paragraph{Estimator}
An important advantage of the estimators developed in the previous section is that they can be readily extended for the setting with non-linear interaction among the nodes. 
Specifically, let
\begin{align*}
\hat{\tau}^{\beta} &= \frac{1}{n} \sum_{i=1}^n Y_i \sum_{\substack{S \subseteq \M_i \\ |S| \leq \beta}} \Big( \prod_{j \in S} \frac{z_j-p}{p} - \prod_{j \in S} \frac{p - z_j}{1-p} \Big) 
\end{align*}

\begin{thm}
    Under \textbf{A2-5} and assuming $\mathcal{M}_i \supseteq \mathcal{N}_i$, $ \E [ \hat{\tau}^\beta ] = \tau(\vec{1},\vec{0})$, and $\hat{\tau}^\beta \overset{a.s.}{\longrightarrow}\tau(\vec{1},\vec{0}).$
\thlabel{thm:beta}
\end{thm}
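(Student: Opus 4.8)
The plan is to compute $\E[\hat\tau^\beta]$ in closed form, show it equals $\tau(\vec{1},\vec{0})$ for every $n$, and then obtain almost-sure convergence from a concentration argument in the same style as the proofs of \thref{thm:lin} and \thref{thm:dr}. First I would record that, because we are in an RCT with each $z_j$ drawn independently as a $\mathrm{Bernoulli}(p)$, assumptions \textbf{A2} (network ignorability) and \textbf{A4} (consistency) let me replace $Y_i$ by the fixed function $Y_i(\bZ)$ inside any expectation, so that $\E[Y_i\,g(\bZ)] = \E_{\bZ}[Y_i(\bZ)\,g(\bZ)]$ for any $g$. Writing $g_i(\bZ) = \sum_{S\subseteq \M_i,\,|S|\le\beta}\big(\prod_{k\in S}\tfrac{z_k-p}{p} - \prod_{k\in S}\tfrac{p-z_k}{1-p}\big)$ for the weight that $\hat\tau^\beta$ attaches to $Y_i$, and substituting \textbf{A5} with $\mathbb{I}[z_j=1]=z_j$, the quantity $\E[\hat\tau^\beta]$ becomes $\frac1n\sum_i\big(c_i\,\E[g_i(\bZ)] + \sum_{T\in\mathcal{SN}^\beta_i}c_{i,T}\,\E[\prod_{j\in T}z_j\,g_i(\bZ)] + \E[\epsilon\,g_i(\bZ)]\big)$.

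Next I would dispose of the two easy terms. The $S=\emptyset$ summand of $g_i$ is the vanishing difference $1-1$, and for $S\neq\emptyset$ independence and $\E[z_k]=p$ give $\E[\prod_{k\in S}\tfrac{z_k-p}{p}]=\E[\prod_{k\in S}\tfrac{p-z_k}{1-p}]=0$; hence $\E[g_i(\bZ)]=0$, which kills the $c_i$ term, and independence of the noise from $\bZ$ kills the $\epsilon$ term. The core of the proof is the motif term. Fixing a motif $T$ (so $T\subseteq\N_i\subseteq\M_i$ and $1\le|T|\le\beta$) and factoring $\E[\prod_{j\in T}z_j\cdot\prod_{k\in S}\tfrac{z_k-p}{p}]$ over the independent coordinates: every index in $S\setminus T$ contributes a factor $\E[\tfrac{z_k-p}{p}]=0$, so the expectation vanishes unless $S\subseteq T$; for $S\subseteq T$, using $z_k^2=z_k$, each index $k\in S$ contributes $\E[z_k\tfrac{z_k-p}{p}]=1-p$ and each index $j\in T\setminus S$ contributes $\E[z_j]=p$, giving $(1-p)^{|S|}p^{|T|-|S|}$; the analogous computation for $\prod_{k\in S}\tfrac{p-z_k}{1-p}$ gives $(-1)^{|S|}p^{|T|}$. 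Summing over all $S\subseteq T$ (all of which have $|S|\le|T|\le\beta$, so the $|S|\le\beta$ truncation in $\hat\tau^\beta$ discards nothing relevant), the binomial theorem yields $\sum_{S\subseteq T}(1-p)^{|S|}p^{|T|-|S|}=1$ and $\sum_{S\subseteq T}(-1)^{|S|}p^{|T|}=p^{|T|}(1-1)^{|T|}=0$, so $\E[\prod_{j\in T}z_j\,g_i(\bZ)]=1$. Therefore $\E[\hat\tau^\beta]=\frac1n\sum_i\sum_{T\in\mathcal{SN}^\beta_i}c_{i,T} = \frac1n\sum_i\big(Y_i(\vec{1})-Y_i(\vec{0})\big)=\tau(\vec{1},\vec{0})$. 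This is exactly where $\M_i\supseteq\N_i$ is needed: it guarantees every motif $T$ of node $i$ is itself one of the sets $S$ summed over, so that $\sum_{S\subseteq T}$ is a complete sum and the binomial identities close; if $\M_i$ missed a neighbor, $\sum_{S\subseteq T\cap\M_i}(1-p)^{|S|}p^{|T|-|S|}$ would no longer equal $1$ and the estimator would be biased.

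For the almost-sure claim I would reuse the route taken for \thref{thm:lin}/\thref{thm:dr}: under the paper's standing boundedness conditions (bounded $|\M_i|$, bounded motif coefficients, $p$ bounded away from $0$ and $1$, finite-variance noise), split $\hat\tau^\beta$ into its noise-free part $\frac1n\sum_i (Y_i-\epsilon)\,g_i(\bZ)$ and the residual $\frac1n\sum_i \epsilon\,g_i(\bZ)$. The noise-free part is a function of the independent variables $z_1,\dots,z_n$ whose bounded-difference constants are $O(1/n)$ — flipping one $z_j$ affects only the $O(1)$ indices $i$ with $j\in\M_i$, each by $O(1/n)$ — so McDiarmid's inequality gives $\P(|\hat\tau^\beta_{\text{nf}}-\E\hat\tau^\beta_{\text{nf}}|>t)\le 2e^{-\Omega(nt^2)}$, and Borel–Cantelli yields convergence to its ($n$-independent, by the above) mean $\tau(\vec{1},\vec{0})$; the residual is conditionally mean-zero with conditional variance $O(1/n)$, and a weighted SLLN sends it to $0$ almost surely. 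The main obstacle is the combinatorial identity in the second paragraph — pinning down that only $S\subseteq T$ contribute and that the two binomial sums collapse to $1$ and $0$; once that is in hand, unbiasedness is immediate and consistency is the same concentration boilerplate as in the linear case.
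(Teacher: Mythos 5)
Your unbiasedness argument is correct and is, at its core, the same computation as the paper's: the paper's Lemma \ref{lem:help_beta2} establishes exactly your identity $\E[\prod_{j\in T}z_j\,g_i(\bZ)]=1$ by factoring over the independent Bernoulli coordinates, observing that only $S\subseteq T$ survive, and collapsing the resulting sum with the binomial theorem (the paper combines your two sums into $p^{|T|}[(1/p)^{|T|}-(1-1)^{|T|}]$, which is the same pair of identities $1$ and $0$). You are in fact slightly more careful than the paper on one point: you explicitly note that the $S=\emptyset$ summand of $g_i$ vanishes and hence $\E[g_i(\bZ)]=0$ kills the baseline $c_i$ term, whereas the paper absorbs $c_i$ as the $S'=\emptyset$ motif and applies its lemma uniformly, even though the lemma's value is $0$ rather than $1$ when $|S'|=0$ (which is what is actually needed for the constant to cancel against $\tau(\vec{1},\vec{0})$). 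Where you genuinely diverge is the almost-sure claim: the paper bounds $\Var[\hat{\tau}^\beta]\le \frac{1}{n}Y^2_{\max}(p(1-p))^{-\beta}d_\M^{1+\beta}d_\N^{1+\beta}$ and then asserts a.s.\ convergence from the variance vanishing, which by itself only yields convergence in probability; your route via bounded differences, McDiarmid, and Borel--Cantelli for the noise-free part, plus a Kolmogorov-type SLLN for the independent noise residual, actually delivers the almost-sure statement under the same bounded-degree hypotheses, at the cost of needing the boundedness of $Y$ and of the in-degrees of $\M$ explicitly. Both approaches rely on the same structural facts, but yours closes a small rigor gap in the consistency half of the theorem.
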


\paragraph{Self-Normalized and Doubly Robust: }
Once again, expressing $\hat{\tau}^\beta$ using $\rho_i = z_j/p$ and $\rho'_i = \frac{1-z_j}{1-p}$,
\begin{align*}
\hat{\tau}^{\beta} 
&= \frac{1}{n} \sum_{i=1}^n Y_i \sum_{\substack{S \subseteq \M_i \\ |S| \leq \beta}} \Big( \prod_{j \in S} \big(\frac{z_j}{p} -1 \big) - \prod_{j \in S} \big(-1 + \frac{1 - z_j}{1-p} \big) \Big) \\
&= \frac{1}{n} \sum_{i=1}^n Y_i \sum_{\substack{S \subseteq \M_i \\ |S| \leq \beta}} \Big( \prod_{j \in S} \big(\rho_j -1 \big) - \prod_{j \in S} \big(-1 + \rho'_j \big) \Big).
\end{align*}
Let $\bar{\rho}_j$ and $\bar{\rho}'_j$ be the corresponding self normalized values of $\rho$ and $\rho'$, as in \ref{eq:self_normal_rho}. The self-normalized estimate for $\hat{\tau}^\beta$ is given by:
$$
\hat{\tau}^{\beta}_{\text{WIS}} = \frac{1}{n} \sum_{i=1}^n Y_i \sum_{\substack{S \subseteq \M_i \\ |S| \leq \beta}} \Big( \prod_{j \in S} \big(\bar{\rho}_j -1 \big) - \prod_{j \in S} \big(-1 + \bar{\rho}'_j \big) \Big)
$$

\begin{thm}
    Under \textbf{A2-5} and assuming $\mathcal{M}_i \supseteq \mathcal{N}_i$, 
    $ \E [\hat{\tau}^{\beta}_{WIS} ] \neq \tau(\vec{1},\vec{0})$, and $
    \hat{\tau}^{\beta}_{WIS} \overset{a.s.}{\longrightarrow}\tau(\vec{1},\vec{0})$
\thlabel{thm:betawis}
\end{thm}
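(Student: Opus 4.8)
\emph{Proof proposal.} The plan is to treat $\hat{\tau}^{\beta}_{\text{WIS}}$ exactly as \thref{thm:wis} treats the linear self‑normalized estimator: prove consistency by exhibiting $\hat{\tau}^{\beta}_{\text{WIS}}$ as an asymptotically vanishing perturbation of the \emph{unbiased} estimator $\hat{\tau}^{\beta}$ of \thref{thm:beta}, and prove that it is biased by a direct small‑instance computation. So the two claims get handled separately, and the a.s.\ limit is inherited from \thref{thm:beta}.

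For the consistency part, introduce the two normalizers $D_n = \frac1n\sum_j\frac1{|\mathcal{M}_j|}\sum_{k\in\mathcal{M}_j}\rho_k$ and $D'_n = \frac1n\sum_j\frac1{|\mathcal{M}_j|}\sum_{k\in\mathcal{M}_j}\rho'_k$, so that $\bar{\rho}_k = \rho_k/D_n$ and $\bar{\rho}'_k = \rho'_k/D'_n$. Reorganising the double sum gives $D_n = \frac1n\sum_k w_k\rho_k$ with $w_k = \sum_{j:\,k\in\mathcal{M}_j}|\mathcal{M}_j|^{-1}$, and one checks $\frac1n\sum_k w_k = 1$ identically, so $\E[D_n]=1$ (the $\rho_k$ being independent with mean $1$), and likewise $\E[D'_n]=1$. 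Under the bounded‑degree regularity already used to prove \thref{thm:beta} the $w_k$ are uniformly bounded, so $\E[(D_n-1)^4] = O(n^{-2})$ (a normalized sum of independent, mean‑zero, bounded increments), and Borel--Cantelli gives $D_n\overset{a.s.}{\longrightarrow}1$, and similarly $D'_n\overset{a.s.}{\longrightarrow}1$.

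Next I would bound $\hat{\tau}^{\beta}_{\text{WIS}}-\hat{\tau}^{\beta}$. Since $\rho_j\in\{0,1/p\}$ is bounded and, almost surely for $n$ large, $D_n,D'_n\in[\tfrac12,2]$, every factor $\bar{\rho}_j-1$ lies in a fixed compact set, and the elementary telescoping estimate for a product of at most $\beta$ bounded factors gives $\big|\prod_{j\in S}(\bar{\rho}_j-1)-\prod_{j\in S}(\rho_j-1)\big|\le C(\beta,p)\,|D_n-1|$ for every motif $S$ with $|S|\le\beta$, and symmetrically with $\rho'$ and $D'_n$. Summing over the boundedly‑many motifs at each node and over $i$ yields $\big|\hat{\tau}^{\beta}_{\text{WIS}}-\hat{\tau}^{\beta}\big|\le\big(\tfrac1n\sum_i|Y_i|\big)\,C'(\beta,p,m)\,(|D_n-1|+|D'_n-1|)$; since $\frac1n\sum_i|Y_i|=O(1)$ a.s.\ under the same moment conditions on $\epsilon$ used for \thref{thm:beta}, the right side $\to0$ a.s. Combined with $\hat{\tau}^{\beta}\overset{a.s.}{\longrightarrow}\tau(\vec1,\vec0)$ from \thref{thm:beta}, this gives $\hat{\tau}^{\beta}_{\text{WIS}}\overset{a.s.}{\longrightarrow}\tau(\vec1,\vec0)$. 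For the bias, $\hat{\tau}^{\beta}_{\text{WIS}}$ is a genuinely nonlinear (ratio) functional of the single realisation $\bz$, with $D_n,D'_n$ correlated with the numerator, so $\E$ does not commute with the normalisation and in general $\E[\hat{\tau}^{\beta}_{\text{WIS}}]\neq\E[\hat{\tau}^{\beta}]=\tau(\vec1,\vec0)$; to turn this into the stated strict inequality I would exhibit a minimal instance (one or two nodes, non‑degenerate outcome model), compute $\E[\hat{\tau}^{\beta}_{\text{WIS}}]$ in closed form over the finitely many assignments, and read off that it differs from $\tau$, exactly as for \thref{thm:wis} and the classical fact that self‑normalized importance sampling is biased \citep{rubin1987calculation,thomas2017importance}.

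The main obstacle is the third step: because the WIS estimator replaces the importance ratios \emph{inside} the motif products, it is \emph{not} simply $\hat{\tau}^{\beta}$ with all ratios divided by one scalar, so one must control $\prod_{j\in S}(\bar{\rho}_j-1)-\prod_{j\in S}(\rho_j-1)$ uniformly over motifs and nodes — which is precisely where the bounded‑degree and bounded‑outcome regularity inherited from \thref{thm:beta} enters. The fourth‑moment / Borel--Cantelli argument for $D_n,D'_n\to1$, and the Slutsky‑type combination at the end, are routine.
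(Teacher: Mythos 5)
Your proposal follows essentially the same route as the paper: the paper likewise shows the self-normalizing denominators converge almost surely to $1$ (via Kolmogorov's strong law plus the continuous mapping theorem, where you use a fourth-moment/Borel--Cantelli argument), concludes that $\hat{\tau}^{\beta}_{\text{WIS}}$ inherits the a.s.\ limit of the unbiased $\hat{\tau}^{\beta}$, and attributes the bias to the fact that $\bar{\rho}\neq\rho$ under expectation. Your version is, if anything, more careful than the paper's at the step where the normalizer perturbation is propagated through the motif products (the telescoping bound), which the paper glosses over.
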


Similarly, if we have access to a model for the counterfactual outcomes $f_0(X_i) \approx Y_i(\vec{0})$ and $f_1(X_i) \approx Y_i(\vec{1})$, we can create a DR estimator as
\begin{align}
\hat{\tau}^\beta_{DR} &= \frac{1}{n} \sum_{i=1}^n \tilde{Y}_i  \sum_{\substack{S \subseteq \M_i \\ |S| \leq \beta}} \Big( \prod_{j \in S} \big(\rho_j -1 \big) - \prod_{j \in S} \big(-1 + \rho'_j \big) \Big) \nonumber \\
& \quad \quad+ \frac{1}{n} \sum_{i=1}^n(f_1(X_i) - f_0(X_i)) 
\label{eq:beta_dr_def}
\end{align}
where $\tilde{Y}_i = Y_i -  z_i(f_1(X_i) - (1-z_i)f_0(X_i)$.
\begin{thm}
    Under \textbf{A2-5} and assuming $\mathcal{M}_i \supseteq \mathcal{N}_i$ , $\E[\hat{\tau}^\beta_{DR}] = \tau(\vec{1},\vec{0})$ and $\hat{\tau}_{DR}^\beta \overset{a.s.}{\longrightarrow}\tau(\vec{1},\vec{0})$. 
\thlabel{thm:betadr} 
\end{thm}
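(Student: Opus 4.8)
The plan is to reduce the claim to \thref{thm:beta}, which already gives that the un-robustified estimator $\hat{\tau}^{\beta}$ is unbiased and almost-surely consistent for $\tau(\vec{1},\vec{0})$. Write $w_i := \sum_{S\subseteq \M_i,\,|S|\le\beta}\big(\prod_{j\in S}(\rho_j-1)-\prod_{j\in S}(-1+\rho'_j)\big)$ for the motif weight attached to unit $i$, so that $\hat{\tau}^{\beta}=\frac1n\sum_i Y_i w_i$, and (using the intended reading $\tilde Y_i = Y_i - z_i f_1(X_i) - (1-z_i) f_0(X_i)$) expand
\[
\hat{\tau}^{\beta}_{DR} \;=\; \hat{\tau}^{\beta} \;+\; \frac1n\sum_{i=1}^n\Big[\big(f_1(X_i)-f_0(X_i)\big)-\big(z_i f_1(X_i)+(1-z_i) f_0(X_i)\big)\,w_i\Big] \;=\; \hat{\tau}^{\beta}+R_n .
\]
Since the covariates $X_i$ (hence the fixed functions $f_0,f_1$) are non-random in the Neyman framework and the $z_j$ are i.i.d.\ $\mathrm{Bernoulli}(p)$, it suffices to show (i) $\E[R_n]=0$, which combined with \thref{thm:beta} gives unbiasedness, and (ii) $R_n\to 0$ a.s., which combined with $\hat\tau^\beta\to\tau(\vec1,\vec0)$ gives consistency.

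For (i) the only real computation is $\E[z_i w_i]$ and $\E[(1-z_i)w_i]$. Using independence of the $z_j$ and $\E[\rho_j-1]=\E[-1+\rho'_j]=0$: every term of $w_i$ indexed by a set $S$ with $i\notin S$ contributes $\E[z_i]\cdot 0=0$ to $\E[z_iw_i]$ (the empty set contributes $\E[z_i(1-1)]=0$), and every term with $i\in S$, $|S|\ge 2$, factors as $\E[z_i(\rho_i-1)]\prod_{j\in S\setminus\{i\}}\E[\rho_j-1]=0$ and likewise for the $\rho'$ product. Only $S=\{i\}$ survives; since $z_i^2=z_i$ and $z_i(1-z_i)=0$ one gets $\E[z_i(\rho_i-1)]=1-p$ and $\E[z_i(-1+\rho'_i)]=-p$, hence $\E[z_i w_i]=(1-p)-(-p)=1$; the analogous computation gives $\E[(1-z_i)w_i]=(p-1)-p=-1$. (Here we use $i\in\N_i\subseteq\M_i$, guaranteed by the self-loop convention $A_{ii}=1$.) Therefore $\E\big[(z_i f_1(X_i)+(1-z_i) f_0(X_i))w_i\big]=f_1(X_i)-f_0(X_i)$, so each summand of $R_n$ --- and hence $R_n$ --- has mean zero; together with $\E[\hat\tau^\beta]=\tau(\vec1,\vec0)$ this yields $\E[\hat\tau^\beta_{DR}]=\tau(\vec1,\vec0)$.

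For (ii), observe that the $i$-th summand of $R_n$ is a bounded function (using boundedness of $f_0,f_1$, of the outcomes, bounded neighbourhoods $|\M_i|\le d_{\max}$, and $p$ bounded away from $0$ and $1$) of the treatments $\{z_j: j\in\M_i\}$ only; hence two summands are independent whenever the corresponding supersets are disjoint, so $R_n$ is an average over a mean-zero sequence with a sparse dependency graph. The same law-of-large-numbers argument used for the consistency of $\hat\tau^\beta$ in \thref{thm:beta} (bounding $\VAR(R_n)=O(1/n)$ via the dependency structure and passing to the a.s.\ statement through a subsequence / Borel--Cantelli step) gives $R_n\to 0$ almost surely, whence $\hat\tau^\beta_{DR}=\hat\tau^\beta+R_n\to\tau(\vec1,\vec0)$ a.s.

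The expansion of $\tilde Y_i w_i$ is routine; the only content is the identity $\E[z_i w_i]=1=-\E[(1-z_i)w_i]$, i.e.\ that the motif weights act as an exact inverse-propensity ``selector'' for the observed arm at unit $i$, so the control-variate terms cancel in expectation for \emph{arbitrary} $f_0,f_1$ (explaining why DR is unbiased regardless of model accuracy). I expect the main obstacle to be bookkeeping: carefully verifying that all summands with $S\neq\{i\}$ vanish (in particular $S\ni i$ with $|S|\ge 2$, and $S=\emptyset$), and that the consistency argument inherited from \thref{thm:beta} applies verbatim to the extra $R_n$ terms, which requires the same boundedness and bounded-degree regularity conditions used there.
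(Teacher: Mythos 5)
Your proposal is correct and follows essentially the same route as the paper: both reduce the claim to \thref{thm:beta} by showing the control-variate correction has zero mean via the key identity $\E[z_i w_i]=1$ and $\E[(1-z_i)w_i]=-1$ (the paper obtains this from its Lemma 3 applied with $S'=\{i\}$, while you compute it directly by isolating the $S=\{i\}$ term), and both inherit almost-sure consistency from the variance analysis of $\hat{\tau}^{\beta}$ with the outcomes recentred by the bounded models $f_0,f_1$. Your reading of $\tilde Y_i$ also matches the one the paper actually uses in its proof.
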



\paragraph{Application to General Non-Linearity}
Finally, the motif model in \ref{eq:low_order_outcome} paves the way for a general non-linear structural model.
In the following, we will use the index $j,k$ to refer to neighbors of node $i$. The dependence of their ranges and indices on $i$ is implicit and suppressed for notational ease.
Let the outcome for node $i$ be determined by the function $g_i$.
If $g_i$ is an analytic function, using Taylor-polynomials,
{\small
\begin{align}
Y_i(\bz) &=  g_i(z_1,z_2,..z_{|\mathcal{N}_i|}) \nonumber\\
  &= g_i(\vec{0}) +  \sum_{z_j}\left( \partial_{z^j} g_i\right) z_j  + 
  \sum_{z_j,z_k}\!\!\left(\partial^2_{z_j,z_k} g_i\right) z_jz_k  + O(\bz^3) \nonumber\\
  &\approx g_i(\vec{0}) + \!\! \sum_{z_j} \!\left(\partial_{z^j} g_i\right) z_j  + 
  \!\!\sum_{z_j,z_k} \!\!\!\left(\partial^2_{z_j,z_k} g_i\right) z_jz_k,  \label{eqn:nonlinear}
\end{align}
}%
%
which is of the same form as \textbf{A5} with $\beta=2$. Therefore, for interference under a general non-linear structural model, the estimate from $\hat \tau^\beta$ with $\beta=2$ may be biased but when $g_i$ can be assumed to be sufficiently smooth, its bias can be bounded.
%
\begin{thm} For the non-linear model \ref{eqn:nonlinear}, if $g_i$ is $k+1$ times differentiable and the $k+1$th derivative is bounded by $C$, then the absolute bias  $$\left|\E[\hat\tau^\beta] - \tau(\vec{1},\vec{0})\right| \leq \dfrac{C\max(p,1-p)}{(k+1)!}.$$
\thlabel{thm:bias}
\end{thm}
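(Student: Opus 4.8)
The plan is to compute $\E[\hat\tau^\beta]$ exactly and recognize it as a degree-$\beta$ Taylor truncation, so that the bias becomes a Taylor remainder that Lagrange's form of the theorem controls. For each unit $i$ introduce the univariate \emph{response curve} $\phi_i(q) := \E_{\bz\sim\mathrm{Bern}(q)^{\otimes n}}[g_i(\bz)]$, the expected outcome of $i$ when every unit is independently treated with probability $q$. Since a draw from $\mathrm{Bern}(1)$ (resp. $\mathrm{Bern}(0)$) is deterministically $\vec 1$ (resp. $\vec 0$), we have $\phi_i(1)=Y_i(\vec 1)$, $\phi_i(0)=Y_i(\vec 0)$, hence $\tau(\vec 1,\vec 0)=\tfrac1n\sum_i(\phi_i(1)-\phi_i(0))$. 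The target identity is $\E[\hat\tau^\beta]=\tfrac1n\sum_i(P^\beta_i(1)-P^\beta_i(0))$, where $P^\beta_i(x)=\sum_{m=0}^{\beta}\tfrac{\phi_i^{(m)}(p)}{m!}(x-p)^m$ is the degree-$\beta$ Taylor polynomial of $\phi_i$ \emph{about $p$}; the bias is then $\tfrac1n\sum_i\big[(\phi_i-P^\beta_i)(1)-(\phi_i-P^\beta_i)(0)\big]$, a difference of two Taylor remainders.

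The computational core is a single-bit identity, using that the $z_j$ are independent $\mathrm{Bern}(p)$: for any function $h$ of one bit, $\E_{z_j}[h(z_j)\tfrac{z_j-p}{p}]=(1-p)\Delta_j h$ and $\E_{z_j}[h(z_j)\tfrac{p-z_j}{1-p}]=-p\,\Delta_j h$, where $\Delta_j h=h(1)-h(0)$. Iterating over the coordinates of a set $S$ (independence lets the ratios pull out and be peeled off one at a time) gives $\E_{\bz}[g_i(\bz)\prod_{j\in S}\tfrac{z_j-p}{p}]=(1-p)^{|S|}\E_{z_{-S}}[\Delta_S g_i]$ and likewise $(-p)^{|S|}\E_{z_{-S}}[\Delta_S g_i]$ for the second product, where $z_{-S}$ denotes the coordinates outside $S$ and $\Delta_S=\prod_{j\in S}\Delta_j$ is the iterated mixed difference. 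Hence $\E[g_i\, w_S]=\big[(1-p)^{|S|}-(-p)^{|S|}\big]\E_{z_{-S}}[\Delta_S g_i]$ for the weight $w_S=\prod_{j\in S}\tfrac{z_j-p}{p}-\prod_{j\in S}\tfrac{p-z_j}{1-p}$; the $S=\emptyset$ term has coefficient $1-1=0$, and $\Delta_S g_i\equiv 0$ whenever $S\not\subseteq\N_i$, so replacing $\N_i$ by the superset $\mathcal{M}_i$ in the estimator changes nothing. Writing $g_i$ on the cube in its unique multilinear form $g_i(\bz)=\sum_{T\subseteq\N_i}a_{i,T}\prod_{j\in T}z_j$ gives $\phi_i(q)=\sum_T a_{i,T}q^{|T|}$, and a direct count yields $\sum_{|S|=m}\E_{z_{-S}}[\Delta_S g_i]=\sum_{|T|\ge m}a_{i,T}\binom{|T|}{m}p^{|T|-m}=\tfrac1{m!}\phi_i^{(m)}(p)$. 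Combining the weights $(1-p)^m-(-p)^m$ with these, and splitting the $(1-p)^m$ and $(-p)^m$ pieces, gives exactly $\E[\hat\tau^\beta]=\tfrac1n\sum_i(P^\beta_i(1)-P^\beta_i(0))$. (Any independent zero-mean noise drops out: $\E[\epsilon_i w_S]=\E[\epsilon_i]\E[w_S]=0$.)

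To finish, apply Taylor's theorem with Lagrange remainder: for each $x$ there is $\xi_x$ between $p$ and $x$ with $\phi_i(x)-P^\beta_i(x)=\tfrac{\phi_i^{(\beta+1)}(\xi_x)}{(\beta+1)!}(x-p)^{\beta+1}$. The smoothness hypothesis on $g_i$ (with $k=\beta$) gives $|\phi_i^{(\beta+1)}|\le C$ on $[0,1]$, since $\phi_i$ is obtained from $g_i$ by averaging/restriction and inherits the bound on the relevant $(\beta+1)$-st derivative. Therefore $\big|\E[\hat\tau^\beta]-\tau(\vec 1,\vec 0)\big|\le\tfrac{C}{(\beta+1)!}\cdot\tfrac1n\sum_i\big((1-p)^{\beta+1}+p^{\beta+1}\big)=\tfrac{C}{(\beta+1)!}\big((1-p)^{\beta+1}+p^{\beta+1}\big)$, and the proof closes with the elementary inequality $(1-p)^{\beta+1}+p^{\beta+1}\le\max(p,1-p)$ for $\beta\ge 1$: taking WLOG $p\ge\tfrac12$, $(1-p)^{\beta+1}+p^{\beta+1}\le(1-p)^2+p^2=1-2p(1-p)\le p$, the last step being equivalent to $(2p-1)(p-1)\le 0$. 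Since $k=\beta$, this is the claimed bound.

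The main obstacle is the second paragraph: carrying out the exact expectation and, in particular, recognizing that $\E[\hat\tau^\beta]$ is \emph{precisely} the degree-$\beta$ Taylor truncation of $\phi_i$ about $p$ (not about $0$), which is what makes Lagrange's remainder applicable at all; this hinges on the identity $\sum_{|S|=m}\E[\Delta_S g_i]=\phi_i^{(m)}(p)/m!$. A secondary subtlety worth flagging is the precise reading of ``the $(k+1)$-st derivative of $g_i$ is bounded by $C$'': a naive multivariate interpretation would let $\phi_i^{(\beta+1)}$ pick up a combinatorial $|\N_i|^{\beta+1}$ factor absent from the stated bound, so the dimension-free claim requires interpreting the hypothesis as a bound on the derivative of the induced univariate response curve (equivalently, on $g_i$ along the all-treated direction), which is the natural quantity given the expansion \ref{eqn:nonlinear}. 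The single-bit identities, the harmlessness of $\mathcal{M}_i\supseteq\N_i$, and the noise dropping out are all routine.
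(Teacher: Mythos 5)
Your proof is correct, and it takes a genuinely different route from the paper's. The paper Taylor-expands the multivariate function $g_i$ around $\vec 0$ (and symmetrically around $\vec 1$), invokes the already-proved unbiasedness of $\hat\tau^\beta$ for the degree-$\le k$ polynomial part, and then bounds the contribution of the multivariate Lagrange remainder by directly computing $\E\bigl[z_1\cdots z_{k+1}\sum_{S}\Psi_S\bigr]=p^{k+1}\bigl(p^{-(k+1)}-(\tfrac1p-1)^{k+1}+(-1)^{k+1}\bigr)$ via its Lemma on subset expectations; the two expansions contribute $O(Cp/(k+1)!)$ and $O(C(1-p)/(k+1)!)$ respectively. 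You instead prove the exact identity $\E[\hat\tau^\beta]=\tfrac1n\sum_i\bigl(P^\beta_i(1)-P^\beta_i(0)\bigr)$, where $P^\beta_i$ is the degree-$\beta$ Taylor polynomial \emph{about $p$} of the univariate response curve $\phi_i(q)=\E_{\mathrm{Bern}(q)}[g_i]$, so the bias collapses to two one-dimensional Lagrange remainders evaluated at $q=0$ and $q=1$. Your route buys an exact closed form for the expectation (making the bias mechanism transparent), a slightly sharper intermediate bound $\tfrac{C}{(k+1)!}\bigl(p^{k+1}+(1-p)^{k+1}\bigr)$ before relaxing to $\max(p,1-p)$, and it sidesteps the paper's somewhat informal single-term notation for the multivariate remainder; the paper's route buys economy, since it reuses Lemma~\ref{lem:help_beta2} and the motif-model unbiasedness theorem verbatim. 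Both arguments rest on the same reading of the hypothesis ``the $(k{+}1)$-th derivative of $g_i$ is bounded by $C$'' as a dimension-free bound (on $\phi_i^{(k+1)}$ in your case, on the collapsed multivariate remainder coefficient in the paper's); you flag this explicitly, which the paper does not, and it is the right caveat to raise.
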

\begin{figure*}[th!]
    \centering
    \begin{subfigure}[b]{0.25\textwidth}
    \centering
    \includegraphics[width=\textwidth]{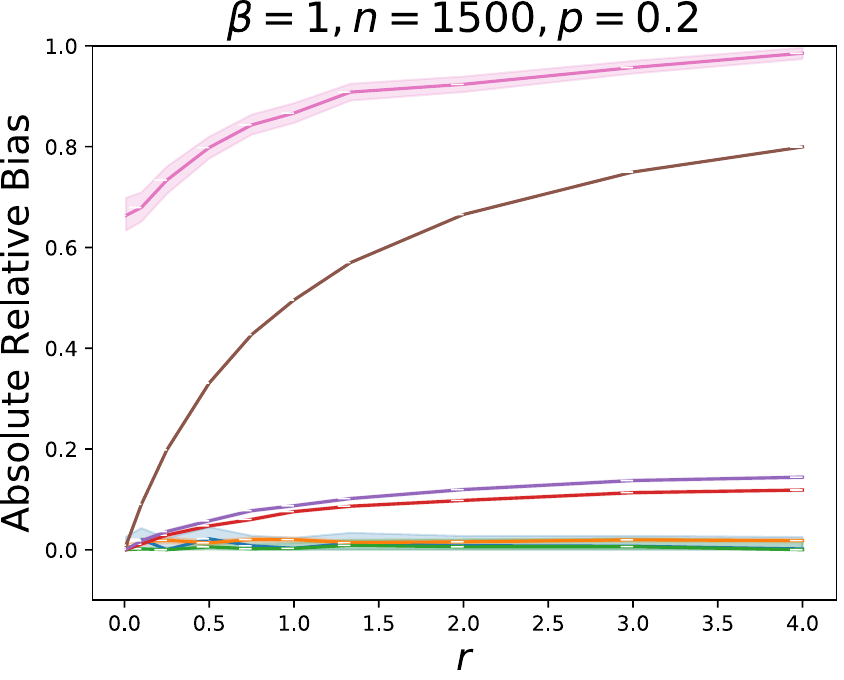}
    \includegraphics[width=\textwidth]{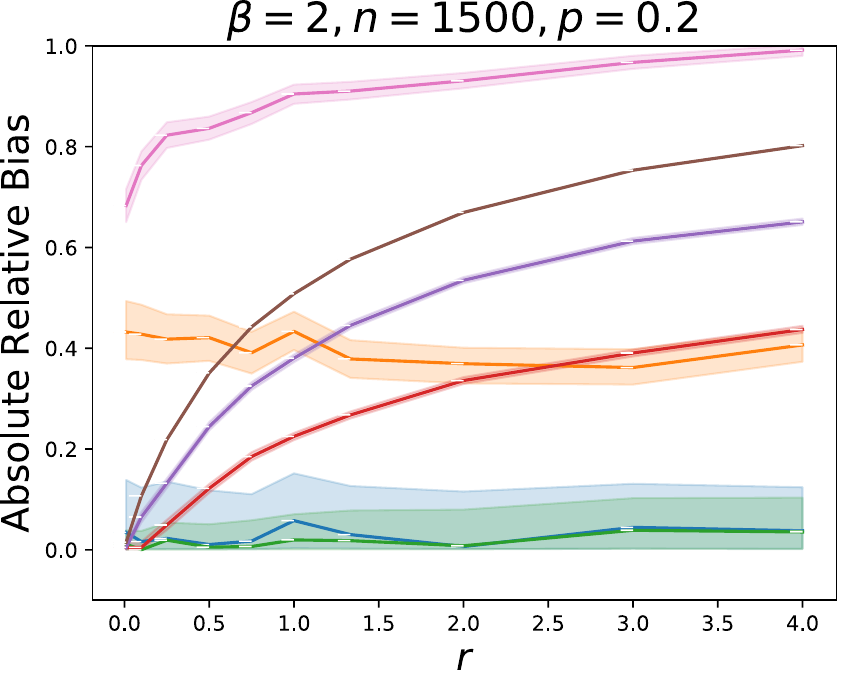}
    \includegraphics[width=\textwidth]{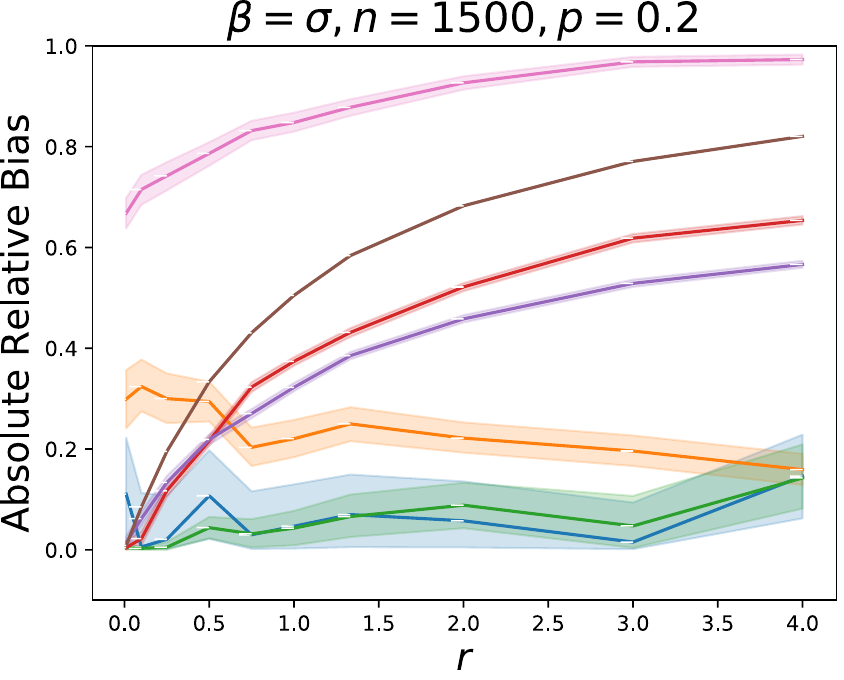}
    \caption{Direct/indirect effects: r }  \label{fig:ratioER}
    \end{subfigure}
    ~
    \begin{subfigure}[b]{0.25\textwidth}
    \centering
    \includegraphics[width=\textwidth]{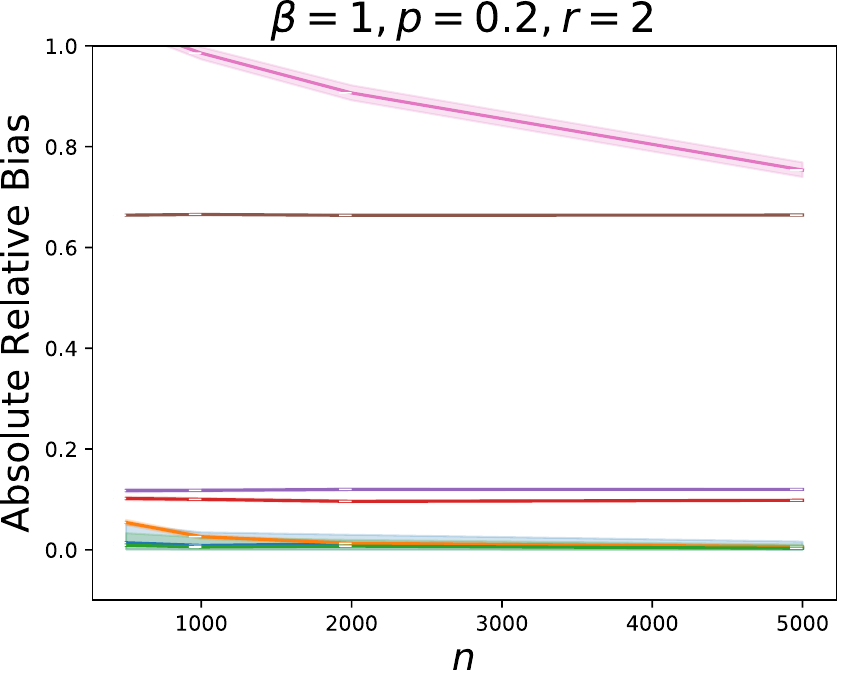}
    \includegraphics[width=\textwidth]{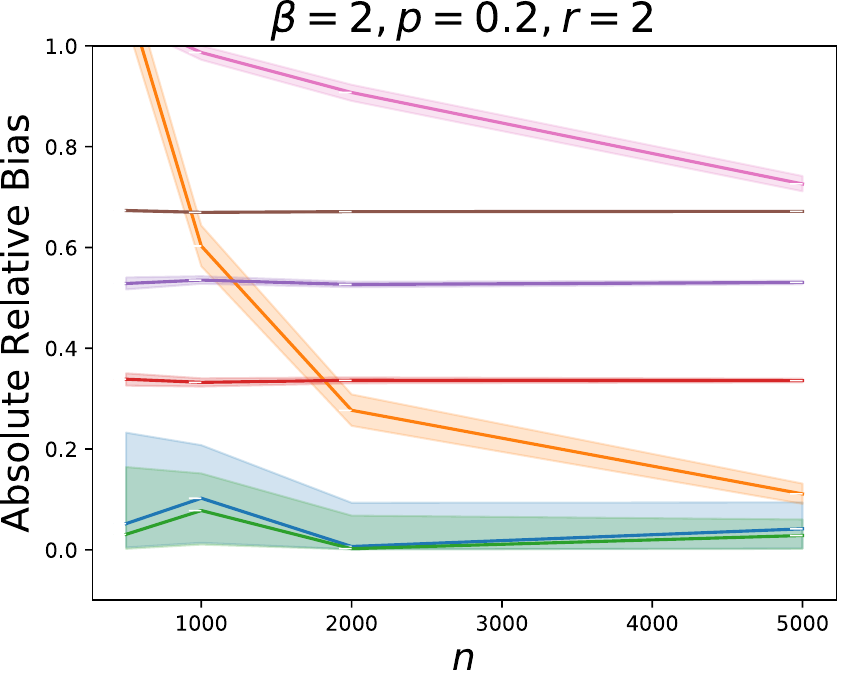}
    \includegraphics[width=\textwidth]{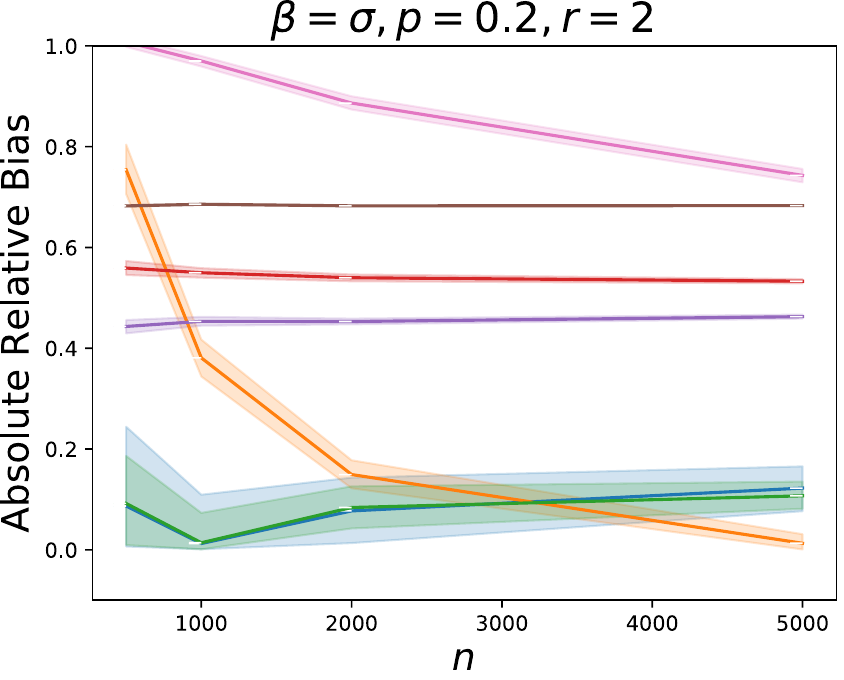}
    \caption{Population size: n}  \label{fig:sizeER}
    \end{subfigure}
    ~
    \begin{subfigure}[b]{0.25\textwidth}
    \centering
    \includegraphics[width=\textwidth]{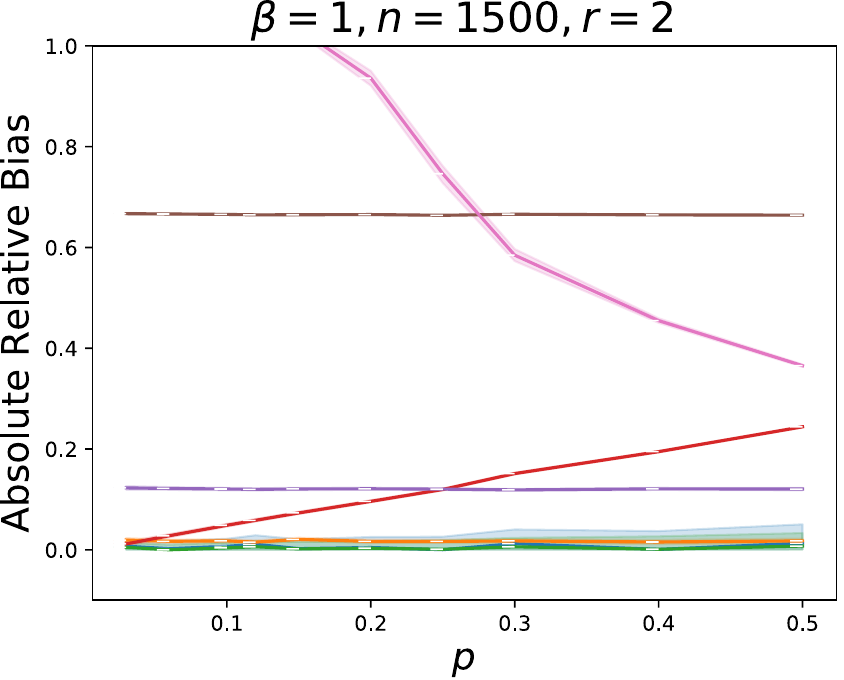}
    \includegraphics[width=\textwidth]{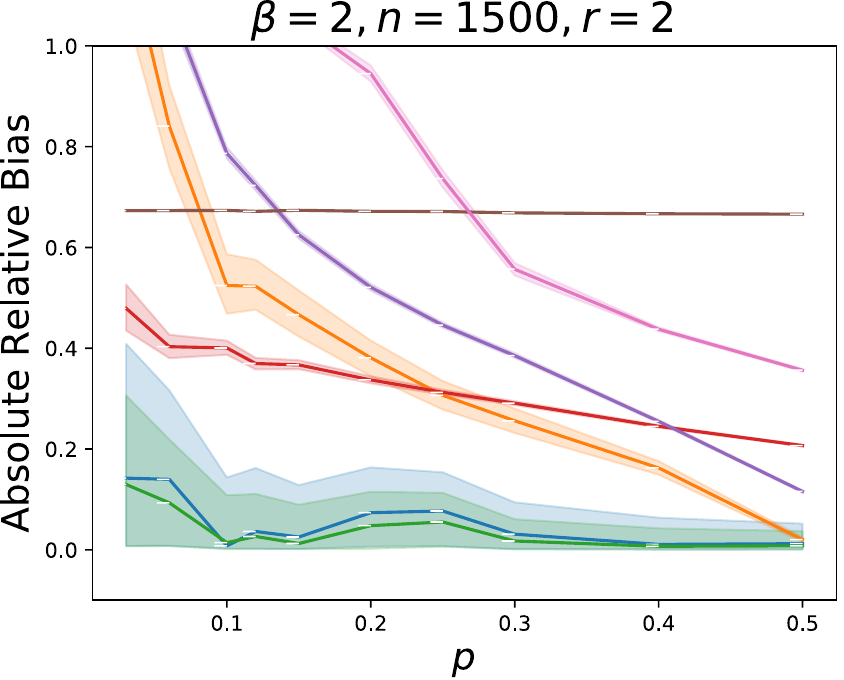}
    \includegraphics[width=\textwidth]{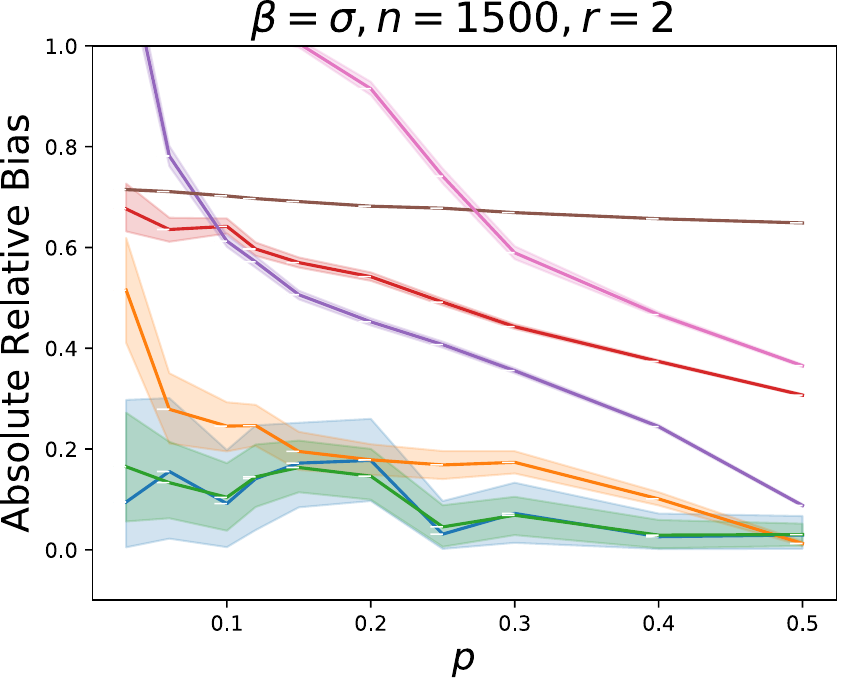}
     \caption{Treatment budget: p}  \label{fig:pER}
    \end{subfigure}
    \begin{subfigure}[b]{0.9\textwidth}
    \includegraphics[width=\textwidth]{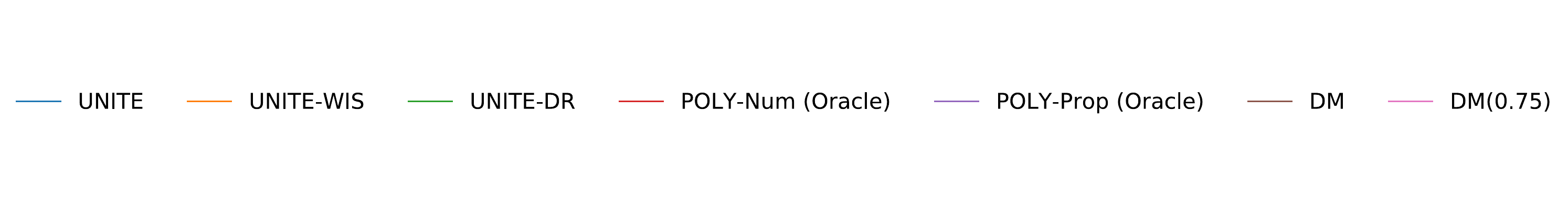}
    \end{subfigure}
    \vspace{-20pt}
    \caption{Performance of GATE estimators under Bernoulli design on
Erdos-Renyi networks. Y-axes represent the relative value of the absolute bias i.e. $\left| \frac{\E[\hat{\tau} - \tau(\vec{1},\vec{0})]}{\tau} \right|$, with the shaded width corresponding to the experimental deviation.  The rows correspond to linear, quadratic and sigmoid model for the potential outcomes.
Columns correspond to different parameters being varied: (a) Strength of interference,  where the x-axis corresponds to the average ratio of indirect to direct effects $r = \frac{1}{n}\sum_{i,j}\frac{|c_{ij}|}{{c_{ii}}|}$. (b) Population size,  where the x-axis corresponds to the number of nodes $n$ (c) Treatment budget, where the x-axis corresponds to the probability of treatment 1 ($p$).
\label{fig:erdos_combined}}
\end{figure*}

\subsection{Statistical Inference}
The results till now were focused with providing point-estimates of the treatment effect. However, in practice, one needs reasonable confidence intervals around these estimates, to handle statistical uncertainty and perform hypothesis tests to verify assumptions. For this purpose, we first provide bounds for variance of the unbiased estimator $\hat{\tau}^\beta$.

\begin{thm}
    Under \textbf{A2-5} and further assuming that $\mathcal{M}_i \supseteq \mathcal{N}_i$, if the $\epsilon$ noise have variance $\sigma$ then, $ \Var[\hat{\tau}^{\beta} ] \in O(1/n)$
\thlabel{thm:var_beta}
\end{thm}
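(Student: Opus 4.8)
The plan is to write $\hat{\tau}^{\beta} = \frac{1}{n}\sum_{i=1}^n W_i$ with
$W_i := Y_i D_i$ and $D_i := \sum_{S\subseteq\M_i,\,|S|\le\beta}\big(\prod_{j\in S}\frac{z_j-p}{p}-\prod_{j\in S}\frac{p-z_j}{1-p}\big)$, and to expand
\[
\Var[\hat{\tau}^{\beta}] \;=\; \frac{1}{n^2}\sum_{i=1}^n\sum_{i'=1}^n \Cov(W_i,W_{i'}).
\]
The core of the argument is a local-dependence observation. Under \textbf{A4}--\textbf{A5}, $Y_i$ is a (measurable) function of $\big(\{z_j\}_{j\in\N_i},\epsilon_i\big)$ and $D_i$ is a function of $\{z_j\}_{j\in\M_i}$; since $\N_i\subseteq\M_i$, the summand $W_i$ is measurable with respect to $\sigma\big(\{z_j\}_{j\in\M_i},\epsilon_i\big)$. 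Because the $z_j$ are independent $\mathrm{Bernoulli}(p)$ and the $\epsilon_i$ are independent zero-mean noises independent of $\bZ$, whenever $\M_i\cap\M_{i'}=\emptyset$ and $i\neq i'$ the variables $W_i$ and $W_{i'}$ are functions of disjoint collections of mutually independent random variables, hence $\Cov(W_i,W_{i'})=0$. Therefore only "graph-local" pairs survive, and $\Var[\hat{\tau}^{\beta}] = \frac{1}{n^2}\sum_i\sum_{i':\,\M_i\cap\M_{i'}\neq\emptyset}\Cov(W_i,W_{i'})$.

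Next I would bound each surviving covariance by a constant independent of $n$. By Cauchy--Schwarz, $|\Cov(W_i,W_{i'})|\le\sqrt{\Var(W_i)\Var(W_{i'})}\le\max_i\E[W_i^2]$. Each product $\prod_{j\in S}\frac{z_j-p}{p}$ takes only the values obtained from $\frac{z_j-p}{p}\in\{\tfrac{1-p}{p},-1\}$, so $\big|\prod_{j\in S}\frac{z_j-p}{p}\big|\le\kappa^\beta$ with $\kappa:=\max(p,1-p)/\min(p,1-p)$, and similarly for the other product; hence $\|D_i\|_\infty \le 2\big(\sum_{s=0}^{\beta}\binom{|\M_i|}{s}\big)\kappa^\beta$. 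Invoking the standard bounded-degree regime in which the $\Omega(1/n)$ minimax rate is stated, i.e.\ $\max_i|\M_i|\le d$ with $d$ fixed, this is $O(1)$. Likewise, under \textbf{A5} with bounded motif coefficients and bounded neighbourhoods, $\E[Y_i^2]\le\E\big[(c_i+\sum_S c_{i,S}\prod_{j\in S}\mathbb{I}[z_j{=}1])^2\big]+\sigma = O(1)$; since $\epsilon_i$ is independent of $D_i$ and of $\{z_j\}_{j\in\N_i}$, writing $Y_i=\mu_i+\epsilon_i$ gives $\E[W_i^2]=\E[\mu_i^2 D_i^2]+\sigma\,\E[D_i^2]\le\|D_i\|_\infty^2\big(\E[\mu_i^2]+\sigma\big)\le M$ for a constant $M$ depending only on $p,\beta,d$, the coefficient bounds and the noise variance, but not on $n$.

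Finally I would count the local pairs: letting $\delta:=\max_k|\{i:k\in\M_i\}|$ (bounded together with $d$ in this regime), each $i$ has at most $\sum_{k\in\M_i}|\{i':k\in\M_{i'}\}|\le d\delta$ indices $i'$ with $\M_i\cap\M_{i'}\neq\emptyset$, so the double sum has at most $n\,d\,\delta$ nonzero terms. Combining the two estimates,
\[
\Var[\hat{\tau}^{\beta}] \;\le\; \frac{1}{n^2}\cdot n\,d\,\delta\cdot M \;=\; \frac{d\,\delta\,M}{n}\;\in\;O(1/n),
\]
consistent with the a.s.\ convergence of $\hat{\tau}^{\beta}$ already established earlier. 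The only real obstacle is the bookkeeping needed to make the "$O(1)$ per covariance, $O(n)$ surviving pairs" claims fully rigorous --- in particular, surfacing the implicit regularity conditions (bounded maximum degree of both $\N$ and $\M$, bounded motif coefficients, finite noise variance) that make the constants genuinely $n$-free; once those are fixed the proof is just Cauchy--Schwarz together with the independence/locality argument.
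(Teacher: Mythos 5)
Your proof is correct and arrives at the paper's $O(1/n)$ rate, but by a genuinely different route. The paper starts from $\Var[\hat\tau^\beta]\leq\E[(\hat\tau^\beta)^2]$, substitutes the motif expansion of $Y_i$, and evaluates every cross term $\E[Z(T)Z(T')\Psi(S)\Psi(S')]$ via a dedicated product-expectation lemma (Lemma~\ref{lem:help_beta_cov}), extracting the support condition $S\Delta S'\subseteq T\cup T'$ and the explicit constant $(p(1-p))^{-\beta}d_\M^{\beta+1}d_\N^{\beta+1}$, with the $\epsilon$ noise handled by a second pass of the same computation. You instead decompose the variance into covariances $\Cov(W_i,W_{i'})$, annihilate all pairs with $\M_i\cap\M_{i'}=\emptyset$ by a measurability/independence argument, and bound the $O(n)$ surviving covariances by Cauchy--Schwarz plus a sup-norm bound on the weight $D_i$. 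Your version is shorter, needs no combinatorial lemma, and cleanly sidesteps a looseness in the paper's write-up: once $\Var$ is replaced by $\E[X^2]$, the ``distant'' pairs contribute $\E[W_i]\,\E[W_{i'}]\neq 0$ (in aggregate roughly $\E[\hat\tau^\beta]^2$), so they can only be ``ignored'' at the level of covariances --- which is exactly what your decomposition does. What the paper's route buys in exchange is sharper, fully explicit constants in $p$, $\beta$, $d_\M$, $d_\N$ and $\sigma$ (your constant carries an extra $\bigl(\sum_{s\le\beta}\binom{d}{s}\bigr)^2\kappa^{2\beta}$-type factor from the crude sup-norm bound on $D_i$), which the paper later leans on for its Wald-interval and minimax-optimality discussion. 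Both arguments require the same implicit regularity --- bounded maximum degree of $\N$ and $\M$, bounded motif coefficients, finite noise variance --- which you surface explicitly and the paper invokes only at the end; making those hypotheses part of the theorem statement would be the one remaining piece of bookkeeping in your proof.
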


Thus the UNITE estimator is consistent. We would like to highlight that the dependence is exponential in $d_\M^\beta$, where $d$ is related to the degrees of nodes in the graph. Incorporating higher order motifs can lead to higher variance, highly dense graphs have high variance and overall consistency requires a growth constraint on the graph density/degree. For the exact dependence on various graph parameters, we refer the reader to the Appendix. Similar bounds also hold for the DR and WIS estimators.

Next, we argue that this estimator is asymptotically normal. For this we rely on a classic result in generalized central limit theorems \citep{ross2011fundamentals}. Informally, for a set of $n$ bounded random variables $R_i$, if the dependency graph is not too dense, then the variance normalized sum approaches a normal distribution. The dependence between the variables is represented by the neighbourhoods in  $\M$. As such if $\M$ is not too dense, $\hat{\tau}^\beta$ is asymptotically normal.

Combined with the variance bound, the normality results suggests a way to do statistical inference. Specifically, since the variance formula is an upper bound, we can construct conservative Wald-type intervals \citep{wasserman2006all} without requiring the knowledge of the underlying graph. We should note however, that since the convergence is asymptotic, the use of the aforementioned variance for confidence intervals is only approximately valid. However, insights from \citet{sussman2017elements} affirm the minimax optimality of this bound concerning its dependence on $p$, $\sigma$, and the max-degree of $\M$ $\delta$. Consequently, these intervals maintain a robust reliability, yielding a level of precision considerably superior to that of the HT estimate \citep{AronowSamii17}.


\vspace{-5pt}
\section{EXPERIMENTS}

\subsection{Synthetic Interference Graphs}
In this section, we present synthetic experiments with interference on Erdos-Renyi graphs. The Erdos-Renyi (ER) model is commonly used for analyzing interaction networks in 
various experimental settings, particularly in the realm of social media \citep{seshadhri2012community} and epidemic control \citep{kephart1992directed,wang2003epidemic}. 
In social media platforms, where connections form organically, ER graphs provide a reasonable simulation of how friendships, followerships, or interactions might evolve in an online community \citep{erdhos1960evolution}. Additionally, in the context of epidemic control, ER graphs serve as valuable models for studying disease spread \citep{wang2003epidemic}. 

We simulate 50 different random Erdos-Renyi Graphs and run repeated experiments on these graphs with random treatment assignments. For these experiments, we provide an ablation study by varying the treatment probability, the strength of interference, and the size of the graphs to assess the efficacy of estimation across different ranges of parameters. We experiment with a linear outcome model $(\beta=1)$, a quadratic model $(\beta=1)$, and a non-polynomial model (labeled as $\beta=\sigma$).

\textbf{Baselines}
In our evaluation, we gauge the effectiveness of our proposed method by comparing it against commonly employed estimators such as polynomial regression (Poly), difference-in-means (DM) estimators. Since the polynomial regression model needs exact neighborhoods, we provide them with oracle access to the true interaction graph.\footnote{Due to incorporating large neighbourhoods, $\tau_{\text{HT}}$ failed to yield non-meaningful results in any trial.}

In Figure \ref{fig:erdos_combined}, we illustrate the relative bias of different estimators across a range of parameters.
Figure 3a plots the treatment effect estimate against the strength of interference,  \ref{fig:erdos_combined}b is for varying the network size $n$, and \ref{fig:erdos_combined}c plots it against treatment probability $p$.

As established in \thref{thm:lin}, UNITE models produce unbiased estimates for the linear (first row) and quadratic (second row) outcomes.  Note that in Figure \ref{fig:erdos_combined}a, when $r=0$, there is no interference, and hence most estimators are unbiased. However, when interference increases, baselines suffer from severe bias. 

As established in \thref{thm:wis}, the self-normalized version UNITE-WIS is biased, but its bias reduces as the number of nodes increases. It also shows a lower variance than UNITE as is common for self-normalized IS estimators. Finally, following \thref{thm:dr}, we see that UNITE-DR remains unbiased and shows some variance reduction over the vanilla UNITE estimate.

\textbf{Effect of General Non-Linearity}
In the third row of Figure \ref{fig:erdos_combined}, we illustrate results from a non-polynomial model, where the outcomes in this case come from the product of linear and a sigmoidal effect. It can be observed that UNITE shows bias, unlike in the linear and quadratic models where Assumption \textbf{A6} holds. However, UNITE still shows a lower bias than other estimators. Further, as it follows from \thref{thm:bias}, UNITE's bias reduces with increasing $p$.

\subsection{Case Study: Airbnb}

 We conduct experiments with a model designed from the AirBnB vacation rentals domain \cite{li2022interference}.  We perform simulations with protocol specified in \citet{brennan2022cluster}. Contrary to the previous experiments, the outcomes here do not follow an explicit exposure mapping. Instead, this simulation works uses a type matching model where if the listing and person have the same type, the probability of acceptance is higher. The measured outcome $Y_i$ is 1 iff the customer successfully books the place. The two different options are considered as recommendation algorithms, and treating each customer increases the application probabilities by a factor of $\alpha$. 
 

\begin{figure}[h!]
    \centering
    \centering
    \includegraphics[width=0.23\textwidth]{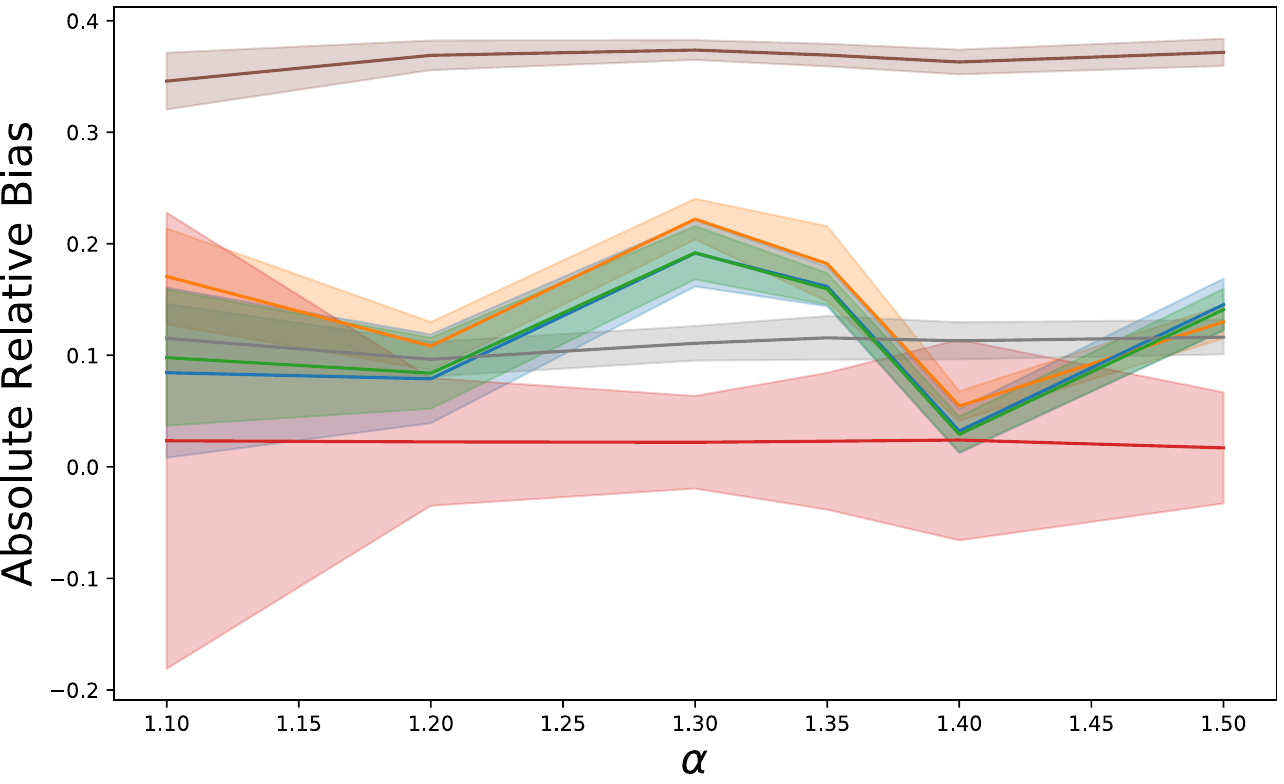}
    \includegraphics[width=0.23\textwidth]{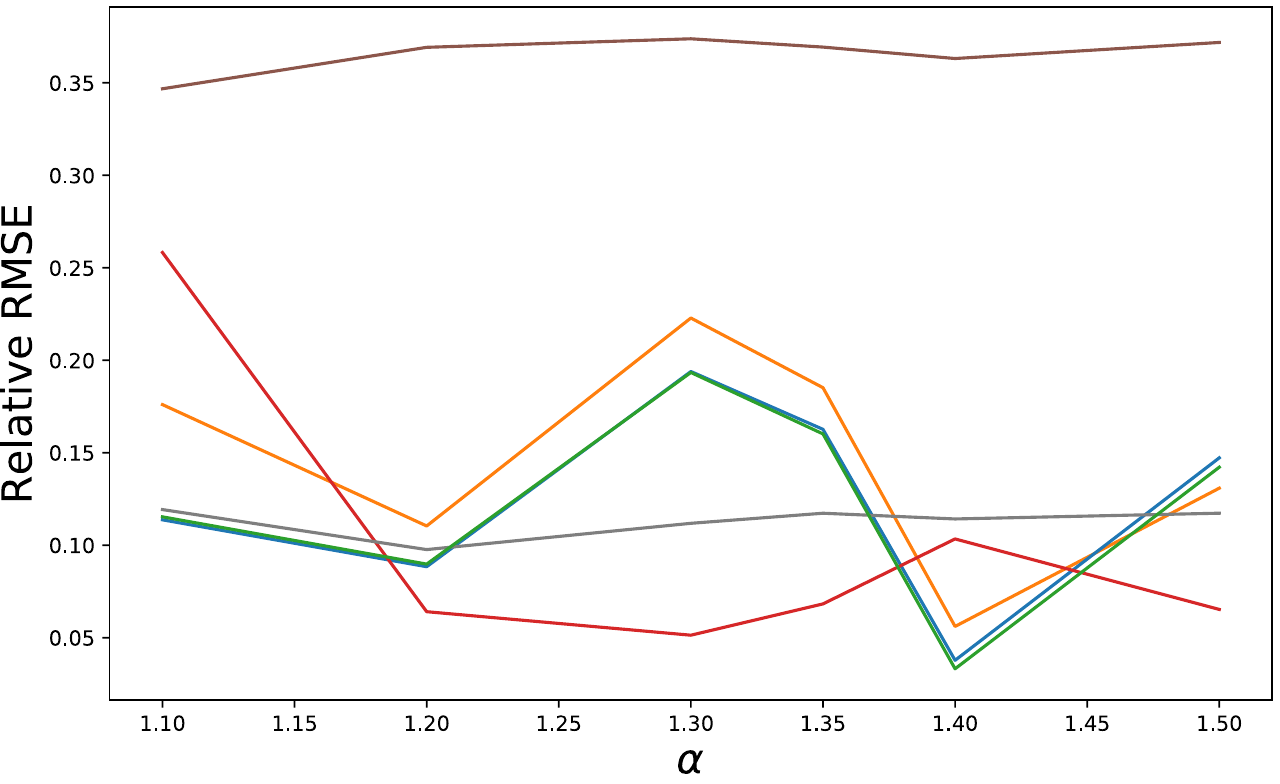}
    \includegraphics[width=0.45\textwidth]{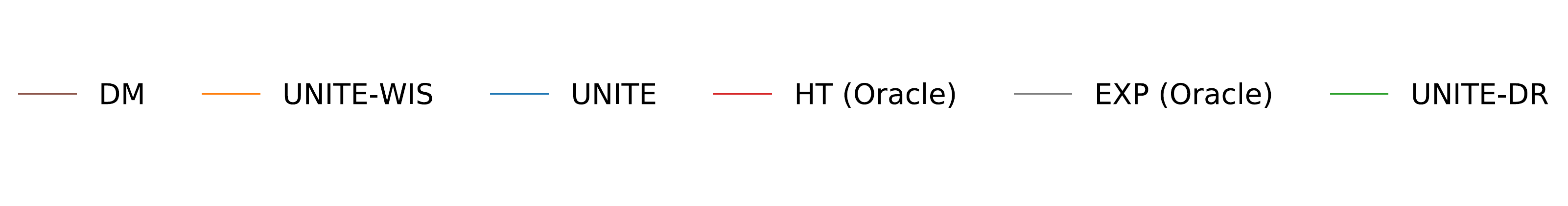}
    
    \caption{Visualization of relative bias and relative RMSE of different GATE estimators as the indirect treatment effect $\alpha$ increases.}
    
    \label{fig:airbnb}
\end{figure}

\paragraph{Baselines}
In this experiment, we evaluate our UNITE estimator against the difference-in-means (DM), as well as two oracle estimator which have access to the true graph. One is the oracle Horvitz-Thompson estimator. The other estimator is EXP (from \citet{brennan2022cluster}), which assumes both: a linear exposure model and access to the true graph. The EXP model is equivalent to ~\citep{AronowSamii17} estimator and computing it requires specifying both the true graph and an exposure function. The relative absolute bias of these estimators is seen in Figure \ref{fig:airbnb}.

Since the exposure model can only partly model the actual outcomes, there will be a necessary bias in this case. On the other hand, the Oracle HT estimator gives unbiased though higher variance estimates. From the result it is also clear that our approach works as well as the Oracle Exposure model. Our method has lower bias and similar RMSE compared to the EXP model, while requiring no such information.
\subsection{Effect of Network Uncertainty}

\begin{figure}[th!]
    \centering
    \begin{subfigure}[b]{0.23\textwidth}
    \centering
    \includegraphics[width=\textwidth]{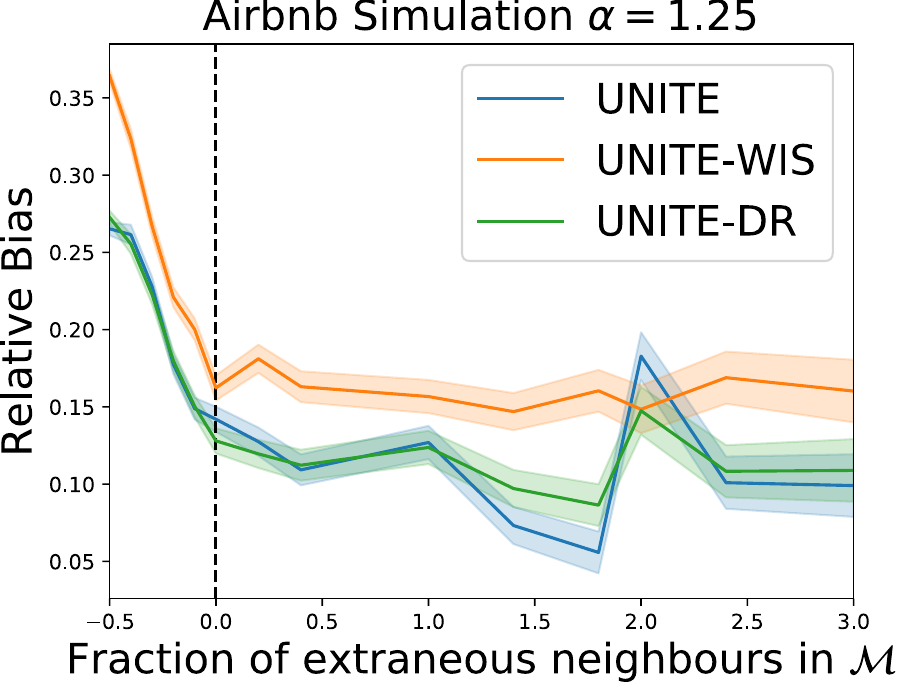}
    \caption{ Bias\label{fig:abl_ernet2}}
    \end{subfigure}
    ~
    \begin{subfigure}[b]{0.23\textwidth}
     \centering
     \includegraphics[width=\textwidth]{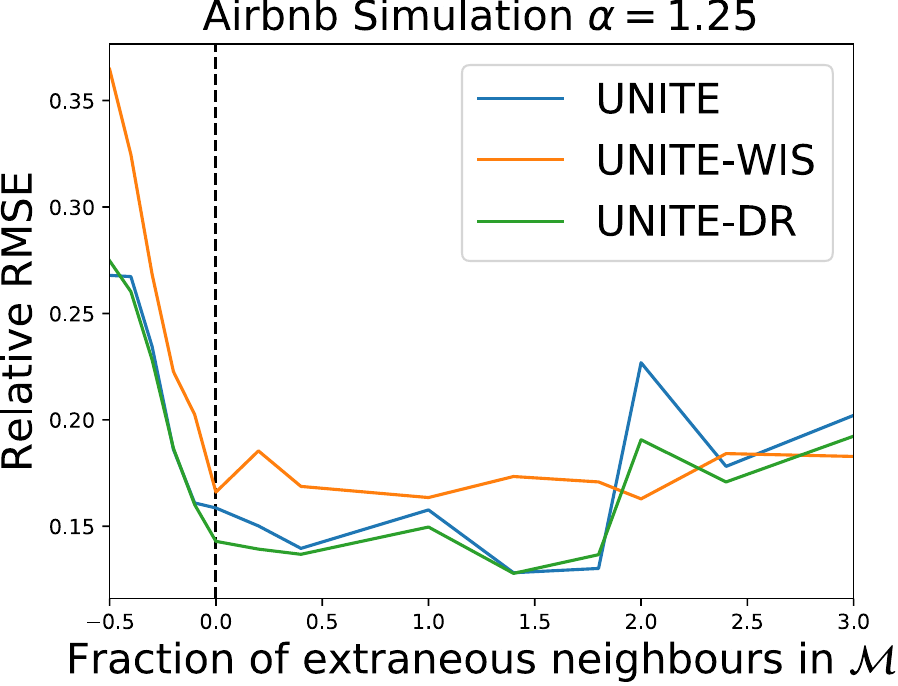}
     \caption{RMSE \label{fig:abl_airbnb}}
     \end{subfigure}
    \caption{Visualization of the impact of neighbourhood sizes on GATE estimation on the AirBnb Study. Negative fraction of neighbours indicate the case when $\mathcal{M}(i) \subset \mathcal{N}_i$ i.e. we missed pertinent neighbours. The bias tends to be high when given small neighborhoods, as they miss pertinent edges. As the neighborhood sizes increase, the bias reduces, but the uncertainty widens.
    \label{fig:ablation} }
\end{figure}

Figure \ref{fig:ablation} illustrates the impact of the neighborhood accuracy $\mathcal{M}(i)$ on estimating $\tau(\vec{0},\vec{1})$.
We experiment with Erdos-Renyi graphs as well as with the AirBnB Model. We fix the interference graph, and compute the treatment effect estimate from our method as we change the assumed neighbourhoods $\mathcal{M}(i)$. In Figure~\ref{fig:ablation}, we plot the absolute value of relative bias as varying proportions of edges are either added or omitted by $\mathcal{M}(i)$ for the AirBnB case. The results from ER graphs are in the Appendix. To maintain simplicity, we maintain uniform $\mathcal{M}(i)$ sizes across all nodes, employing the average number of missed or added edges as the metric along the x-axis.

We see that when $\mathcal{M}(i) \supseteq \mathcal{N}_i$ holds true for all nodes, UNITE is unbiased, and the variance of the estimate increases as the number of extraneous nodes within $\mathcal{M}(i)$ grows. But as expected, when $\mathcal{M}(i)$ misses relevant nodes, the estimate becomes biased, with the overall bias dependent on the influence induced by the missing nodes.

\section{CONCLUSION}
Network interference exists in many important A/B testing experiments. Our work provides estimators for GATE under a relaxed assumption of having knowledge only about the super-set of neighbors that cause interference. We believe that satisfying this relaxed assumption can be practically far more feasible than requiring the exact network. With both theoretical and experimental analysis, we established the efficacy of our estimator(s) under this assumption. 

\bibliography{mybib}  

\appendix

\setcounter{lemma}{0}
\setcounter{thm}{0}
\onecolumn

\newcommand{\cR}{\mathcal{R}}
\newcommand{\cS}{\mathcal{S}}
\newcommand{\cT}{\mathcal{T}}
\newcommand{\cU}{\mathcal{U}}
\newcommand{\cV}{\mathcal{V}}
\newcommand{\cW}{\mathcal{W}}
\newcommand{\Ind}{\mathbb{I}}


\section{Useful Lemmas}

\begin{lemma} \label{lem:exp_prod}
    Suppose that $\{z_i\}_{i=1..n}$ are mutually independent, with $z_i \sim \text{Bernoulli}(p)$. Then, for any set of indices $S, S' \subset[n]$, and function $f$ we have
    \[
        \E \Big[ \prod_{i \in S} \Big( \frac{z_i}{p} - \frac{1-z_i}{1-p} \Big) \prod_{j \in S'} f(z_{j}) \Big] =
        \begin{cases}
        (f(1)-f(0))^{|S\cap S'|} \E[f(z)]^{|S'\setminus S|} & \text{if } S \subseteq S'\\    
        0 & \text{otherwise}\\
        \end{cases}
    \]
\end{lemma}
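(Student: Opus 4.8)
The plan is to reduce the whole statement to one elementary computation, then invoke mutual independence. First I would introduce the shorthand $w_i := \frac{z_i}{p} - \frac{1-z_i}{1-p}$, which is a function of $z_i$ alone, and record two base facts obtained by averaging over the two possible values of $z_i$:
\[
\E[w_i] = p\cdot\tfrac{1}{p} + (1-p)\cdot\tfrac{-1}{1-p} = 0, \qquad \E[w_i f(z_i)] = p\cdot\tfrac{1}{p}f(1) + (1-p)\cdot\tfrac{-1}{1-p}f(0) = f(1)-f(0),
\]
together with the trivial $\E[f(z_j)] = \E[f(z)]$ for $z \sim \text{Bernoulli}(p)$.

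Next I would split the index set $S \cup S'$ into the three pairwise-disjoint blocks $S\setminus S'$, $S\cap S'$, and $S'\setminus S$, and regroup the integrand accordingly: an index in $S\setminus S'$ contributes only a factor $w_i$; an index in $S\cap S'$ contributes $w_i f(z_i)$; an index in $S'\setminus S$ contributes only $f(z_j)$; indices outside $S\cup S'$ do not appear at all. Since the $\{z_i\}$ are mutually independent and each of these factors depends on a distinct coordinate, the expectation of the product equals the product of the per-index expectations:
\[
\E\Big[\prod_{i\in S} w_i \prod_{j\in S'} f(z_j)\Big] = \Big(\prod_{i\in S\setminus S'}\E[w_i]\Big)\Big(\prod_{i\in S\cap S'}\E[w_i f(z_i)]\Big)\Big(\prod_{j\in S'\setminus S}\E[f(z_j)]\Big).
\]

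Finally I would substitute the base facts: the middle block contributes $(f(1)-f(0))^{|S\cap S'|}$, the third block contributes $\E[f(z)]^{|S'\setminus S|}$, and the first block contributes $0^{|S\setminus S'|}$. If $S\subseteq S'$ then $S\setminus S' = \emptyset$, so the first block is an empty product equal to $1$ and we recover the claimed expression; if $S\not\subseteq S'$ then the first block contains at least one factor $\E[w_i] = 0$ and the whole quantity vanishes. The argument is essentially bookkeeping and has no serious obstacle; the only points that warrant explicit care are the degenerate cases (either $S$ or $S'$ empty) and the empty-product / $0^0 = 1$ convention used to state the two branches uniformly.
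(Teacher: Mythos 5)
Your proposal is correct and follows essentially the same route as the paper's proof: both decompose the product over the three disjoint blocks $S\setminus S'$, $S\cap S'$, $S'\setminus S$, factor the expectation by mutual independence, and use $\E[w_i]=0$ to kill the $S\setminus S'$ block unless $S\subseteq S'$. Your explicit attention to the empty-product convention is a minor but welcome addition over the paper's version.
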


\begin{proof}
Fix $S, S'$. A given index (node) $i$ can either be only in $S$ or only in $S'$ or in both, with only one of the possibilities being true. Correspondingly the product, $\prod_{i \in S} \Big(  \frac{z_i}{p} - \frac{1-z_i}{1-p} \Big) \prod_{j \in S'} f(z_{j}) $ can be factored into three exclusive products:

$$
 \prod_{i \in S} \Big( \frac{z_i}{p} - \frac{1-z_i}{1-p} \Big) \prod_{j \in S'} f(z_{j})  =  \prod_{i \in S \setminus S'} \Big( \frac{z_i}{p} - \frac{1-z_i}{1-p} \Big)  \prod_{k \in S \cap S'}  f(z_k) \Big(\frac{z_k}{p} - \frac{1-z_k}{1-p} \Big) \prod_{j \in S' \setminus S}  f(z_j)
$$
Applying expectations and noting that  $z_i$ are mutually independent, we get:
{\small
$$
\prod_{i \in S \setminus S'}  \E \big[ \frac{z_i}{p} - \frac{1-z_i}{1-p} \big]  \prod_{k \in S \cap S'} \E \big[ f(z_k )\Big(\frac{z_k}{p} - \frac{1-z_k}{1-p} \Big)\big] \prod_{j \in S' \setminus S}  \E f(z_j) = \prod_{i \in S \setminus S'} 0\prod_{k \in S \cap S'} \frac{\E[z_kf(z_k)] -p\E[f(z_k)]}{p(1-p)} \prod_{j \in S' \setminus S} \E[f(z_j)]
$$
}%
The RHS can only be non zero if $S \setminus S' = \{\}$ i.e. $S \subseteq S'$.

Since $\E\left[f(z_k)\left(\frac{z_k}{p} - \frac{1-z_k}{1-p} \right)\right] = p*f(1)*\frac{1}{p} + (1-p)*f(0)*(\frac{-1}{1-p}) = f(1) - f(0)$; thr RHS when it is non zero simplifies to
$$(f(1)-f(0))^{|S\cap S'|} \E[f(z)]^{|S'\setminus S|}$$
\end{proof}

\begin{corollary}
\label{lem:corr_id}
    By putting $f(z) =z$ in Lemma \ref{lem:exp_prod}we get
     \[
        \E \Big[ \prod_{i \in S} \Big( \frac{z_i}{p} - \frac{1-z_i}{1-p} \Big) \prod_{j \in S'} z_{j} \Big] =
        \begin{cases}
        p^{|S' \setminus S|} & \text{if } S \subseteq S'\\    
        0 & \text{otherwise}\\
        \end{cases}
    \]
\end{corollary}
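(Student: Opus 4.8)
The plan is to obtain this identity as an immediate specialization of Lemma \ref{lem:exp_prod}, taking $f(z)=z$ as the particular choice of the function $f$ appearing there. First I would invoke Lemma \ref{lem:exp_prod} verbatim with the mutually independent Bernoulli variables $z_i \sim \text{Bernoulli}(p)$, noting that the case split into the two regimes ($S \subseteq S'$ versus $S \not\subseteq S'$) carries over unchanged. In particular, the vanishing of the expectation whenever $S \not\subseteq S'$ requires no further work, since it holds for every $f$ and hence for $f(z)=z$.

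Next I would evaluate the two factors appearing in the nontrivial branch under the substitution $f(z)=z$. The intersection factor is $(f(1)-f(0))^{|S\cap S'|} = (1-0)^{|S\cap S'|} = 1$, so the contribution of $S \cap S'$ collapses entirely. The remaining factor is $\E[f(z)]^{|S'\setminus S|} = \E[z]^{|S'\setminus S|} = p^{|S'\setminus S|}$, using that a $\text{Bernoulli}(p)$ variable has mean $p$. Multiplying the two, the surviving value is exactly $p^{|S'\setminus S|}$, which matches the claimed right-hand side.

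There is no genuine obstacle here: the full content of the corollary is already contained in Lemma \ref{lem:exp_prod}, and the only thing to verify is the arithmetic of the two exponentiated factors. The single point I would state explicitly is \emph{why} the dependence on $|S\cap S'|$ vanishes — namely because the base $f(1)-f(0)$ equals $1$, not because its exponent is zero — so that a reader is not led to expect any residual $|S\cap S'|$ dependence in the final expression.
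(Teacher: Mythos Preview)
Your proposal is correct and matches the paper's approach exactly: the paper treats this corollary as an immediate substitution of $f(z)=z$ into Lemma~\ref{lem:exp_prod} and gives no additional argument beyond the statement itself. Your explicit evaluation of $(f(1)-f(0))^{|S\cap S'|}=1$ and $\E[f(z)]^{|S'\setminus S|}=p^{|S'\setminus S|}$ simply spells out the one-line specialization the paper leaves implicit.
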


\begin{lemma} \label{lem:help_beta}
    Suppose that $\{z_i\}_{i=1..n}$ are mutually independent, with $z_j \sim \text{Bernoulli}(p)$. Then, for any subsets $S,  S'$,
    $\E[\prod_{i \in S} f_i(z_i) \prod_{j \in \S'} \frac{z_j -p}{p}] = \prod_{i \in S \setminus S'} \E[f_i(z_i)]  \prod_{k \in S \cap S'}\left((1-p)(f_k(1) - f_k(0))\right) \Ind[S' \subseteq S] $
\end{lemma}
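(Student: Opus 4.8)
The plan is to mimic exactly the argument used for Lemma~\ref{lem:exp_prod}: partition the index sets, exploit mutual independence to turn the expectation of the product into a product of expectations, and then evaluate the three types of single-index factors separately. Concretely, any index appearing in $S \cup S'$ lies in exactly one of the three disjoint blocks $S \setminus S'$, $S \cap S'$, and $S' \setminus S$, so the integrand factors as
\begin{align*}
\prod_{i \in S} f_i(z_i) \prod_{j \in S'} \frac{z_j - p}{p}
= \prod_{i \in S \setminus S'} f_i(z_i) \;\cdot\; \prod_{k \in S \cap S'} f_k(z_k)\frac{z_k - p}{p} \;\cdot\; \prod_{j \in S' \setminus S} \frac{z_j - p}{p}.
\end{align*}

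Next I would invoke mutual independence of $\{z_i\}$ to write the expectation of this product as the product of the expectations of the three blocks, and further as a product over individual indices. For $i \in S \setminus S'$ the factor is simply $\E[f_i(z_i)]$. For $j \in S' \setminus S$ the factor is $\E\!\left[\frac{z_j - p}{p}\right] = \frac{p - p}{p} = 0$, so the whole expectation vanishes unless $S' \setminus S = \emptyset$, i.e.\ unless $S' \subseteq S$; this produces the indicator $\Ind[S' \subseteq S]$. For $k \in S \cap S'$ the factor is
\begin{align*}
\E\!\left[f_k(z_k)\frac{z_k - p}{p}\right]
= p\cdot f_k(1)\cdot\frac{1-p}{p} + (1-p)\cdot f_k(0)\cdot\frac{-p}{p}
= (1-p)\bigl(f_k(1) - f_k(0)\bigr),
\end{align*}
which is exactly the claimed per-index contribution from $S \cap S'$. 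Assembling the three blocks gives the stated identity.

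There is no real obstacle here beyond careful bookkeeping; the only place to be slightly attentive is the computation of the $S \cap S'$ factor (making sure the $1/p$ normalization is distributed correctly, not doubled), and noting that the vanishing of a single $S' \setminus S$ factor is what collapses the expression to zero and thereby yields the indicator. This is the direct analogue of Corollary~\ref{lem:corr_id}'s reasoning with $\frac{z}{p} - \frac{1-z}{1-p}$ replaced by $\frac{z-p}{p}$, and with a general $f_i$ attached to each index of $S$ rather than only to indices of $S'$.
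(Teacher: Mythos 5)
Your proposal is correct and follows essentially the same route as the paper's proof: the same three-block partition into $S \setminus S'$, $S \cap S'$, and $S' \setminus S$, the same use of mutual independence to factor the expectation, and the same per-index evaluations (with the $S' \setminus S$ factor $\E[(z_j-p)/p]=0$ producing the indicator $\Ind[S' \subseteq S]$ and the $S \cap S'$ factor evaluating to $(1-p)(f_k(1)-f_k(0))$). No gaps.
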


\begin{proof}
    Fix $S, S'$. A given index (node) $i$ can either be only in $S$ or only in $S'$ or in both, with only one of the possibilities being true. Correspondingly the product, $\prod_{i \in S} f_i(z_i) \prod_{j \in \S'} \frac{z_j -p}{p} $ can be factored into three exclusive products:

\begin{align*}
\E[\prod_{i \in S} f_i(z_i) \prod_{j \in \S'} \frac{z_j -p}{p}] &= \E[\prod_{i \in S \setminus S'} f_i(z_i)  \prod_{k \in S \cap S'} f_k(z_k)\frac{z_k -p}{p} \prod_{j \in S' \setminus S} \frac{z_j -p}{p}] \\
&= \prod_{i \in S \setminus S'} \E[f_i(z_i)]  \prod_{k \in S \cap S'}\E[f_k(z_k)\frac{z_k -p}{p}]\prod_{j \in S' \setminus S} \E[\frac{z_j -p}{p}] \\
&= \prod_{i \in S \setminus S'} \E[f_i(z_i)]  \prod_{k \in S \cap S'}\left((1-p)(f_k(1) - f_k(0))\right)\prod_{j \in S' \setminus S} 0 \\
&= \prod_{i \in S \setminus S'} \E[f_i(z_i)]  \prod_{k \in S \cap S'}\left((1-p)(f_k(1) - f_k(0))\right) \Ind[S' \subseteq S]
\end{align*}

Similarly,
{\small
\begin{align*}
\E[\prod_{i \in S} z_i \prod_{j in \S'} \frac{p - z_j}{1-p}] = \prod_{i \in S \setminus S'} \E[f(z_i)]  \prod_{k \in S \cap S'}\left((f(0) - f(1))p\right)\prod_{j \in S' \setminus S} 0 = \prod_{i \in S \setminus S'} \E[f(z_i)]  \prod_{k \in S \cap S'}\left((f(0) - f(1))p\right)
\Ind[S' \subseteq S]
\end{align*}
}%
\end{proof}

\begin{lemma} \label{lem:help_beta2}
For any sets $S',\M_i$ such that $S' \subseteq \N_i \subseteq \M_i$
$$
\E\left[ \prod_{ k \in S'} (z_k) \sum_{\substack{S \subseteq \M_i \\ |S| \leq \beta}} \Big( \prod_{j \in S} \frac{z_j-p}{p} - \prod_{j \in S} \frac{p - z_j}{1-p} \Big)\right]
= 1
$$
\end{lemma}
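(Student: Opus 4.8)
The plan is to expand the expectation by linearity over the sum on $S$, evaluate each summand with Lemma~\ref{lem:help_beta} specialized to $f_k(z)=z$ (legitimate since $z_k=\Ind[z_k=1]$ for Bernoulli variables), and then collapse the resulting combinatorial sum with the binomial theorem.

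First I would pull the expectation inside, so that the left-hand side becomes $\sum_{S\subseteq\M_i,\,|S|\le\beta}\big(\E[\prod_{k\in S'}z_k\prod_{j\in S}\tfrac{z_j-p}{p}]-\E[\prod_{k\in S'}z_k\prod_{j\in S}\tfrac{p-z_j}{1-p}]\big)$. For a fixed $S$, Lemma~\ref{lem:help_beta} with $f_k=\mathrm{id}$ — applied to the two index sets $S'$, carrying the plain factors $z_k$, and $S$, carrying the ratio factors — shows that both expectations vanish unless $S\subseteq S'$, and that when $S\subseteq S'$ they equal $p^{|S'|-|S|}(1-p)^{|S|}$ and $(-1)^{|S|}p^{|S'|}$ respectively. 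The one thing worth spelling out here is the index bookkeeping: the indicator $\Ind[S'\subseteq S]$ appearing in Lemma~\ref{lem:help_beta} becomes $\Ind[S\subseteq S']$ in this application, because the set carrying the mean-zero factors $\tfrac{z_j-p}{p}$ is $S$ while the set carrying the plain $z_k$'s is $S'$. This swap, combined with $S'\subseteq\M_i$, is exactly what makes the \emph{superset} $\M_i$ immaterial: any $S$ that reaches outside $S'$ drags in a factor $\E[\tfrac{z_j-p}{p}]=0$ and so contributes nothing.

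It then remains to perform the sum. Since $S'\subseteq\N_i\subseteq\M_i$, every $S\subseteq S'$ automatically lies in $\M_i$, and — using that in every invocation $S'$ is one of the motifs of \textbf{A5}, hence $|S'|\le\beta$ — every such $S$ also satisfies $|S|\le\beta$; therefore the restricted sum collapses to a sum over all subsets $S\subseteq S'$. Writing $m=|S'|$ and grouping the surviving terms by $s=|S|$ (there being $\binom{m}{s}$ subsets of size $s$),
\[
\sum_{s=0}^{m}\binom{m}{s}p^{\,m-s}(1-p)^{s}\;-\;p^{m}\sum_{s=0}^{m}\binom{m}{s}(-1)^{s}\;=\;\big(p+(1-p)\big)^{m}-p^{m}(1-1)^{m}\;=\;1-0\;=\;1,
\]
where $m\ge 1$ is used to kill the second sum; this is the claim. (The degenerate case $S'=\emptyset$ would instead give expectation $0$, which is precisely what is needed for the constant term $c_i$ of \textbf{A5} to contribute nothing when this lemma is used to prove unbiasedness of $\hat\tau^\beta$.) The only real obstacle is getting the role-swap in Lemma~\ref{lem:help_beta} right; once that is pinned down, each of the two terms is a two-line binomial-theorem evaluation.
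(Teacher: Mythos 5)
Your proof is correct and follows essentially the same route as the paper's: expand by linearity, apply Lemma~\ref{lem:help_beta} with $f=\mathrm{id}$ (with the roles of the two index sets swapped, so the indicator becomes $\Ind[S\subseteq S']$), discard the sets $S\not\subseteq S'$, and collapse the surviving sum by the binomial theorem. If anything you are more careful than the paper, since you make explicit the two conditions it uses silently --- that $|S'|\le\beta$ (so the size constraint on $S$ is vacuous) and that $S'\neq\emptyset$ (otherwise the expectation is $0$, not $1$) --- both of which hold in every invocation of the lemma under \textbf{A5}.
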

\begin{proof}

\begin{align*}
\E\left[ \prod_{ k \in S'} (z_k) \sum_{\substack{S \subseteq \M_i \\ |S| \leq \beta}} \Big( \prod_{j \in S} \frac{z_j-p}{p} - \prod_{j \in S} \frac{p - z_j}{1-p} \Big)\right]
&=  \sum_{\substack{S \subseteq \M_i \\ |S| \leq \beta}} \E\left[\left( \prod_{ k in S'} (z_k) \prod_{j \in S} \frac{z_j-p}{p} - \prod_{ k in S'} (z_k) \prod_{j \in S}   \frac{p - z_j}{1-p} \right)\right] \\
\intertext{Applying Lemma \ref{lem:help_beta} with $f_i(z) = z_i$ we get}
&= \sum_{\substack{S \subseteq \M_i \\ |S| \leq \beta}} \left[
p^{|S'/S]} (1-p)^{|S' \cap S|} \Ind [S \subseteq S'] - 
p^{|S'/S]} (-p)^{|S' \cap S|} \Ind [S \subseteq S']
\right] \\
&\overset{(b)}{=} \sum_{\substack{S \subseteq S' \\ |S| \leq \beta}} \left[
p^{|S'/S]} (1-p)^{|S' \cap S|}  - 
p^{|S'/S]} (-p)^{|S' \cap S|} 
\right] \tag{S1}\\
\intertext{\centering $(b)$ follows from that fact that $M_i \supseteq N_i$ for any node $i$ and $\Ind[S \subseteq S']$ will filter any non subset of $S'$}
&= \sum_{\substack{S \subseteq S' \\ |S| \leq \beta}} p^{|S'|} \left[
 p^{-|S|} (1-p)^{|S|}  -   p^{-|S|}(-p)^{| S|} 
\right]\\
&= \sum_{\substack{S \subseteq S' \\ |S| \leq \beta}} p^{|S'|} \left[ (\frac{1}{p} - 1)^{|S|}  -  (-1)^{| S|} 
\right] \tag{S2}\\
\intertext{Note that the terms in the sum S2 only depend on sizes of the subset $S$ and not the elements in it. Hence the sum S1 can be rewritten as:}
&= p^{|S'|} \sum_{\substack{r \leq \beta \\ r \leq |S'|}}  {|S'| \choose r}  \left[ (\frac{1}{p} - 1)^{r}  -  (-1)^{r}
\right] \\
&= p^{|S'|} \left[ \sum_{\substack{r \leq \beta \\ r \leq |S'|}}  {|S'| \choose r}  (\frac{1}{p} - 1)^{r}  - 
\sum_{\substack{r \leq \beta \\ r \leq |S'|}}  {|S'| \choose r}
(-1)^{r}
\right] \tag{S3}\\
\intertext{If $|S'| \leq \beta$, we are summing over all subsets of S', and the constraint of $r \leq \beta$ is redundant. Then by applying binomial theorem we get.}
&\overset{(c)}{=} p^{|S'|} \left[ \left( 1 + (\frac{1}{p} - 1) \right)^{|S'|} - \left(1 + (-1) \right)^{|S'|} \right] = p^{|S'|} (\frac{1}{p})^{|S'|} = 1
\end{align*}
where in $(c)$ we used binomial as $\sum_r {n \choose r}x^r = (1+x)^n$
\end{proof}

\section{Proof of general $\beta$ Estimator}

\begin{thm}
Under assumptions $\textbf{A2-5}$ and $\M_i \supseteq \N_i$, then $\hat{\tau}^\beta$ is unbiased 
\end{thm}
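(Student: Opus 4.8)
The plan is to expand $\E[\hat\tau^\beta]$ by substituting the structural outcome model \textbf{A5}, exchanging expectation with the finite sums over nodes and motifs, and then evaluating each resulting term using the combinatorial identities already proved in the appendix.

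First I would reduce the claim to a pure randomization computation. By consistency (\textbf{A4}) the observed $Y_i$ equals $Y_i(\bZ)$, and the estimator weight $w_i(\bZ) := \sum_{S\subseteq\M_i,\,|S|\le\beta}\big(\prod_{j\in S}\frac{z_j-p}{p} - \prod_{j\in S}\frac{p-z_j}{1-p}\big)$ is a function of $\bZ$ alone. Since the noise $\epsilon$ is zero-mean and independent of $\bZ$, and potential outcomes are independent of the assignment under \textbf{A2}, we get $\E[Y_i\, w_i(\bZ)] = \E_{\bZ}\big[(c_i + \sum_{S'\in\mathcal{SN}^\beta_i(\bA)} c_{i,S'}\prod_{k\in S'} z_k)\, w_i(\bZ)\big]$, where I have used $\mathbb{I}[z_k=1]=z_k$ for Bernoulli variables. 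Hence $\E[\hat\tau^\beta] = \frac1n\sum_i \E_{\bZ}\big[(c_i + \sum_{S'} c_{i,S'}\prod_{k\in S'}z_k)\, w_i(\bZ)\big]$.

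Next, by linearity I split the bracket into the intercept term and the motif terms. For the intercept, $c_i\,\E[w_i(\bZ)]$ vanishes: every summand of $w_i$ with $S\neq\emptyset$ has zero mean because $\E[(z_j-p)/p]=0$ together with independence of the $z_j$, while the $S=\emptyset$ summand equals $1-1=0$. For each motif $S'\in\mathcal{SN}^\beta_i(\bA)$ I would invoke Lemma~\ref{lem:help_beta2}: by definition $S'\subseteq\N_i$, and $\N_i\subseteq\M_i$ by hypothesis, so the lemma gives $\E\big[\prod_{k\in S'}z_k\, w_i(\bZ)\big]=1$ (the size bound $|S'|\le\beta$ required at the final step of that lemma's proof holds since motifs have size at most $\beta$). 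Therefore $\E[\hat\tau^\beta] = \frac1n\sum_i\sum_{S'\in\mathcal{SN}^\beta_i(\bA)} c_{i,S'}$.

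Finally I identify this with $\tau(\vec1,\vec0)$: under \textbf{A5} we have $Y_i(\vec1)-Y_i(\vec0) = \sum_{S'\in\mathcal{SN}^\beta_i(\bA)} c_{i,S'}$, since each product of indicators equals $1$ at $\bz=\vec1$ and $0$ at $\bz=\vec0$, the intercept $c_i$ cancels, and $\epsilon$ averages out; averaging over $i$ then yields exactly $\E[\hat\tau^\beta]=\tau(\vec1,\vec0)$. The substantive content of the argument — the cancellation across all subsets $S$ in $w_i$ against a fixed monomial $\prod_{k\in S'}z_k$ — is already isolated in Lemma~\ref{lem:help_beta2} (itself following from Lemma~\ref{lem:help_beta} and the binomial theorem), so within this proof the only point needing care is verifying that each motif set satisfies $S'\subseteq\N_i\subseteq\M_i$ and $|S'|\le\beta$ so the lemma applies, and that the intercept term contributes nothing. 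I would expect that bookkeeping — matching the index set $\mathcal{SN}^\beta_i(\bA)$ appearing in \textbf{A5} to the hypotheses of Lemma~\ref{lem:help_beta2} — to be the only mild obstacle.
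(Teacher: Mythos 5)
Your proposal is correct and follows essentially the same route as the paper's own proof: substitute the motif model \textbf{A5}, exchange expectation with the finite sums, and reduce each motif term to the identity $\E\bigl[\prod_{k\in S'}z_k\,w_i(\bZ)\bigr]=1$ supplied by Lemma~\ref{lem:help_beta2}, whose hypotheses $S'\subseteq\N_i\subseteq\M_i$ and $|S'|\le\beta$ you verify just as the paper does. Your explicit treatment of the intercept term $c_i$ (via $\E[w_i(\bZ)]=0$) is a small point the paper's write-up leaves implicit, but it does not change the argument.
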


\begin{proof}
If $Y_i(\bz) = \sum_{S' \subset \N_i} c_{i,S'} \prod_{j \in S'} \Ind[z_j = 1]$ then for $\hat{\tau}^\beta$ we get

\begin{align*}
    \E[\hat{\tau}^\beta] &= \E \left[ \frac{1}{n} \sum_i Y_i \sum_{\substack{S \subseteq \M_i \\ |S| \leq \beta}} \Big( \prod_{j \in S} \frac{z_j-p}{p} - \prod_{j \in S} \frac{p - z_j}{1-p} \Big) \right] \\
    &= \E \left[ \frac{1}{n} \sum_i \sum_{S' \subset \N_i} c_{i,S'} \prod_{j \in S'} \Ind[z_j = 1] \sum_{\substack{S \subseteq \M_i \\ |S| \leq \beta}} \Big( \prod_{j \in S} \frac{z_j-p}{p} - \prod_{j \in S} \frac{p - z_j}{1-p} \Big) \right] \\
    &= \frac{1}{n} \sum_i \E \left[  \sum_{S' \subset \N_i} c_{i,S'} \prod_{j \in S'} \Ind[z_j = 1] \sum_{\substack{S \subseteq \M_i \\ |S| \leq \beta}} \Big( \prod_{j \in S} \frac{z_j-p}{p} - \prod_{j \in S} \frac{p - z_j}{1-p} \Big) \right] \\
    &= \frac{1}{n} \sum_i \E \left[   \sum_{S' \subset \N_i} c_{i,S'} \prod_{j \in S'} \Ind[z_j = 1] \sum_{\substack{S \subseteq \M_i \\ |S| \leq \beta}} \Big( \prod_{j \in S} \frac{z_j-p}{p} - \prod_{j \in S} \frac{p - z_j}{1-p} \Big) \right] \\
    &= \frac{1}{n} \sum_i \E \left[   \sum_{S' \subset \N_i} c_{i,S'} \prod_{j \in S'} z_j \sum_{\substack{S \subseteq \M_i \\ |S| \leq \beta}} \Big( \prod_{j \in S} \frac{z_j-p}{p} - \prod_{j \in S} \frac{p - z_j}{1-p} \Big) \right] \\
    &= \frac{1}{n} \sum_i  \sum_{S' \subset \N_i} c_{i,S'} \underbrace{\E \left[ \prod_{j \in S'} z_j \sum_{\substack{S \subseteq \M_i \\ |S| \leq \beta}} \Big( \prod_{j \in S} \frac{z_j-p}{p} - \prod_{j \in S} \frac{p - z_j}{1-p} \Big) \right]}_{\text{E1}}\\
    \intertext{Now applying Lemma \ref{lem:help_beta2} on E1 we get} 
    &= \frac{1}{n} \sum_i \sum_{S' \subset \N_i} c_{i,S'} \left[ 1 \right]  = \tau(\vec{1},\vec{0})
\end{align*}
\end{proof}

\begin{thm}
Estimator $\hat{\tau}_\text{Lin} $ is the same as $\hat{\tau}^\beta$ for $\beta=1$
\end{thm}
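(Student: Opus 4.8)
The plan is to prove this by a direct, term-by-term comparison of the two defining expressions, since both are linear combinations of the outcomes $Y_i$ with explicit weights. The only work is to show that the per-node weight multiplying $Y_i$ in $\hat{\tau}^\beta$ at $\beta=1$ coincides with the weight $\sum_{j \in \M_i}\left(\frac{z_j}{p} - \frac{1-z_j}{1-p}\right)$ appearing in $\hat{\tau}_{\text{Lin}}$.

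First I would enumerate the index set of the inner sum in $\hat{\tau}^\beta$ when $\beta = 1$: the subsets $S \subseteq \M_i$ with $|S| \le 1$ are precisely the empty set together with the singletons $\{j\}$, $j \in \M_i$. For $S = \emptyset$, using the standard convention that an empty product equals $1$, the summand $\prod_{j\in\emptyset}\frac{z_j-p}{p} - \prod_{j\in\emptyset}\frac{p-z_j}{1-p}$ equals $1 - 1 = 0$, so the empty set contributes nothing. Hence the inner sum reduces to $\sum_{j \in \M_i}\left(\frac{z_j - p}{p} - \frac{p - z_j}{1-p}\right)$.

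Next I would carry out the elementary algebraic identity that, for each $j$,
\[
\frac{z_j - p}{p} - \frac{p - z_j}{1-p} = \frac{z_j}{p} - 1 - \frac{p - z_j}{1-p} = \frac{z_j}{p} + \frac{-(1-p) - (p - z_j)}{1-p} = \frac{z_j}{p} - \frac{1-z_j}{1-p},
\]
where the last equality uses $-(1-p) - (p - z_j) = z_j - 1$. Substituting this back shows the inner sum equals $\sum_{j \in \M_i}\left(\frac{z_j}{p} - \frac{1-z_j}{1-p}\right)$, which is exactly the weight of $Y_i$ in $\hat{\tau}_{\text{Lin}}$. Summing $\frac{1}{n}\sum_i Y_i(\cdot)$ then gives $\hat{\tau}^1 = \hat{\tau}_{\text{Lin}}$.

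I do not anticipate a genuine obstacle here: this is a routine verification rather than a substantive argument. The only point requiring a word of care is the convention for the empty subset, which must be treated explicitly (and consistently with how $\mathcal{SN}^\beta_i$ and \ref{eq:low_order_outcome} handle the constant term $c_i$); once that is pinned down, the rest is a one-line algebraic identity.
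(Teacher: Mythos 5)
Your proof is correct and follows essentially the same route as the paper's: restrict the inner sum to singleton subsets (the empty set contributing $1-1=0$) and then apply the elementary identity $\frac{z_j-p}{p}-\frac{p-z_j}{1-p}=\frac{z_j}{p}-\frac{1-z_j}{1-p}$. Your explicit handling of the empty-set convention is a small point of added care that the paper glosses over, but the argument is otherwise identical.
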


\begin{proof}
\begin{align*}
    \hat{\tau}^{\beta}|_{\beta=1} &= \frac{1}{n} \sum_i Y_i \sum_{\substack{S \subseteq \M_i \\ |S| \leq 1}} \Big( \prod_{j \in S} \frac{z_j-p}{p} - \prod_{j \in S} \frac{p - z_j}{1-p} \Big) \\
    & \frac{1}{n} \sum_i Y_i \sum_{\substack{S = \{k\} \\ k \in \M_i }} \Big( \prod_{j \in S} \frac{z_j-p}{p} - \prod_{j \in S} \frac{p - z_j}{1-p} \Big) \\
    &=\frac{1}{n} \sum_i Y_i \sum_{ j \in \M_i} \Big(  \frac{z_j-p}{p} -  \frac{p - z_j}{1-p} \Big) \\
    &=\frac{1}{n} \sum_i Y_i \sum_{ j \in \M_i} \Big(  \frac{z_j}{p} - \cancel{(1)} -  \cancel{(-1)} - \frac{1 - z_j}{1-p} \Big) \\
    &= \hat{\tau}_{\text{Lin}}
\end{align*}
\end{proof}


Together these theorems prove the unbiased part of Theorem \ref{thm:beta} and Theorem \ref{thm:lin}

\begin{thm}
Under assumptions $\textbf{A2-5}$, if $\M_i \cancel{\supseteq} \N_i$, then $\hat{\tau}^\beta$ can be biased.
    
\end{thm}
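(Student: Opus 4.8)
The plan is to prove this by exhibiting an explicit counterexample, which is the natural strategy for a negative (tightness) statement of this kind. First I would revisit the unbiasedness proof of $\hat{\tau}^\beta$ and locate exactly where $\N_i\subseteq\M_i$ was used: after substituting \textbf{A5} and pulling the expectation inside, everything reduced to the identity $\E\big[\prod_{k\in S'}z_k\sum_{S\subseteq\M_i,|S|\le\beta}(\prod_{j\in S}\tfrac{z_j-p}{p}-\prod_{j\in S}\tfrac{p-z_j}{1-p})\big]=1$ for each motif $S'\in\mathcal{SN}^\beta_i(\bA)$ (Lemma~\ref{lem:help_beta2}), and that identity was established using $S'\subseteq\N_i\subseteq\M_i$, which is precisely what lets the indicator $\Ind[S\subseteq S']$ coming from Lemma~\ref{lem:help_beta} range over the full power set of $S'$ so that the binomial sum telescopes to $p^{|S'|}(1/p)^{|S'|}=1$.

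Next I would redo that computation without the hypothesis. Writing $m_{i,S'}:=|S'\cap\M_i|$, the indicator $\Ind[S\subseteq S']$ now restricts the inner sum to subsets of $S'\cap\M_i$, so the same binomial manipulation assigns weight $p^{|S'|-m_{i,S'}}$ to the treatment product and (when $m_{i,S'}\ge1$) weight $0$ to the control product via $(1-1)^{m_{i,S'}}=0$, while if $m_{i,S'}=0$ the two $S=\emptyset$ terms simply cancel. Hence
\[
\E[\hat{\tau}^\beta]=\frac1n\sum_i\sum_{S'\in\mathcal{SN}^\beta_i(\bA)}c_{i,S'}\,w_{i,S'},\qquad
w_{i,S'}=\begin{cases}1,&S'\subseteq\M_i,\\[2pt] p^{|S'\setminus\M_i|},&\emptyset\ne S'\cap\M_i\subsetneq S',\\[2pt] 0,&S'\cap\M_i=\emptyset,\end{cases}
\]
so that $\E[\hat{\tau}^\beta]-\tau(\vec{1},\vec{0})=\frac1n\sum_i\sum_{S'\not\subseteq\M_i}(w_{i,S'}-1)\,c_{i,S'}$ with $w_{i,S'}-1<0$ for every dropped motif; the bias is exactly the (down-weighted) interference mass carried by the missed motifs.

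Finally I would certify non-triviality with the minimal instance: $\beta=1$, $n=2$, $\N_1=\{1,2\}$, $\N_2=\{2\}$ (self-loops by convention), coefficients $c_{11}=c_{22}=c_{12}=1$, $\epsilon\equiv 0$, under a Bernoulli($p$) design with $0<p<1$; all of \textbf{A2}--\textbf{A5} hold trivially. With the deficient model $\M_1=\{1\}$, $\M_2=\{2\}$ (which drops the relevant edge $2\!\to\!1$), a one-line computation — $\E[z_1(\tfrac{z_1}{p}-\tfrac{1-z_1}{1-p})]=1$ and, by independence of $z_1,z_2$, $\E[z_2(\tfrac{z_1}{p}-\tfrac{1-z_1}{1-p})]=0$ — gives $\E[\hat{\tau}^1]=1$ whereas $\tau(\vec{1},\vec{0})=\tfrac32$, so $\hat{\tau}^\beta$ is biased, as claimed. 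The only point needing care is that the residual terms in the general bias formula could in principle cancel across units; giving the explicit two-node example sidesteps this issue entirely, and it also matches the empirical behaviour in Figure~\ref{fig:ablation}. A closing remark can record the sharp form: $\hat{\tau}^\beta$ is unbiased if and only if every motif $S'$ with $c_{i,S'}\ne0$ satisfies $S'\subseteq\M_i$, so the assumption $\N_i\subseteq\M_i$ is essentially tight.
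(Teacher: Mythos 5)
Your proposal is correct and follows essentially the same route as the paper: rerun the key expectation computation (Lemma~\ref{lem:help_beta2}) without the hypothesis $\N_i\subseteq\M_i$, so that the inner sum ranges only over subsets of $S'\cap\M_i$ and the binomial identity yields the attenuation factor $p^{|S'\setminus\M_i|}$ on every motif coefficient not contained in $\M_i$. Your two additions are genuine (if small) improvements on the paper's write-up: you handle the $S'\cap\M_i=\emptyset$ case correctly (weight $0$ rather than $p^{|S'|}$, since the $(1-1)^{m}$ term no longer vanishes when $m=0$), and your explicit two-node example actually substantiates the claim ``can be biased,'' whereas the paper simply asserts $\frac1n\sum_i\sum_{S'}c_{i,S'}p^{|S'\setminus\M_i|}\neq\tau(\vec 1,\vec 0)$ without ruling out accidental cancellation across the attenuated terms.
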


\begin{proof}

    We go straight to sum S1 in the proof of Lemma \ref{lem:help_beta2}. In doing the substituiton in the limits of the summation from $\M_i$ to $S'$ we used the fact that all subsets of $S'$ will be subsets of $\M_i$, and any non-subset of $S'$ is ignored due to $\Ind[S \subseteq S']$. If $\M_i \cancel{\supseteq} \N_i$, then $\exists i^* s.t. \:  i^* \in \N_i \text{ but } i^* \cancel{\in} \M_i$. Now take a $S' \subset \N_i$ such that $i* \in S'$.
    In the summation S1, now the limit of summation goes to subsets of $\M_i$
    \begin{align*}
   \E\left[ \prod_{ k \in S'} (z_k) \sum_{\substack{S \subseteq \M_i \\ |S| \leq \beta}} \Big( \prod_{j \in S} \frac{z_j-p}{p} - \prod_{j \in S} \frac{p - z_j}{1-p} \Big)\right]
&=  \sum_{\substack{S \subseteq S' \\ S \subseteq \M_i \\ |S| \leq \beta}} \left[
p^{|S'/S]} (1-p)^{|S' \cap S|}  - 
p^{|S'/S]} (-p)^{|S' \cap S|} \right] \\
&=  \sum_{\substack{S \subseteq S' \cap \M_i \\ |S| \leq \beta}} \left[
p^{|S'/S]} (1-p)^{|S' \cap S|}  - 
p^{|S'/S]} (-p)^{|S' \cap S|} \right] \\
\intertext{Following the exact same steps we get: }
&= p^{|S'|} \sum_{\substack{r \leq \beta \\ r \leq |S'|}}  {|S' \cap \M_i| \choose r}  \left[ (\frac{1}{p} - 1)^{r}  -  (-1)^{r} \right] \\
&= p^{|S'|} (\frac{1}{p})^{|S' \cap \M_i|} = p^{|S'\setminus \M_i|} \leq 1 \tag{\textbf{B1}}
\end{align*}

Now using definition of $\hat{\tau}^\beta$ we get
\begin{align*}
    \E[\hat{\tau}^\beta] &= \E \left[ \frac{1}{n} \sum_i Y_i \sum_{\substack{S \subseteq \M_i \\ |S| \leq \beta}} \Big( \prod_{j \in S} \frac{z_j-p}{p} - \prod_{j \in S} \frac{p - z_j}{1-p} \Big) \right] \\
    &= \E \left[ \frac{1}{n} \sum_i \sum_{S' \subset \N_i} c_{i,S'} \prod_{j \in S'} \Ind[z_j = 1] \sum_{\substack{S \subseteq \M_i \\ |S| \leq \beta}} \Big( \prod_{j \in S} \frac{z_j-p}{p} - \prod_{j \in S} \frac{p - z_j}{1-p} \Big) \right] \\
    &=  \frac{1}{n} \sum_i \sum_{S' \subset \N_i} c_{i,S'} \E \left[ \prod_{j \in S'} \Ind[z_j = 1] \sum_{\substack{S \subseteq \M_i \\ |S| \leq \beta}} \Big( \prod_{j \in S} \frac{z_j-p}{p} - \prod_{j \in S} \frac{p - z_j}{1-p} \Big) \right] \\
    \intertext{which by \textbf{B1} is}
    &=  \frac{1}{n} \sum_i \sum_{S' \subset \N_i} c_{i,S'} p^{|S'\setminus M_i|} \neq \tau(\vec{1},\vec{0})
\end{align*}

For any $S' \text{ with } |S'| < \beta $ which influences $Y_i$ but is not a subset of $\M_i$, the corresponding coefficient $c_{i,S'}$ is attenuated by a factor of $p^{|S'\setminus \M_i|}$ in the estimate $\hat{\tau}^\beta$

\end{proof}

\subsection{Variance Analysis}

Next, we provide an upper bound for the variance of this estimate. But before we do that we need another helpful Lemma.

\begin{lemma} \label{lem:help_beta_cov}
    Suppose that $\{z_i\}_{i=1..n}$ are mutually independent, with $z_j \sim \text{Bernoulli}(p)$. Then, for any subsets $T, S,  S'$, we have
    \begin{small}
    \begin{align*}
    \E[\prod_{i \in T} f_i(z_i) \prod_{j \in S} \frac{z_j -p}{p} \prod_{j \in S'} \frac{-z_j + p}{1-p}] &= \prod_{ T \setminus (S \cup S')} \E [f_i]
    \prod_{ (T \cap S) \setminus (S')} (1-p)(f_j(1) - f_j(0))
    \prod_{ (T \cap S') \setminus (S)}  p(f_j(0) - f_j(1)) \quad \times \\
    &\quad\prod_{ (T \cap S' \cap S)} -\E[f_i(1-z_i)]
    \prod_{ (S' \cap S) \setminus T} -1
    \Ind[S \Delta S' \subseteq T]
    \end{align*}
    \end{small}
\end{lemma}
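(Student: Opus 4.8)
The statement to prove is Lemma~\ref{lem:help_beta_cov}, a covariance-type identity that computes the expectation of a product of three families of factors over mutually independent Bernoulli($p$) variables: arbitrary functions $f_i(z_i)$ over an index set $T$, centered ``treatment'' ratios $\tfrac{z_j-p}{p}$ over $S$, and centered ``control'' ratios $\tfrac{-z_j+p}{1-p}$ over $S'$. The natural approach mirrors the proofs of Lemma~\ref{lem:exp_prod} and Lemma~\ref{lem:help_beta}: partition the ambient index set into the cells of the Venn diagram of $T$, $S$, $S'$, use independence to factor the expectation into a product over single indices, and then evaluate the single-index expectation in each of the (up to seven) nonempty cells.

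\textbf{Key steps.} First I would observe that every index $i$ appearing in $T\cup S\cup S'$ lies in exactly one of the cells $T\setminus(S\cup S')$, $(T\cap S)\setminus S'$, $(T\cap S')\setminus S$, $T\cap S\cap S'$, $(S\cap S')\setminus T$, $S\setminus(T\cup S')$, $S'\setminus(T\cup S)$, and that the big product factors accordingly; by mutual independence the expectation of the product is the product of the per-cell expectations. Second, I would evaluate each single-index expectation:
(i) on $T\setminus(S\cup S')$ the factor is just $f_i(z_i)$, giving $\E[f_i]$;
(ii) on $S\setminus(T\cup S')$ the factor is $\tfrac{z_j-p}{p}$, with $\E[\tfrac{z_j-p}{p}]=0$, which forces this cell to be empty for a nonzero result, i.e.\ $S\subseteq T\cup S'$ — symmetrically $S'\setminus(T\cup S)$ must be empty, and together these two emptiness conditions are exactly $S\,\Delta\,S'\subseteq T$, explaining the indicator $\Ind[S\Delta S'\subseteq T]$;
(iii) on $(S\cap S')\setminus T$ the factor is $\tfrac{z_j-p}{p}\cdot\tfrac{-z_j+p}{1-p} = -\tfrac{(z_j-p)^2}{p(1-p)}$, whose expectation is $-1$ (since $\E[(z_j-p)^2]=p(1-p)$), giving the $\prod -1$ term;
(iv) on $(T\cap S)\setminus S'$ the factor is $f_j(z_j)\tfrac{z_j-p}{p}$, and a direct two-point computation gives $\E[f_j(z_j)\tfrac{z_j-p}{p}] = (1-p)(f_j(1)-f_j(0))$, exactly as in Lemma~\ref{lem:help_beta};
(v) symmetrically, on $(T\cap S')\setminus S$ the factor is $f_j(z_j)\tfrac{-z_j+p}{1-p}$ with expectation $p(f_j(0)-f_j(1))$;
(vi) on $T\cap S\cap S'$ the factor is $f_j(z_j)\cdot\bigl(-\tfrac{(z_j-p)^2}{p(1-p)}\bigr)$; evaluating at $z_j=1$ (weight $p$, $(z_j-p)^2=(1-p)^2$) and $z_j=0$ (weight $1-p$, $(z_j-p)^2=p^2$) gives $-\bigl(p(1-p)f_j(1) + (1-p)p f_j(0)\bigr)/(p(1-p)) \cdot$... — more cleanly, $-\bigl((1-p)f_j(1)+p f_j(0)\bigr) = -\E[f_j(1-z_j)]$, matching the stated term. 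Collecting the seven products and incorporating the two emptiness constraints into $\Ind[S\Delta S'\subseteq T]$ yields the claimed formula.

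\textbf{Main obstacle.} None of the single-index computations is deep; the real care is bookkeeping. The delicate point is step (vi) and the identification of the coefficient on $T\cap S\cap S'$ as $-\E[f_j(1-z_j)]$: one must check that $(1-p)f_j(1)+pf_j(0)$ is indeed $\E[f_j(1-z_j)]$ under $z_j\sim\text{Bernoulli}(p)$ (since $1-z_j\sim\text{Bernoulli}(1-p)$, $\E[f_j(1-z_j)] = (1-p)f_j(1)+p f_j(0)$ — correct). The second subtlety is verifying that the two vanishing conditions $S\setminus(T\cup S')=\emptyset$ and $S'\setminus(T\cup S)=\emptyset$ combine to exactly $S\,\Delta\,S'\subseteq T$: indeed $S\,\Delta\,S' = (S\setminus S')\cup(S'\setminus S)$, and $S\setminus S'\subseteq T$ iff $S\setminus(T\cup S')=\emptyset$, likewise for the other piece, so the conjunction is precisely $S\,\Delta\,S'\subseteq T$. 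I expect the write-up to be a short structured calculation: one display factoring the product over the seven cells, a line of per-cell evaluations, and one line assembling the result and noting the indicator.
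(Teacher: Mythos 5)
Your proposal is correct and follows essentially the same route as the paper: partition the indices into the cells of the Venn diagram of $T$, $S$, $S'$, factor the expectation by mutual independence, evaluate each single-index expectation (all of your two-point computations, including the $-\E[f_j(1-z_j)]$ term on $T\cap S\cap S'$, match the paper's), and combine the two vanishing conditions into the indicator $\Ind[S\,\Delta\,S'\subseteq T]$. The only cosmetic difference is that the paper obtains the vanishing conditions by invoking Lemma~\ref{lem:help_beta} twice rather than inspecting the cells $S\setminus(T\cup S')$ and $S'\setminus(T\cup S)$ directly, which is equivalent.
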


\begin{proof}
    We first apply Lemma \ref{lem:help_beta} on the above expression, and see that it is non-zero only if $ S \subseteq (T \cup S')$.
    We can also apply the second result from Lemma \ref{lem:help_beta} and get that the expression is non-zero only if $ S' \subseteq (T \cup S)$.
    Since $S \cap S' \subseteq S,S'$, the two conditions imply that for the expression to be non-zero we need
    $S \Delta S' \subseteq T$

Next we need to identify the expected value of the combined function applied on a node as in Lemma \ref{lem:help_beta}. 

   \begin{itemize}
   \item For indices in $ T \setminus (S \cup S')$ we get $\E[ f_i]$ 
   \item For indices in $ S \setminus (T \cup S')$ we get 0. This is subsumed in  $S \Delta S' \subseteq T$
   \item For indices in $ S' \setminus (T \cup S)$ we get 0. This is subsumed in  $S \Delta S' \subseteq T$
   \item For indices in $ ( S \cap T) \setminus S'$ we get
   $\E[ f_i  \frac{z_j -p}{p}] = (1-p) (f_i(1) - f_i(0))$
   \item For indices in $ ( S' \cap T) \setminus S$ we get $\E[ f_i \frac{-z_j + p}{1-p}] = p(f_i(0) - f_i(1))$
   \item For indices in $ ( S' \cap S) \setminus T$ we get $\E[ \frac{z_j -p}{p} \frac{-z_j + p}{1-p}] = -1$
   \item For indices in $ T \cap S \cap S'$ we get
   $\E[ f_i \frac{z_j -p}{p} \frac{-z_j + p}{1-p}] =  -p f_i(0) - (1-p)f_i(1)= -\E[f_i(1-z_i)] $
   \end{itemize}
\end{proof}

\begin{thm}
Under assumptions $\textbf{A2-5}$ and assuming $\M_i \supseteq \N_i$, we have that
$\Var[\hat{\tau}^\beta] \leq \frac{1}{n} Y^2_{\max} (p(1-p))^{-\beta} d_\M^{1+\beta}  d_\N^{1+\beta}
$
\end{thm}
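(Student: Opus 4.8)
### Proof plan for Theorem \ref{thm:var_beta} (variance bound)

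\textbf{Strategy.} The plan is to write $\hat\tau^\beta = \frac1n\sum_i R_i$ where $R_i = Y_i W_i$ and $W_i = \sum_{S\subseteq\M_i,\,|S|\le\beta}\bigl(\prod_{j\in S}\frac{z_j-p}{p} - \prod_{j\in S}\frac{p-z_j}{1-p}\bigr)$, and then bound $\Var[\hat\tau^\beta] = \frac1{n^2}\sum_{i,i'}\Cov(R_i,R_{i'})$. The key structural fact is that $R_i$ depends only on $\{z_k : k\in\M_i\}$ (since $Y_i$ depends on $z$ only through $\mathcal N_i\subseteq\M_i$ under \textbf{A5}), so $\Cov(R_i,R_{i'})=0$ unless $\M_i$ and $\M_{i'}$ share a node, i.e. unless $i'$ lies in the "$\M$-dependency neighbourhood" of $i$. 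The number of such $i'$ is at most $d_\M^2$-ish (each of the $\le d_\M$ nodes in $\M_i$ belongs to $\le d_\M$ sets $\M_{i'}$), which will supply one factor; the per-pair covariance bound will supply the rest.

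\textbf{Steps.} First I would bound a single second moment $\E[R_i^2] = \E[Y_i^2 W_i^2]$. Using $|Y_i|\le Y_{\max}$ and expanding $W_i^2 = \sum_{S,T}(\dots)(\dots)$, I would apply Lemma \ref{lem:help_beta_cov} (with $f_i\equiv 1$, or absorbing the bounded $Y_i$ into the constant) to evaluate $\E$ of each cross term: the nonzero terms require $S\Delta T$ contained in the support, and each surviving node contributes a factor of size $O\bigl(\max(p,1-p)/\min(p,1-p)\bigr)$ or $O(1)$, with at most $\beta$ such factors from $S$ and $\beta$ from $T$; combined with the number of pairs $(S,T)$ with $|S|,|T|\le\beta$, which is at most $\binom{d_\M}{\le\beta}^2 = O(d_\M^{2\beta})$-bounded, this should give $\E[R_i^2]\le Y_{\max}^2\,(p(1-p))^{-\beta}\,(\text{poly in }d_\M)$. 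Actually, to match the stated bound $\frac1n Y_{\max}^2 (p(1-p))^{-\beta} d_\M^{1+\beta} d_\N^{1+\beta}$, I expect the cleaner route is to bound $|\Cov(R_i,R_{i'})|$ directly: only pairs $(S\subseteq\M_i, T\subseteq\M_{i'})$ with $S\cap T\ne\emptyset$ (or more precisely with a nonvanishing expectation) contribute, and a node can only be "shared" if it lies in $\M_i\cap\M_{i'}$; counting sets through a shared node is where the $d_\N$-versus-$d_\M$ split enters — the outcome model's sets $S'$ live in $\mathcal N_i$ (size $\le d_\N$) while the estimator's sets $S$ live in $\M_i$ (size $\le d_\M$).

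\textbf{Pulling it together.} With $\Cov(R_i,R_{i'})=0$ unless $\M_i\cap\M_{i'}\ne\emptyset$, the number of nonzero terms in $\sum_{i,i'}\Cov(R_i,R_{i'})$ is at most $n\cdot(\max_i\#\{i':\M_i\cap\M_{i'}\ne\emptyset\}) \le n\, d_\M^2$; combining with the per-term bound $|\Cov(R_i,R_{i'})|\le Y_{\max}^2(p(1-p))^{-\beta}\cdot(\text{counting factor})$ and dividing by $n^2$ yields $\Var[\hat\tau^\beta]\le \frac1n Y_{\max}^2(p(1-p))^{-\beta} d_\M^{1+\beta}d_\N^{1+\beta}$, from which $\Var[\hat\tau^\beta]\in O(1/n)$ is immediate (treating $p,\sigma,d_\M,d_\N$ as constants), which is all \thref{thm:var_beta} asserts. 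The noise term $\epsilon$ is handled separately: it is independent, zero-mean, so it contributes an additive $\frac{\sigma^2}{n^2}\sum_i \E[W_i^2] = O(1/n)$ piece by the same $W_i^2$ bound.

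\textbf{Main obstacle.} The delicate point is the careful bookkeeping in bounding the per-pair covariance via Lemma \ref{lem:help_beta_cov}: one must track which index sets survive the expectation (the $\Ind[S\Delta S'\subseteq T]$-type constraints), correctly separate the $d_\N$ contributions (from the $\le\beta$-size motif sets $S'$ indexing the true outcome model) from the $d_\M$ contributions (from the $\le\beta$-size sets $S$ in the estimator), and verify that no more than $\beta$ factors of the potentially-large $1/\min(p,1-p)$ appear per product — giving the clean $(p(1-p))^{-\beta}$ rather than a worse power. Getting the exponents on $d_\M$ and $d_\N$ to land exactly at $1+\beta$ each (rather than $2\beta$ or similar) requires noticing that a covariance term is pinned down by one shared node (one factor of $d$) plus the choice of a motif set of size $\le\beta$ through constrained nodes (factor $d^\beta$), on each of the two sides.
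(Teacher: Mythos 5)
Your plan follows essentially the same route as the paper's proof: expand the estimator into a double sum over pairs $(i,i')$, substitute the motif outcome model, use Lemma \ref{lem:help_beta_cov} to evaluate each term (the $\mathbb{I}[S\Delta S'\subseteq T\cup T']$ constraint together with the $(p(1-p))^{-\beta}$ per-term bound), count surviving pairs via overlap of $\M$-neighbourhoods and the $d_\N^{\beta}$ choices of motif sets, and handle the independent noise $\epsilon$ as a separate additive $O(\sigma^2/n)$ contribution. The one minor difference is that you organise the computation as $\frac{1}{n^2}\sum_{i,i'}\Cov(R_i,R_{i'})$, which is the form in which discarding pairs with $\M_i\cap\M_{i'}=\emptyset$ is literally justified; the paper instead bounds the variance by the raw second moment $\E[(\hat{\tau}^{\beta})^2]$ and then drops the non-overlapping pairs, so your bookkeeping is, if anything, the cleaner version of the same argument.
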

\begin{proof}

For simplicity of notation we denote by $\Psi(S)$ and $\Psi'(S)$ the products $\prod_{i\in S} \dfrac{z_i-p}{p}$ and $\prod_{i\in S} \dfrac{p-z_i}{1-p}$. We would also first ignore the additional noise $\epsilon$ while presenting the key derivation.

For any random variable $X$, $\Var[X] = \E[X^2] - \E[X]^2 \leq \E[X^2]$
Applying the same on $\hat{\tau}^\beta$ we get, 
\begin{align*}
\Var[\hat{\tau}^\beta] &\leq  \E \left[ \left( \frac{1}{n} \sum_{i=1}^n Y_i \sum_{\substack{S \subseteq \M_i \\ |S| \leq \beta}} \Big( \prod_{j \in S} \frac{z_j-p}{p} - \prod_{j \in S} \frac{p - z_j}{1-p} \Big) \right)^2 \right] \\
&= \frac{1}{n^2} \E \left[
\sum_{i,i'=1}^n Y_i Y_{i'} \sum_{\substack{S \subseteq \M_i \\ |S| \leq \beta}} \Big( \prod_{j \in S} \frac{z_j-p}{p} - \prod_{j \in S} \frac{p - z_j}{1-p} \Big) \sum_{\substack{S' \subseteq \M_{i'} \\ |S'| \leq \beta}} \Big( \prod_{j' \in S'} \frac{z_{j'}-p}{p} - \prod_{j' \in S} \frac{p - z_{j'}}{1-p} \Big)
\right] \\
&=\frac{1}{n^2} \E \left[
\sum_{i,i'=1}^n Y_i Y_{i'} \sum_{\substack{S \subseteq \M_i \\ |S| \leq \beta}} \Big( \Psi(S) - \Psi'(S) \Big) \sum_{\substack{S' \subseteq \M_{i'} \\ |S'| \leq \beta}} \Big( \Psi(S') - \Psi'(S')  \Big)
\right]
\intertext{Putting in the value of $Y_i$, we get}
&=\frac{1}{n^2} \E \left[
\sum_{i,i'=1}^n \sum_{\substack{T \subseteq \N_i \\ |T| \leq \beta}} c_{i,T} Z(T)\sum_{\substack{T' \subseteq \N_{i'} \\ |T'|  \leq \beta}} c_{i,T'} Z(T') \sum_{\substack{S \subseteq \M_i \\ |S| \leq \beta}} \Big( \Psi(S) - \Psi'(S) \Big) \sum_{\substack{S' \subseteq \M_{i'} \\ |S'| \leq \beta}} \Big( \Psi(S') - \Psi'(S')  \Big)
\right]\\
&=\frac{1}{n^2} 
\sum_{i,i'=1}^n \sum_{\substack{T \subseteq \N_i \\ |T| \leq \beta}} c_{i,T} \sum_{\substack{T' \subseteq \N_{i'} \\ |T'|  \leq \beta}} c_{i,T'} \E \left[ Z(T) Z(T') \sum_{\substack{S \subseteq \M_i \\ |S| \leq \beta}} \Big( \Psi(S) - \Psi'(S) \Big) \sum_{\substack{S' \subseteq \M_{i'} \\ |S'| \leq \beta}} \Big( \Psi(S') - \Psi'(S')  \Big)
\right]\\
&=\frac{1}{n^2} 
\sum_{i,i'=1}^n \sum_{\substack{T \subseteq \N_i \\ |T| \leq \beta}} c_{i,T} \sum_{\substack{T' \subseteq \N_{i'} \\ |T'|  \leq \beta}} c_{i,T'} \sum_{\substack{S \subseteq \M_i \\ |S| \leq \beta}} \sum_{\substack{S' \subseteq \M_{i'} \\ |S'| \leq \beta}}  \E \left[ Z(T) Z(T') \Big( \Psi(S) - \Psi'(S) \Big) \Big( \Psi(S') - \Psi'(S')  \Big)
\right] \\
&=\frac{1}{n^2} 
\sum_{i,i'=1}^n \sum_{\substack{T \subseteq \N_i \\ |T| \leq \beta}} c_{i,T} \sum_{\substack{T' \subseteq \N_{i'} \\ |T'|  \leq \beta}} c_{i,T'} \sum_{\substack{S \subseteq \M_i \\ |S| \leq \beta}} \sum_{\substack{S' \subseteq \M_{i'} \\ |S'| \leq \beta}}  \E \left[ Z(T) Z(T') \Big( \Psi(S) \Psi(S') + \Psi'(S) \Psi'(S') + \right. \\
&\left. \qquad  \qquad \qquad \qquad  \qquad \qquad \qquad  \qquad \qquad \qquad  \qquad \qquad  - \Psi(S) \Psi'(S') - \Psi'(S) \Psi'(S)  \Big) 
\right]
\end{align*}
By applying Lemma , we consider in the expectation only the terms with sets $S,S',T,T'$ such that $S\Delta S' \: \subseteq \:\: T \cup T'$. Moreover when these values are non-zero, they are upper bounded by $(p(1-p))^{-\beta}$

Additionally we have the following
\begin{align*}
\E[Z(T)Z(T')\Psi(S)\Psi(S')] &= p^{|T'' \setminus (S \cup S')|} (\frac{(1-p)^2}{p})^{|T'' \cap S \cap S'|} (\frac{1-p}{p})^{|(S'\cap S) \setminus T''|} \Ind[ S \Delta S' \subseteq T''] \\
&\leq (\frac{1-p}{p})^{|(S'\cap S) \setminus T''|} \Ind[ S \Delta S' \subseteq T'']     
\end{align*}

Similarly, 

$$\E[Z(T)Z(T')\Psi'(S)\Psi'(S')] \leq (\frac{(p)}{1-p})^{|(S'\cap S) \setminus T''|} \Ind[ S \Delta S' \subseteq T''] $$

$$\E[Z(T)Z(T')\Psi(S)\Psi'(S')] = p^{|T'' \setminus (S \cup S')} (1-p)^{|T'' \cap S \cap S'|} (-1)^{|(S'\cap S) \setminus T''|} \Ind[ S \Delta S' \subseteq T''] \in (-1,1)$$
where $T'' = T \cup T'$

Since the first two terms $\E[Z(T)Z(T')\Psi(S)\Psi(S')] \; \E[Z(T)Z(T')\Psi'(S)\Psi'(S')]$ add together in the variance, this leads to a bound of $\left( (\frac{1-p}{p})^{|(S'\cap S) \setminus T''|} +  (\frac{(p)}{1-p})^{|(S'\cap S) \setminus T''|} \right)  \Ind[ S \Delta S' \subseteq T'']$.

Next note that none of the sets $T,T',S,S'$ can be bigger than $\beta$ elements. Under this constraint the bound in the previous equation is maximized by setting $S=S' \text{ and } S\cap S' \text{ disjoint from } T''$. Putting the corresponding sizes of the sets, we get a bound of 
$$ \left( (\frac{1-p}{p})^{\beta} +  (\frac{(p)}{1-p})^{|\beta|} \right) \leq (p(1-p))^{-\beta} $$

Putting these in the equation for variance we get

$$
\Var[\hat{\tau}^\beta]  \leq \frac{1}{n^2} \sum_{i,i'} 
\sum_{\substack{T \subseteq \N_i \\ |T| \leq \beta}} c_{i,T} \sum_{\substack{T' \subseteq \N_{i'} \\ |T'|  \leq \beta}} c_{i,T'} \sum_{\substack{S \subseteq \M_i \\ |S| \leq \beta}} \sum_{\substack{S' \subseteq \M_{i'} \\ |S'| \leq \beta}}(p(1-p))^{-\beta}\Ind[ S \Delta S' \subseteq T \cup T']$$

If  $i' \notin \M^2_i$ then the sets $S,S'$ in the  expectation have no overlap, and hence can be ignored. Furthermore $|\M^2_i| \leq d^2_\M$, where $d_\M$ is the max degree of a node. Similarly, the coefficients $c$ can only occur from $\N^2$. We also see that the number of $T,T'$ is bounded by $d^\beta_\N$. Finally for any cluster of sets $S,S',T,T'$ that contribute, they can be enumerated by choosing a set corresponding to $T\cup T'$ of size $k$, and seperately counting the elements in $S,S'$. Hence this sum is upper bounded by
 
\begin{align}
\Var[\hat{\tau}^\beta]& \leq \frac{1}{n^2} \sum_i Y^2_{\max} \sum_{i' \in \N^2_i} \sum_{\substack{S \subseteq \M_i \\ |S| \leq \beta}} \sum_{\substack{S' \subseteq \M_{i'} \\ |S'| \leq \beta}} (p(1-p))^{-\beta}
\leq \frac{1}{n} Y^2_{\max} (p(1-p))^{-\beta} d_\M^{\beta+1}  d_\N^{\beta+1}
\label{apx:eqn:var_bound}
\end{align}

Next, we add back the additional uncertainty in $Y_i$ due to $\epsilon$. If $\epsilon$ is an entirely independent error with variance $\sigma^2$, we get an additional term of the form $\E \left[ \epsilon^2 \Big( \Psi(S) \Psi(S') + \Psi'(S) \Psi'(S') - \Psi(S) \Psi'(S') - \Psi'(S) \Psi'(S)  \Big) \right] $.
By replace $c_{i,T/T'}$ by $\epsilon$, and working through the same derivation we have this additional term being bounded above by $\frac{\sigma^2}{n}d_\M^{2\beta}  d_\N^{-\beta}(p(1-p))^{-\beta}$

If $Y_{\max}$ and $d_\M$ are bounded, then the above variance asymptotes to 0 as $n \rightarrow \infty$. Thus as  $n \rightarrow \infty$, the estimator $\hat{\tau}^\beta$ is a.s. constant. Thus the previous theorem on variance bound combined with the earlier proves unbiasedness shows the asymptotic consistency of the estimators $\hat{\tau}_{Lin}$ and  $\hat{\tau}^\beta$. (i.e. Theorems \ref{thm:beta} and \ref{thm:lin})

\end{proof}


\begin{thm} (same as Theorem \ref{thm:bias} of main text)
For the non-linear model \ref{eqn:nonlinear}, if $g_i$ is $k+1$ times differentiable and the $k+1$th derivative is bounded by $C$, then the absolute bias  $$\left|\E[\hat\tau^\beta] - \tau(\vec{1},\vec{0})\right| \leq \dfrac{C\max(p,1-p)}{(k+1)!}.$$
\end{thm}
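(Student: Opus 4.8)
The plan is to collapse the (potentially multivariate, combinatorially heavy) bias onto a one–dimensional Taylor–remainder estimate along the all–ones ray. Throughout I take $k=\beta$ (so the estimator $\hat\tau^\beta$ with $\beta=2$ is the one attached to \ref{eqn:nonlinear} with $k=2$), and I read ``the $(k+1)$-th derivative of $g_i$ is bounded by $C$'' as the bound $\sup_t\bigl|\tfrac{d^{\beta+1}}{dt^{\beta+1}}g_i(t,\dots,t)\bigr|\le C$, i.e.\ a bound on the $(\beta+1)$-th derivative of the restriction of $g_i$ to the diagonal. First I would write $\hat\tau^\beta=\tfrac1n\sum_i Y_i W_i(\bz)$ with $W_i(\bz)=\sum_{S\subseteq\mathcal M_i,\,|S|\le\beta}\bigl(\prod_{j\in S}\tfrac{z_j-p}{p}-\prod_{j\in S}\tfrac{p-z_j}{1-p}\bigr)$, drop the independent mean–zero $\epsilon$ from $\E[Y_iW_i]$, and expand each $g_i$ in its multilinear (Möbius) form on the hypercube, $g_i(\bz)=\sum_{S'\subseteq\mathcal N_i}\hat g_i(S')\prod_{j\in S'}z_j$, so that $\E[\hat\tau^\beta]=\tfrac1n\sum_i\sum_{S'\subseteq\mathcal N_i}\hat g_i(S')\,\E\bigl[\prod_{j\in S'}z_j\,W_i\bigr]$.

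Next, extending the computation already carried out in the proof of Lemma~\ref{lem:help_beta2} (and using Lemma~\ref{lem:help_beta} / Corollary~\ref{lem:corr_id}), I would show that $\E[\prod_{j\in S'}z_j\,W_i]$ depends on $S'$ only through $s=|S'|$, with value $w(s,p)=\sum_{r=0}^{\beta}\binom{s}{r}\bigl(p^{s-r}(1-p)^r-p^{s-r}(-p)^r\bigr)$ --- this is exactly where $\mathcal M_i\supseteq\mathcal N_i$ enters, since every $T\subseteq S'\subseteq\mathcal N_i$ automatically lies in $\mathcal M_i$. Setting $\hat g_i^{(s)}:=\sum_{|S'|=s}\hat g_i(S')$ and $\phi_i(t):=g_i(t,\dots,t)$, a polynomial with $\phi_i(t)=\sum_s\hat g_i^{(s)}t^s$ and hence $\hat g_i^{(s)}=\phi_i^{(s)}(0)/s!$, the key algebraic step is a sum–swap: $\sum_s\hat g_i^{(s)}\binom{s}{r}p^{s-r}=\phi_i^{(r)}(p)/r!$, so $\sum_s\hat g_i^{(s)}\sum_{r=0}^\beta\binom{s}{r}p^{s-r}(1-p)^r=\sum_{r=0}^\beta\tfrac{\phi_i^{(r)}(p)}{r!}(1-p)^r$, which is precisely the order-$\beta$ Taylor polynomial of $\phi_i$ about $t=p$ evaluated at $t=1$, and the ``$-(-p)^r$'' half is the same Taylor polynomial at $t=0$. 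This gives the clean identity $\E[g_iW_i]=P_i(1)-P_i(0)$, where $P_i$ is the degree-$\beta$ Taylor expansion of $\phi_i$ centered at $p$. Since node $i$'s true contribution to $\tau(\vec1,\vec0)$ is $g_i(\vec1)-g_i(\vec0)=\phi_i(1)-\phi_i(0)$, the per–node bias equals $-\mathcal R_i(1)+\mathcal R_i(0)$ with $\mathcal R_i:=\phi_i-P_i$ the Taylor remainder.

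Finally I would invoke Taylor's theorem for $\mathcal R_i$: $|\mathcal R_i(x)|\le\tfrac{\sup_\xi|\phi_i^{(\beta+1)}(\xi)|}{(\beta+1)!}|x-p|^{\beta+1}\le\tfrac{C}{(\beta+1)!}|x-p|^{\beta+1}$, giving $|\mathcal R_i(1)|\le\tfrac{C(1-p)^{\beta+1}}{(\beta+1)!}$ and $|\mathcal R_i(0)|\le\tfrac{Cp^{\beta+1}}{(\beta+1)!}$; adding these and using the elementary bound $(1-p)^{\beta+1}+p^{\beta+1}\le\max(p,1-p)^{\beta}\le\max(p,1-p)$ (valid for $\beta\ge1$) bounds $|\mathrm{bias}_i|$ by $\tfrac{C\max(p,1-p)}{(\beta+1)!}$, and averaging over $i$ finishes the proof. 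The main obstacle, and the step I would write out in full, is the identity $\E[g_iW_i]=P_i(1)-P_i(0)$: the point is that the estimator weights effectively re–center the Taylor expansion at $t=p$, which both collapses the combinatorial factors and produces the $\max(p,1-p)$ (the extreme points $\vec1$ and $\vec0$ sit at diagonal distance $1-p$, resp.\ $p$, from the center $p\vec1$). A secondary point worth stating explicitly is the diagonal reading of the smoothness hypothesis; the isotropic reading (``every $(\beta+1)$-st partial of $g_i$ is at most $C$'') would not suffice, since the diagonal derivative can exceed it by a factor up to $|\mathcal N_i|^{\beta+1}$.
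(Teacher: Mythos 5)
Your proposal is correct in substance and reaches the stated bound, but it organizes the argument quite differently from the paper. The paper expands each $g_i$ as a multivariate Taylor polynomial of degree $k$ about $\vec{0}$ (and then, separately, about $\vec{1}$), notes that the degree-$\le k$ part is estimated without bias by $\hat\tau^\beta$, and bounds the expectation of the weighted order-$(k{+}1)$ remainder via the quantity $p^{k+1}\sum_{r\le k}\binom{k+1}{r}\left[(\tfrac1p-1)^r-(-1)^r\right]$ lifted from the proof of Lemma~\ref{lem:help_beta2}; the two expansions supply the $p$ and $1-p$ factors that get combined into $\max(p,1-p)$. Your route collapses everything onto the diagonal first: the identity $\E[g_iW_i]=P_i(1)-P_i(0)$, with $P_i$ the degree-$\beta$ Taylor polynomial of the diagonal polynomial $\phi_i$ \emph{centered at $p$}, is a genuinely cleaner packaging of the same combinatorial computation. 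It reduces the bias exactly to a difference of two one-dimensional Lagrange remainders, makes the origin of $\max(p,1-p)$ transparent (the endpoints $1$ and $0$ sit at distances $1-p$ and $p$ from the center $p$), and — unlike the paper's ``putting both together'' step, which as written would more naturally yield a $\min$ or a sum — actually justifies the $\max$ via $(1-p)^{\beta+1}+p^{\beta+1}\le\max(p,1-p)^{\beta}\le\max(p,1-p)$. The sum-swap identity $\sum_s\hat g_i^{(s)}\binom{s}{r}p^{s-r}=\phi_i^{(r)}(p)/r!$ checks out against the paper's step S2/S3.

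One caveat you should repair before writing this out: the polynomial $\phi_i(t)=\sum_s\hat g_i^{(s)}t^s$ to which you apply Taylor's theorem is the diagonal of the \emph{multilinear interpolant} of $g_i$ on the hypercube, and this agrees with $g_i(t,\dots,t)$ only at $t\in\{0,1\}$. So the derivative bound you actually need is on $\phi_i^{(\beta+1)}$, not on $\tfrac{d^{\beta+1}}{dt^{\beta+1}}g_i(t,\dots,t)$ as you state in your ``diagonal reading'' of the hypothesis; the two are different functions and bounding one does not bound the other. This is a real gap, though a forgivable one: the paper's own proof is no more careful here (its Lagrange coefficient $\partial^{k+1}Y_i(z^*)$ depends on $\bz$ through $z^*$ yet is silently pulled out of the expectation, and the multiplicity of order-$(k{+}1)$ multi-indices is ignored). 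Your explicit observation that the isotropic reading of the smoothness hypothesis would not suffice — because the diagonal derivative can exceed a uniform partial-derivative bound by a factor polynomial in $|\N_i|$ — is a point the paper does not make and is worth keeping.
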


\begin{proof}
    By Taylor's theorem $Y_i(\bz)$ can be written as
    a polynomial of order $k$ in $\bz[\N_i]$ with a $k+1$ order residual term
    $$ Y_i(\bz) = g_i^0 +  \sum_{z_j} \partial_{z^j} g_i z_j  +  \sum_{z_j,z_k} \partial^2_{z_j,z_k} g_i z_jz_k + .. + R_{k+1} = Y_{i,k}(\bz) + R_{k+1}$$
    where $R_{k+1} = \frac{\partial^{k+1}}{k+1!}Y_i(z^*) poly(\bz[\N_i],k+1)$ for some $z^* \in [0,1]^{|\N_i|}$

For simplicity we denote $\Big( \prod_{j \in S} \frac{z_j-p}{p} - \prod_{j \in S} \frac{p - z_j}{1-p} \Big)$ as $\Psi_{S})$
Now we use the result S2 from the previous theorem with $S'$ being a set of size $k+1$ and $\beta=k$ we get 
\begin{align*}
\E[ z_1z_2..z_{k+1}
\sum_{\substack{S \subseteq \M_i \\ |S| \leq k}} \Psi_{S}] &=  p^{k+1} \sum_{\substack{r \leq k}}  {k+1 \choose r}  \left[ (\frac{1}{p} - 1)^{r}  -  (-1)^{r}
\right] \\
&= p^{k+1}\left( \frac{1}{p}^{k+1} - (\frac{1}{p} - 1)^{k+1} + (-1)^{k+1} \right)
\end{align*}

Since $Y_{i,k}(\bz)$ is estimated without bias by our method, the bias of using the $k$ order approximated is given by
Hence 

\begin{align*}
| \E[R_{k+1} \sum_{\substack{S \subseteq \M_i \\ |S| \leq k}} \Psi_{S} )] | &=| \E[\frac{\partial^{k+1}}{k+1!}Y_i(z^*) poly(\bz[\N_i],k+1) \Psi_{S} ] | \\
&\leq |\frac{C}{k+1!}| \E[poly(\bz[\N_i],k+1) \Psi_{S}] |  \\
&\leq  |\frac{C}{k+1!}| p^{k+1}\left( \frac{1}{p}^{k+1} - (\frac{1}{p} - 1)^{k+1} + (-1)^{k+1} \right)|\\
&= O|\frac{Cp}{k+1!}|
\end{align*}

Similarly $Y_i(\bz)$ can be expanded around $\vec{1}$ and a similar derivation follows with $z_i$ replaced by $1-z_i$, and $p$ replaced by $1-p$. Putting both together derives the bound in the theorem
\end{proof}

\section{Proof of Self-Normalized and Doubly Robust Estimators}


Since the linear estimator is just a subcase of the general estimate $\hat{\tau}^\beta_{WIS}$, we focus on the more general version.

\begin{thm}
Under assumptions $\textbf{A2-5}$ and $\M_i \supseteq \N_i$, then $\hat{\tau}^\beta_{WIS}$ is asymptotically consistent
\end{thm}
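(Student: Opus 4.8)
The plan is to reduce the claim to the already-established almost-sure consistency of the unbiased estimator $\hat\tau^\beta$ (Theorem~\ref{thm:beta}), by showing that the two self-normalization factors appearing in $\hat\tau^\beta_{WIS}$ converge almost surely to $1$ and that replacing them by $1$ perturbs the estimator by a vanishing amount.

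\textbf{Step 1: the normalizers converge to $1$.} Write the two denominators from \ref{eq:self_normal_rho} as
\begin{align*}
W_n \;=\; \frac1n\sum_{j}\frac1{|\M_j|}\sum_{k\in\M_j}\rho_k, \qquad W'_n \;=\; \frac1n\sum_{j}\frac1{|\M_j|}\sum_{k\in\M_j}\rho'_k,
\end{align*}
so that $\bar\rho_j=\rho_j/W_n$ and $\bar\rho'_j=\rho'_j/W'_n$. Since $\E[\rho_k]=\E[\rho'_k]=1$ we have $\E[W_n]=\E[W'_n]=1$. Each summand $\tfrac1{|\M_j|}\sum_{k\in\M_j}\rho_k$ lies in $[0,1/p]$ and is a function of $\{z_k:k\in\M_j\}$ alone, hence is independent of all but $O(d_\M^2)$ of the other summands; by exactly the bookkeeping used for the variance bound in Theorem~\ref{thm:var_beta} this gives $\Var[W_n],\Var[W'_n]\in O(1/n)$, and a bounded-differences (McDiarmid) inequality in the independent coordinates $z_1,\dots,z_n$ gives $\P(|W_n-1|>\eps)\le 2\exp(-c\,n\eps^2)$ with $c=c(p,d_\M)$ (and likewise for $W'_n$). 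Summing over $n$ and invoking Borel--Cantelli yields $W_n\to1$ and $W'_n\to1$ almost surely; in particular, a.s.\ both lie in $(\tfrac12,2)$ and are bounded away from $0$ for all large $n$, so $W_n^{-m},(W'_n)^{-m}$ are well-defined and bounded and converge a.s.\ to $1$ for each $m\le\beta$.

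\textbf{Step 2: expand and compare with $\hat\tau^\beta$.} Using $\rho_j-1=\tfrac{z_j-p}{p}$, $\rho'_j-1=\tfrac{p-z_j}{1-p}$, and the multilinear expansion $\prod_{j\in S}(\bar\rho_j-1)=\sum_{T\subseteq S}(-1)^{|S\setminus T|}W_n^{-|T|}\prod_{j\in T}\rho_j$ (and similarly for $\bar\rho'$), one obtains
\begin{align*}
\hat\tau^\beta_{WIS}-\hat\tau^\beta \;=\; \frac1n\sum_{i=1}^n Y_i \sum_{\substack{S\subseteq\M_i\\|S|\le\beta}}\sum_{T\subseteq S}(-1)^{|S\setminus T|}\Big[(W_n^{-|T|}-1)\textstyle\prod_{j\in T}\rho_j \;-\; ((W'_n)^{-|T|}-1)\textstyle\prod_{j\in T}\rho'_j\Big].
\end{align*}
Since $|Y_i|\le Y_{\max}$, the number of pairs $(S,T)$ with $T\subseteq S\subseteq\M_i$, $|S|\le\beta$, is at most a constant $K(\beta,d_\M)$, $\prod_{j\in T}\rho_j\le p^{-\beta}$ and $\prod_{j\in T}\rho'_j\le(1-p)^{-\beta}$, setting $\delta_n:=\max_{0\le m\le\beta}\big(|W_n^{-m}-1|+|(W'_n)^{-m}-1|\big)$ gives
\begin{align*}
\big|\hat\tau^\beta_{WIS}-\hat\tau^\beta\big| \;\le\; Y_{\max}\,K(\beta,d_\M)\,\big(p(1-p)\big)^{-\beta}\,\delta_n .
\end{align*}

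\textbf{Step 3: conclude, and the hard part.} By Step~1, $\delta_n\to0$ almost surely, so $\hat\tau^\beta_{WIS}-\hat\tau^\beta\to0$ a.s.; combined with $\hat\tau^\beta\overset{a.s.}{\longrightarrow}\tau(\vec{1},\vec{0})$ from Theorem~\ref{thm:beta}, this gives $\hat\tau^\beta_{WIS}\overset{a.s.}{\longrightarrow}\tau(\vec{1},\vec{0})$, i.e.\ asymptotic consistency. (The finite-sample bias asserted in Theorem~\ref{thm:betawis} is separate: $\E[\bar\rho_j]\ne1$ in general, so the exact identity behind the unbiasedness of $\hat\tau^\beta$ fails for $\hat\tau^\beta_{WIS}$ at fixed $n$.) The only substantive step is Step~1: the normalizers are averages of \emph{weakly dependent} bounded variables, so one must quantify the dependency structure induced by $\M$ — precisely where the standing bounded-degree/controlled-density assumption on $\M$ (the same one giving $O(1/n)$ variance and consistency of $\hat\tau^\beta$) enters — and then upgrade convergence in probability to almost-sure convergence via concentration and Borel--Cantelli; everything afterwards is an elementary multilinear expansion and a triangle inequality.
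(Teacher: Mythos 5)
Your proof follows the same route as the paper's: show the two self-normalization denominators converge almost surely to $1$, then conclude that $\hat\tau^\beta_{WIS}$ shares the almost-sure limit of $\hat\tau^\beta$. The paper obtains Step 1 by applying the strong law of large numbers to the (weighted sums of) independent $\rho_k,\rho'_k$ and then invokes the continuous mapping theorem to get $\bar\rho_i\to\rho_i$; your McDiarmid--Borel--Cantelli concentration argument and the explicit multilinear expansion in Step 2 supply the uniform control over the $n$ growing summands that the paper leaves implicit, so your version is, if anything, the more complete one.
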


\begin{proof}

Let $\rho_i = z_j/p$ and $\rho'_i = \frac{1-z_j}{1-p}$. Moreover let 
$\bar{\rho}_j$ and $\bar{\rho}'_j$ be the corresponding self normalized values of $\rho$ and $\rho'$, as in \ref{eq:self_normal_rho}. The self-normalized estimate for $\hat{\tau}^\beta$ is given by:
\vspace*{-0.25cm}
\begin{align*}
\hat{\tau}^{\beta}_{\text{WIS}} = \frac{1}{n} \sum_{i=1}^n Y_i \sum_{\substack{S \subseteq \M_i \\ |S| \leq \beta}} \Big( \prod_{j \in S} \big(\bar{\rho}_j -1 \big) - \prod_{j \in S} \big(-1 + \bar{\rho}'_j \big) \Big)
\end{align*}
\vspace*{-0.6cm}

Note that as all $z_i$ are independent, so are all $\rho,\rho'$. As such we can apply Kolmogorove's strong law of large numbers to any linear combination of these variables and get the following
\vspace*{-0.25cm}
\begin{equation*}
\begin{aligned}
\lim_{n \rightarrow \infty} \frac{1}{n}\sum\limits_j \sum\limits_{k\in M_j} \frac{1}{\M'_k} \rho_k &\overset{a.s.}{\longrightarrow}
\sum\limits_j \sum\limits_{k\in M_j} \frac{1}{\M'_k} \E[\rho_k] = 1
\end{aligned}
\qquad\quad
\begin{aligned}
\lim_{n \rightarrow \infty} \frac{1}{n}\sum\limits_j \sum\limits_{k\in M_j} \frac{1}{\M'_k} \rho'_k &\overset{a.s.}{\longrightarrow}
\sum\limits_j \sum\limits_{k\in M_j} \frac{1}{\M'_k} \E[\rho'_k] = 1
\end{aligned}
\end{equation*}
\vspace*{-0.3cm}

Since $1/t$ is a continuous function at $t=1$, we can apply Slutsky continuous mapping theorem on \ref{eq:self_normal_rho} to get that
\vspace*{-0.3cm}
$$ \text{as} \lim_{n \rightarrow \infty}: \:\:\: \bar{\rho}_i \rightarrow \rho_i \text{ and  } \bar{\rho'}_i \rightarrow \rho'_i.$$
This proves the consistency of $\hat{\tau}^\beta_{WIS}$ (Theorems \ref{thm:wis} and \ref{thm:betawis} of main text). Moreover as $\bar{\rho} \neq \rho$, the expected value of $\hat{\tau}^\beta_{WIS}$ will not generally equal $\hat{\tau}^\beta$.
\end{proof}

Next we demonstrate the unbiasedness and consistency of the DR estimator. Once again we focus on the general $\beta$ case, as the linear case is a corollary.
\begin{thm}
Under assumptions $\textbf{A2-5}$, and assuming $\mathcal{M}_i \supseteq \mathcal{N}_i$ , $\E[\hat{\tau}^\beta_{DR}] = \tau(\vec{1},\vec{0})$ and $\hat{\tau}_{DR}^\beta \overset{a.s.}{\longrightarrow}\tau(\vec{1},\vec{0})$. 
\end{thm}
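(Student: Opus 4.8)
The plan is to show that $\hat{\tau}^\beta_{DR}$ equals $\hat{\tau}^\beta$ plus a mean-zero control-variate correction, so that unbiasedness reduces to two elementary expectation identities, and almost-sure consistency follows from the already-proven consistency of $\hat{\tau}^\beta$ together with a variance bound on the correction. The first observation is that $\rho_j-1 = \tfrac{z_j-p}{p}$ and $-1+\rho'_j = \tfrac{p-z_j}{1-p}$, so the inner weight in \eqref{eq:beta_dr_def} is exactly the weight $W_i := \sum_{S\subseteq\M_i,\,|S|\le\beta}\big(\prod_{j\in S}\tfrac{z_j-p}{p} - \prod_{j\in S}\tfrac{p-z_j}{1-p}\big)$ used by $\hat{\tau}^\beta$. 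Setting $\tilde Y_i = Y_i - z_i f_1(X_i) - (1-z_i)f_0(X_i)$ and using $\tfrac1n\sum_i Y_i W_i = \hat{\tau}^\beta$, one obtains the decomposition
\[
\hat{\tau}^\beta_{DR} \;=\; \hat{\tau}^\beta \;+\; R_n, \qquad R_n := \frac1n\sum_i\Big[\big(f_1(X_i)-f_0(X_i)\big) - \big(z_i f_1(X_i) + (1-z_i)f_0(X_i)\big)W_i\Big].
\]

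For unbiasedness I would work in the Neyman setup, where the $X_i$ (hence $f_0(X_i),f_1(X_i)$) are fixed and only $\bZ$ and the noise $\epsilon$ are random. Two facts suffice: $\E[W_i]=0$, since the $S=\emptyset$ term of $W_i$ is identically $1-1=0$ while each nonempty $S$ contributes $\prod_{j\in S}\E[\tfrac{z_j-p}{p}]=0$ (and likewise for the primed product); and $\E[z_i W_i]=1$, which is Lemma~\ref{lem:help_beta2} applied with $S'=\{i\}$, legitimate because $i\in\N_i\subseteq\M_i$ by the self-loop convention. Together these give $\E[(1-z_i)W_i]=-1$. Reusing the computation behind the unbiasedness of $\hat{\tau}^\beta$ (the constant $c_i$ drops out because $\E[W_i]=0$, and the independent noise drops out because $\epsilon\indep\bZ$), we have $\E[Y_iW_i]=\sum_{S'\subseteq\N_i}c_{i,S'}$, hence $\E[\tilde Y_i W_i] = \sum_{S'}c_{i,S'} - f_1(X_i) + f_0(X_i)$. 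Summing over $i$ and adding back $\tfrac1n\sum_i(f_1(X_i)-f_0(X_i))$, the $f$-terms cancel exactly, leaving $\E[\hat{\tau}^\beta_{DR}] = \tfrac1n\sum_i\sum_{S'}c_{i,S'} = \tau(\vec{1},\vec{0})$; crucially this holds for any choice of $f_0,f_1$, which is the doubly-robust property.

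For consistency, the same two identities show that each summand of $R_n$ has expectation $(f_1(X_i)-f_0(X_i)) - \big(f_1(X_i)\cdot 1 + f_0(X_i)\cdot(-1)\big)=0$, so $\E[R_n]=0$. By \thref{thm:beta}, $\hat{\tau}^\beta\overset{a.s.}{\longrightarrow}\tau(\vec{1},\vec{0})$, so it remains to show $R_n\overset{a.s.}{\longrightarrow}0$. Each summand of $R_n$ is bounded (bounded $f_0,f_1$, $p\in(0,1)$, $|\M_i|\le d_\M$), and the summands are dependent only through the $\M$-neighbourhood graph within two hops --- precisely the dependency structure exploited in the variance analysis of $\hat{\tau}^\beta$. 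I would therefore rerun that second-moment argument with the coefficients $c_{i,T}$ replaced by the bounded quantities coming from $f_0,f_1$ (one inverse-propensity factor being already absorbed into the $z_i$ or $1-z_i$ prefactor, which only helps), concluding $\Var[R_n]\in O(1/n)$ under the same bounded-degree assumption; this forces $R_n\to 0$ almost surely, and therefore $\hat{\tau}^\beta_{DR}\overset{a.s.}{\longrightarrow}\tau(\vec{1},\vec{0})$.

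The hard part is the last step: making the informal bound $\Var[R_n]\in O(1/n)$ precise. This is not conceptually new --- it is the same local-dependence computation as in \thref{thm:var_beta}, tracking the set-intersection conditions (such as $S\Delta S'\subseteq T\cup T'$) and the degree factors $d_\M,d_\N$ --- but the bookkeeping with the $c$'s replaced by $f$-coefficients requires care. Alternatively, one can invoke the bounded-dependency strong law / generalized CLT referenced in the Statistical Inference subsection to obtain the same conclusion more cheaply.
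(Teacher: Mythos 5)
Your proposal is correct and follows essentially the same route as the paper: decompose $\hat{\tau}^\beta_{DR}$ into $\hat{\tau}^\beta$ plus a control-variate correction, use $\E[z_iW_i]=1$ and $\E[(1-z_i)W_i]=-1$ (via Lemma~\ref{lem:help_beta2}) to cancel the $f$-terms for unbiasedness, and inherit consistency by rerunning the variance analysis with the coefficients recentred by $f_0,f_1$. Your bookkeeping is in fact slightly cleaner than the paper's (whose final display transposes which of $\E[f_0]$, $\E[f_1]$ multiplies $1$ versus $-1$), but the argument is the same.
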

\begin{proof}
Since $X \indep Z \; \E[f(X)g(Z)] =  \E[f(X)]\E[g(Z)]$. Applying this on
Equation \ref{eq:beta_dr_def}, we get
\begin{align*}
    \E[\hat{\tau}^\beta_{DR}] &= \E \left [ \frac{1}{n} \sum_{i=1}^n \tilde{Y}_i  \sum_{\substack{S \subseteq \M_i \\ |S| \leq \beta}} \Big( \prod_{j \in S} \big(\rho_j -1 \big) - \prod_{j \in S} \big(-1 + \rho'_j \big) \Big) \right] + \frac{1}{n} \sum_{i} \E [f_1(X_i) - f_0(X_i)] \\
    &= \E \left [ \frac{1}{n} \sum_{i=1}^n (Y_i - z_i(f_1(X_i)) - (1-z_i)f_0(X_i))  \sum_{\substack{S \subseteq \M_i \\ |S| \leq \beta}} \Big( \prod_{j \in S} \big(\rho_j -1 \big) - \prod_{j \in S} \big(-1 + \rho'_j \big) \Big) \right] \\
    & \qquad \qquad \qquad + \frac{1}{n} \sum_{i} \E [f_1(X_i) - f_0(X_i)]\\
    &\overset{a}{=} \E \left [ \frac{1}{n} \sum_{i=1}^n Y_i \sum_{\substack{S \subseteq \M_i \\ |S| \leq \beta}} \Big( \Psi(S) - \Psi'(S) \Big) \right] - 
    \E[(f_1(X))] \E[ z_i \sum_{\substack{S \subseteq \M_i \\ |S| \leq \beta}} \Big( \Psi(S) - \Psi'(S) \Big)] \\
    &\qquad\qquad\qquad - \E[(f_0(X))] \E[ (1- z_i)\sum_{\substack{S \subseteq \M_i \\ |S| \leq \beta}} \Big( \Psi(S) - \Psi'(S) \Big)]  + \frac{1}{n} \sum_{i} \E [f_1(X_i) - f_0(X_i)]\\
    &= \tau(\vec{1},\vec{0]}  - \E[f_0(X)] (1) - \E[f_1(X)](-1) + \frac{1}{n} \sum_{i} \E[f_1(X_i) - f_0(X_i)] = \tau(\vec{1},\vec{0})
\end{align*}
Effectively, $\hat{\tau}_{DR}$, replaces $c_{i,\{i\}}$ by $c_{i,\{i\}} - f_{z_i}(X)$. 
As long as these $f_{0/1}(X)$ are bounded, the same analysis of variance holds with the modified $c$. Thus $\hat{\tau}^\beta_{DR}$ inherits consistency from $\hat{\tau}^\beta$
\end{proof}

\begin{proposition}
If the outcome models $f(X)$ are accurate, the DR estimator $\hat{\tau}_{DR}^\beta$ have lower variance than $\hat{\tau}^\beta$.
\end{proposition}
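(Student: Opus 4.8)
The plan is to set up a control-variate (residual) decomposition and reduce to a variance inequality. Write $W_i:=\sum_{S\subseteq\M_i,\,|S|\le\beta}\bigl(\prod_{j\in S}\tfrac{z_j-p}{p}-\prod_{j\in S}\tfrac{p-z_j}{1-p}\bigr)$, so that $\hat\tau^\beta=\tfrac1n\sum_iY_iW_i$, and since $\tilde Y_i=Y_i-\bigl(z_if_1(X_i)+(1-z_i)f_0(X_i)\bigr)$ we have $\hat\tau^\beta_{DR}=\hat\tau^\beta-C$ with $C:=\tfrac1n\sum_i\bigl(z_if_1(X_i)+(1-z_i)f_0(X_i)\bigr)W_i-\tfrac1n\sum_i\bigl(f_1(X_i)-f_0(X_i)\bigr)$. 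Using \textbf{A2} ($\bX\indep\bZ$) together with the moment identities $\E[W_i]=0$ and $\E[z_iW_i]=1$ (Corollary~\ref{lem:corr_id}, Lemma~\ref{lem:help_beta2}) one checks that $\E[C\mid\bX]=0$ and that $\hat\tau^\beta$ and $\hat\tau^\beta_{DR}$ have the same conditional mean given $\bX$; by the law of total variance it then suffices to compare conditional variances, i.e. to show
$$\Var[\hat\tau^\beta\mid\bX]-\Var[\hat\tau^\beta_{DR}\mid\bX]=2\,\Cov[\hat\tau^\beta,C\mid\bX]-\Var[C\mid\bX]\ \ge\ 0$$
pointwise in $\bX$.

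To evaluate the two sides, fix $\bX$. When the model is accurate ($f_z(X_i)=\E[Y_i(\vec{z})\mid X_i]$) one has $z_if_1(X_i)+(1-z_i)f_0(X_i)=c_i+z_i\gamma_i$ with $\gamma_i:=\sum_{S'\ne\emptyset}c_{i,S'}$, so the residual $\tilde Y_i=\sum_{S'\ne\emptyset}c_{i,S'}\bigl(\prod_{k\in S'}z_k-z_i\bigr)+\epsilon_i$ has lost the intercept $c_i$ (and, for $S'\ni i$, collapsed the pure own-effect monomial to $0$), while $C=\tfrac1n\sum_i(c_i+z_i\gamma_i)W_i-\tfrac1n\sum_i\gamma_i$. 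Both $\Cov[\hat\tau^\beta,C\mid\bX]$ and $\Var[C\mid\bX]$ are then expanded term by term by substituting the polynomial-in-$\bz$ forms of $Y_i$, $\tilde Y_i$, $W_i$ and invoking Lemma~\ref{lem:exp_prod}, Lemma~\ref{lem:help_beta} and Lemma~\ref{lem:help_beta_cov}; the $\epsilon_i$ pieces are mean-zero and independent and cancel from the difference. The leading contributions on both sides are the baseline terms $\tfrac1{n^2}\sum_{i,i'}c_ic_{i'}\,\E[W_iW_{i'}]$ — precisely the variance the control variate is designed to remove — so these match, and the inequality reduces to controlling the remaining terms, which couple the $\gamma_i$'s with the off-diagonal $\bz$-monomials.

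The main obstacle is this last step, and it is not automatic. In the classical AIPW situation one would finish instantly via the ``Pythagorean'' identity $\Cov[\hat\tau^\beta_{DR},C\mid\bX]=0$ (giving $\Var[\hat\tau^\beta]=\Var[\hat\tau^\beta_{DR}]+\Var[C]$), but that identity \emph{fails} here: because $Y_i$ depends on the neighbours' assignments and not only on $z_i$, the residual $\tilde Y_i$ is not conditionally mean-zero given $(\bX,z_i)$, and subtracting $z_i\gamma_i$ from $Y_i$ can \emph{inject} variance rather than remove it when the fitted slope $\gamma_i$ runs against the direct effect $c_{i,\{i\}}$ — one can construct small Erd\H{o}s--R\'enyi-type instances with strong, conflicting interference and $p\ne\tfrac12$ on which $\hat\tau^\beta_{DR}$ is in fact strictly noisier than $\hat\tau^\beta$. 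So a fully rigorous version of the Proposition needs a side condition that makes the residual terms sign-definite, such as non-negativity of the influence coefficients $c_{i,S}$, or indirect effects being dominated by the direct ones, or $p=\tfrac12$; under such a condition the term-by-term comparison above closes. A coarser but condition-free fallback, in the spirit of the remark closing the proof of \thref{thm:betadr}, is to compare variance \emph{bounds} instead: $\hat\tau^\beta_{DR}$ obeys the bound of \thref{thm:var_beta} with $Y_{\max}$ replaced by $\max_i|\tilde Y_i|$, and whenever the outcome model makes $\max_i|\tilde Y_i|\le Y_{\max}$ the guaranteed worst-case variance strictly decreases.
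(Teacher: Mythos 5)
Your proposal is correct in substance and, in its final paragraph, lands exactly where the paper does: the paper's entire argument for this proposition is the ``coarser fallback'' you describe, namely that the variance bound of \thref{thm:var_beta} (Equation \ref{apx:eqn:var_bound}) scales with $Y_{\max}^2$ and, after centering at the outcome models, with $\max_i|Y_i - f(X_i)|^2$, which is smaller when the models are accurate. The paper does not attempt an exact variance comparison at all --- it explicitly hedges (``provides support for the idea,'' ``can be lower''), so the proposition is really a heuristic statement about bounds rather than a theorem about the true variances. Your first two paragraphs go well beyond the paper: the control-variate decomposition $\hat\tau^\beta_{DR}=\hat\tau^\beta-C$, the observation that the AIPW orthogonality $\Cov[\hat\tau^\beta_{DR},C\mid\bX]=0$ fails here because $\tilde Y_i$ is not conditionally mean-zero given $(\bX,z_i)$ under interference, and the resulting need for a side condition (sign-definite coefficients, dominance of direct effects, or $p=\tfrac12$) are all correct and constitute a sharper diagnosis than anything in the paper. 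In short: what the paper proves is your fallback; what you additionally show is \emph{why} nothing stronger can be proved without extra hypotheses, which is a genuine (and worthwhile) refinement of the claim rather than a gap in your argument.
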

This can be seen from the bound for variance in   Eqaution \ref{apx:eqn:var_bound}. Note that the bound is in terms of the variance of the independent noise $\sigma$ and the magnitude of the  outcome $\max |Y|$. The DR estimates center the outcomes at the outcome models, $f(X)$ , the bound for the DR estimator is determined by $\max |Y - f(X)|$, which if the outcome models are good should be smaller than $\max |Y|$.

This term provides support for the idea that with a good model so that $Y_i - f(X_i)$ has smaller coefficients $c_i$ (and hence lower $Y_{\max}$) or a lower error in $\epsilon$, the variance of the DR estimator can be lower than the standard estimate.


\section{Relationship with HT Estimator}
\label{apx:HTconnection}
\begin{align}
    \hat{\tau}_{\text{HT}} &= \frac{1}{n} \sum_i Y_i   \left(  \prod_{j \in \mathcal{N}_i} \frac{z_j}{p}  - \prod_{j \in \mathcal{N}_i} \frac{(1-z_j)}{(1-p)}  \right)
    \\
    \E [\hat{\tau}_{\text{HT}}] &= \frac{1}{n}\sum_i \E \left[  Y_i     \left(  \prod_{j \in \mathcal{N}_i} \frac{z_j}{p}  - \prod_{j \in \mathcal{N}_i} \frac{(1-z_j)}{(1-p)}  \right) \right] 
    \\
    &= \frac{1}{n}\sum_i \E \left[   \left(\sum_{k\in \mathcal{N}_i} c_{ik} \mathbb I[z_k=1]\right)     \left(  \prod_{j \in \mathcal{N}_i} \frac{z_j}{p}  - \prod_{j \in \mathcal{N}_i} \frac{(1-z_j)}{(1-p)}  \right)\right] 
    \\
    &= \frac{1}{n}\sum_i \sum_{k\in \mathcal{N}_i}  \E \left[   c_{ik} \mathbb I[z_k=1]  \left(  \prod_{j \in \mathcal{N}_i} \frac{z_j}{p}  - \prod_{j \in \mathcal{N}_i} \frac{(1-z_j)}{(1-p)}  \right) \right]  
    \\
    &= \frac{1}{n}\sum_i \sum_{k\in \mathcal{N}_i}  \E \left[   c_{ik}  \mathbb I[z_k=1]   \frac{z_k}{p} \right]   \underbrace{\E \left[    \prod_{j \in \mathcal{N}_i, j\neq k} \frac{z_j}{p} \right]}_{=1}  
    -  \frac{1}{n}\sum_i \sum_{k\in \mathcal{N}_i}  \E \left[   c_{ik}  \mathbb I[z_k=1]   \frac{1- z_k}{1-p} \right]   \underbrace{\E \left[    \prod_{j \in \mathcal{N}_i, j\neq k} \frac{(1-z_j)}{(1-p)}  \right]}_{=1}  
    \\
    &= \frac{1}{n}\sum_i \sum_{k\in \mathcal{N}_i}  c_{ik}  \E \left[   \mathbb I[z_k=1]  \left( \frac{z_k}{p} -  \frac{(1-z_k)}{(1-p)}\right) \right]
    \\
    &=  \E \left[  \frac{1}{n}\sum_i \sum_{k\in \mathcal{N}_i}  c_{ik}   \mathbb I[z_k=1]  \left( \frac{z_k}{p} -  \frac{(1-z_k)}{(1-p)}\right) \right]
    \\
    &\overset{(a)}{=}  \E \left[  \frac{1}{n}\sum_i \left(\sum_{k\in \mathcal{N}_i}  c_{ik}   \mathbb I[z_k=1] \right)
    \left( \sum_{k\in \mathcal{N}_i} \frac{z_k}{p} -  \frac{(1-z_k)}{(1-p)}\right) \right]
    \\
    &= \E \left[ \underbrace{ \frac{1}{n}\sum_i Y_i
    \left( \sum_{k\in \mathcal{N}_i} \frac{z_k}{p} -  \frac{(1-z_k)}{(1-p)}\right)}_{\hat{\tau}_{\text{Lin}}} \right] 
\end{align}
where (a) follows from observing that the sum of cross product terms is 0,
\begin{align}
    \E \left[ \sum_{k\in \mathcal{N}_i}  c_{ik}   \mathbb I[z_k=1] 
\sum_{j\in \mathcal{N}_i, j\neq k} \frac{z_j}{p} -  \frac{(1-z_j)}{(1-p)}) \right] = \E \left[ \sum_{k\in \mathcal{N}_i}  c_{ik}   \mathbb I[z_k=1] 
\underbrace{\E \left[\sum_{j\in \mathcal{N}_i, j\neq k} \frac{z_j}{p} -  \frac{(1-z_j)}{(1-p)} \right]}_{=0} \right] = 0.
\end{align}

\pagebreak

\section{Additional Experiment Results}

\begin{figure*}[th!]
    \centering
    \begin{subfigure}[b]{0.25\textwidth}
    \centering
    \includegraphics[width=\textwidth]{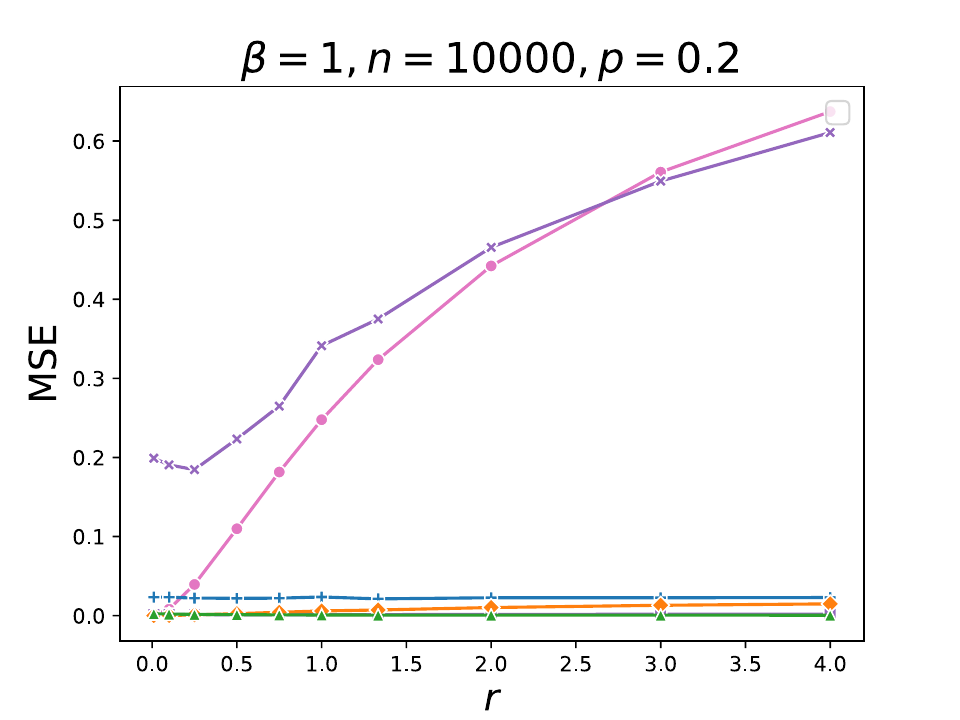}
    \includegraphics[width=\textwidth]{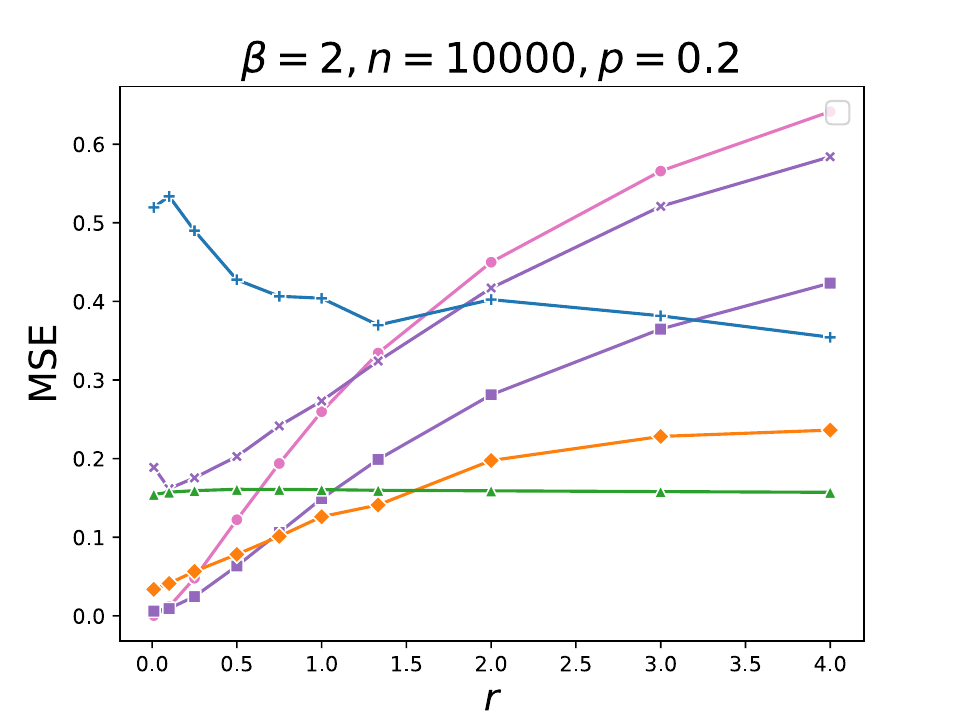}
    \caption{Direct/indirect effects: r }  \label{fig:mseratioER}
    \end{subfigure}
    ~
    \begin{subfigure}[b]{0.25\textwidth}
    \centering
    \includegraphics[width=\textwidth]{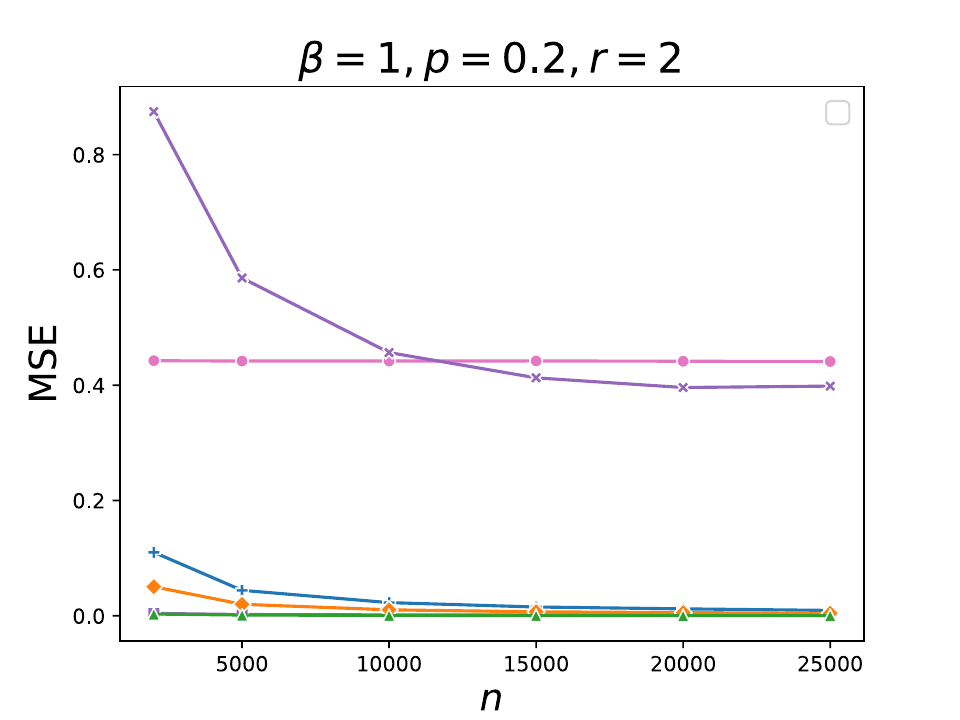}
    \includegraphics[width=\textwidth]{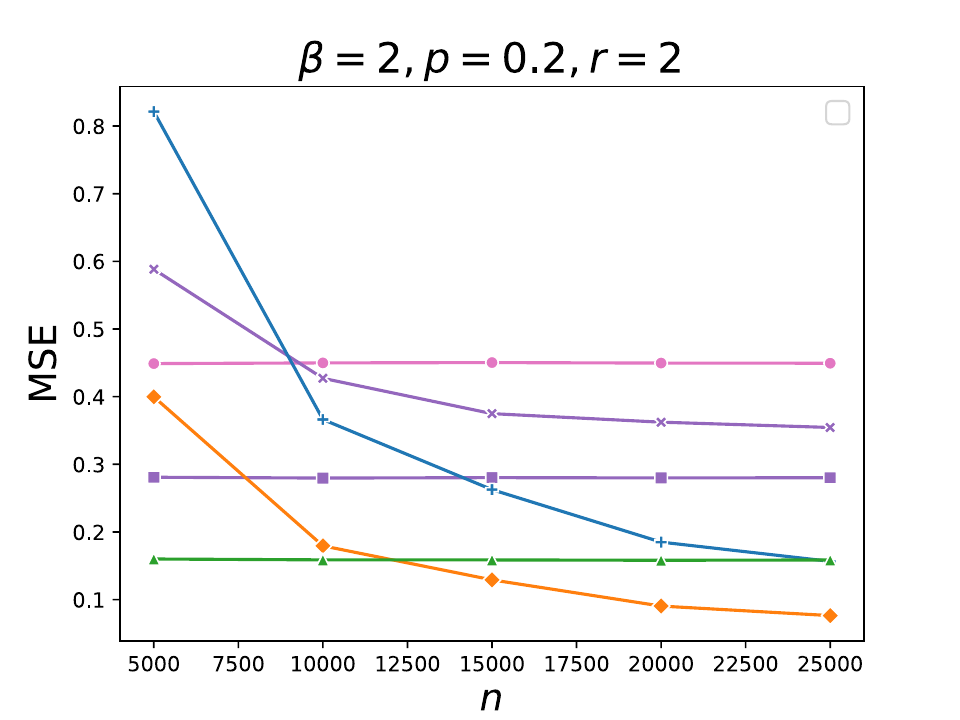}
    \caption{Population size: n}  \label{fig:msesizeER}
    \end{subfigure}
    ~
    \begin{subfigure}[b]{0.25\textwidth}
    \centering
    \includegraphics[width=\textwidth]{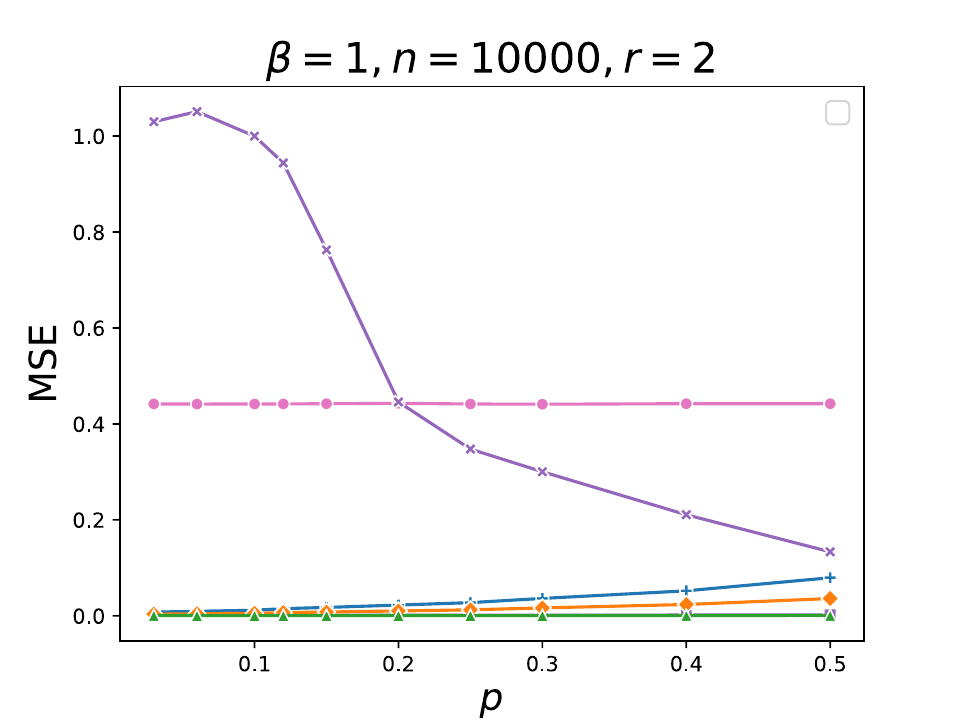}
    \includegraphics[width=\textwidth]{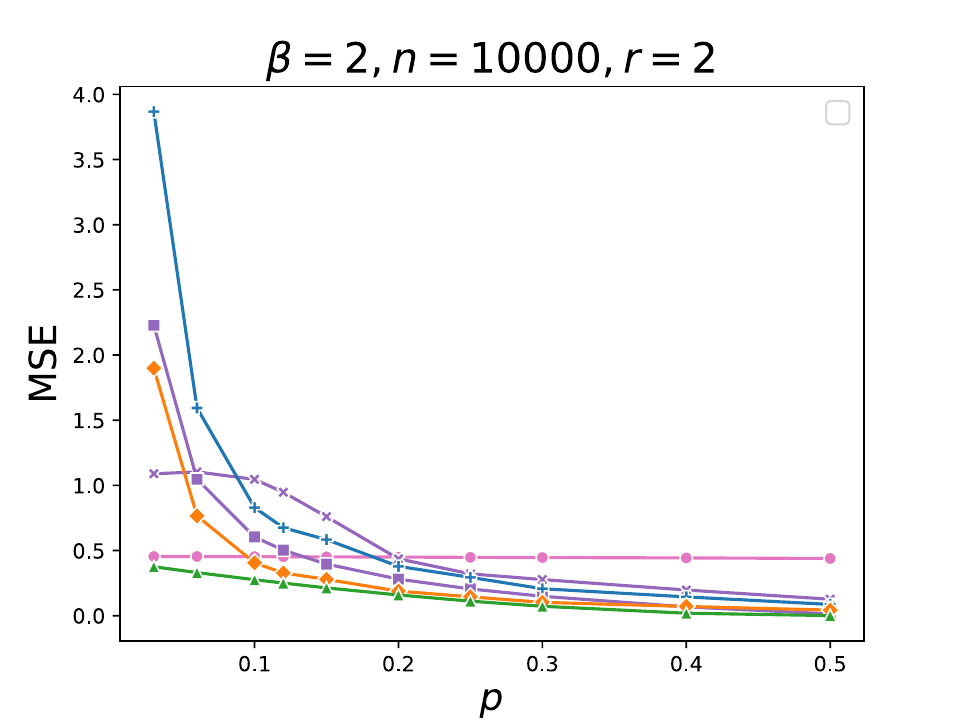}
     \caption{Treatment budget: p}  \label{fig:msepER}
    \end{subfigure}
    \begin{subfigure}[b]{0.9\textwidth}
    \includegraphics[width=\textwidth]{figs4/legend1.pdf}
    \end{subfigure}
    \vspace{-20pt}
    \caption{Performance of GATE estimators under Bernoulli design on
Erdos-Renyi networks. Y-axes represent the MSE.  The rows correspond to different generative model for the potential outcomes.
Columns correspond to different parameters being varied: (a) Strength of interference,  where the x-axis corresponds to the average ratio of indirect to direct effects $r = \frac{1}{n}\sum_{i,j}\frac{|c_{ij}|}{{c_{ii}}|}$. (b) Population size,  where the x-axis corresponds to the number of nodes $n$ (c) Treatment budget, where the x-axis corresponds to the probability of treatment 1 ($p$).
\label{fig:erdos_mse_combined}}
\end{figure*}
\begin{figure}[th!]
    \centering
    \begin{subfigure}[b]{0.23\textwidth}
    \centering
    \includegraphics[width=\textwidth]{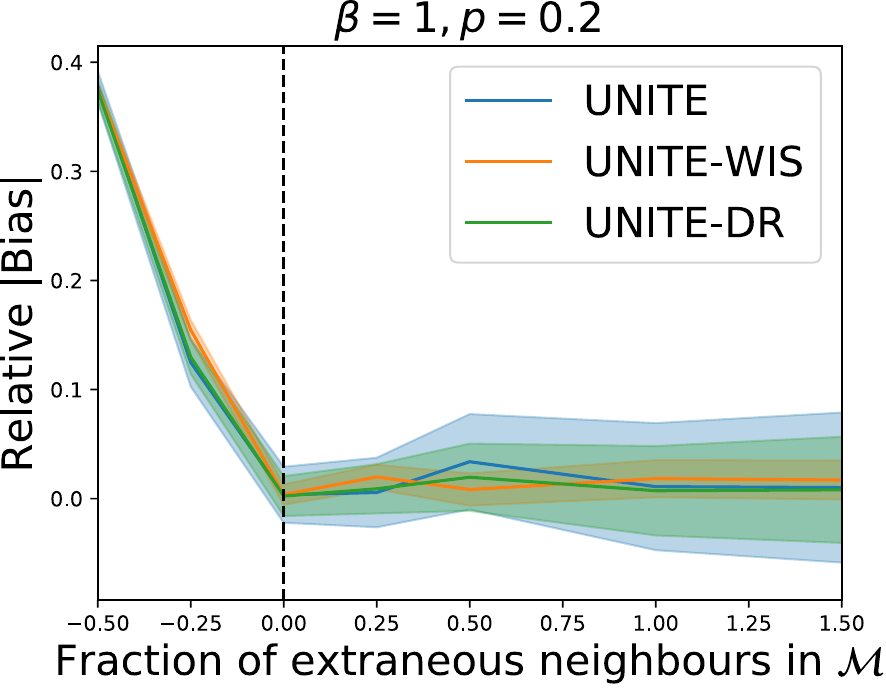}
    \caption{ Bias graphs\label{fig:abla_ernet2}}
    \end{subfigure}
    ~
    \begin{subfigure}[b]{0.23\textwidth}
     \centering
     \includegraphics[width=\textwidth]{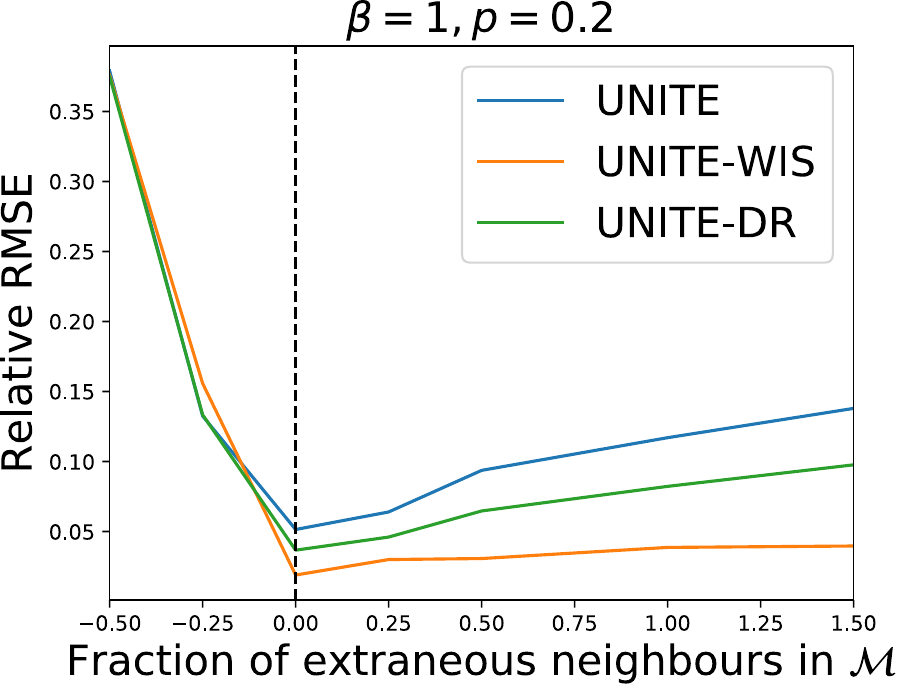}
     \caption{RMSE \label{fig:abla_airbnb}}
     \end{subfigure}
    \caption{Visualization of the impact of neighbourhood sizes on GATE estimation. Negative fraction of neighbours indicate the case when $\mathcal{M}(i) \subset \mathcal{N}_i$ i.e. we missed pertinent neighbours. The bias tends to be high when gives small neighbourhoods, as they miss pertinent edges. As the neighbourhood sizes increase, the bias reduces, but the uncertainty widens.
    \label{fig:ablationa} 
    }
\end{figure}

\section{Checklist}



 \begin{enumerate}

 \item For all models and algorithms presented, check if you include:
 \begin{enumerate}
   \item A clear description of the mathematical setting, assumptions, algorithm, and/or model. [Yes]
   \item An analysis of the properties and complexity (time, space, sample size) of any algorithm. [Yes]
   \item (Optional) Anonymized source code, with specification of all dependencies, including external libraries. [Yes]
 \end{enumerate}

 \item For any theoretical claim, check if you include:
 \begin{enumerate}
   \item Statements of the full set of assumptions of all theoretical results. [Yes]
   \item Complete proofs of all theoretical results. [Yes]
   \item Clear explanations of any assumptions. [Yes]     
 \end{enumerate}

 \item For all figures and tables that present empirical results, check if you include:
 \begin{enumerate}
   \item The code, data, and instructions needed to reproduce the main experimental results (either in the supplemental material or as a URL). [Yes]
   \item All the training details (e.g., data splits, hyperparameters, how they were chosen). [Yes]
         \item A clear definition of the specific measure or statistics and error bars (e.g., with respect to the random seed after running experiments multiple times). [Yes]
         \item A description of the computing infrastructure used. (e.g., type of GPUs, internal cluster, or cloud provider). [No]
 \end{enumerate}

 \item If you are using existing assets (e.g., code, data, models) or curating/releasing new assets, check if you include:
 \begin{enumerate}
   \item Citations of the creator If your work uses existing assets. [Yes]
   \item The license information of the assets, if applicable. [Not Applicable]
   \item New assets either in the supplemental material or as a URL, if applicable. [Not Applicable]
   \item Information about consent from data providers/curators. [Not Applicable]
   \item Discussion of sensible content if applicable, e.g., personally identifiable information or offensive content. [Not Applicable]
 \end{enumerate}

 \item If you used crowdsourcing or conducted research with human subjects, check if you include:
 \begin{enumerate}
   \item The full text of instructions given to participants and screenshots. [Not Applicable]
   \item Descriptions of potential participant risks, with links to Institutional Review Board (IRB) approvals if applicable. [Not Applicable]
   \item The estimated hourly wage paid to participants and the total amount spent on participant compensation. [Not Applicable]
 \end{enumerate}

 \end{enumerate}

\end{document}